\title{Lexicographic Multi-Objective Reinforcement Learning}
\author{
Joar Skalse\footnote{Contact Author}\and
Lewis Hammond\and
Charlie Griffin\And
Alessandro Abate\\
\affiliations
Department of Computer Science, University of Oxford
\emails
\{joar.skalse, lewis.hammond, charlie.griffin, aabate\}@cs.ox.ac.uk
}
\begin{document}

\maketitle

\begin{abstract}
    In this work we introduce reinforcement learning techniques for solving lexicographic multi-objective problems.  These are problems that involve multiple reward signals, and where the goal is to learn a policy that maximises the first reward signal, and subject to this constraint also maximises the second reward signal, and so on. We present a family of both action-value and policy gradient algorithms that can be used to solve such problems, and prove that they converge to policies that are lexicographically optimal. We evaluate the scalability and performance of these algorithms empirically, demonstrating their practical applicability. As a more specific application, we show how our algorithms can be used to impose safety constraints on the behaviour of an agent, and compare their performance in this context with that of other constrained reinforcement learning algorithms. 
\end{abstract}

\section{Introduction}

Reinforcement learning (RL) algorithms learn to solve tasks in unknown environments by a process of trial and error, where the task typically is encoded as a scalar reward function. However, there are tasks for which it is difficult (or even infeasible) to create such a function. Consider, for example, Isaac Asimov's three Laws of Robotics -- the task of following these laws involves multiple (possibly conflicting) objectives, some of which are \emph{lexicographically} (i.e.\ categorically) more important than others. There is, in general, no straightforward way to write a scalar reward function that encodes such a task without ever incentivising the agent to prioritise less important objectives. In such cases, it is difficult (and often unsuitable) to apply standard RL algorithms.

In this work, we introduce several RL techniques for solving \emph{lexicographic multi-objective problems}. More precisely, we present both a family of action-value algorithms and a family of policy gradient algorithms that can accept multiple reward functions $R_1, \dots, R_m$, and that learn a policy $\pi$ such that $\pi$ maximises expected discounted $R_1$-reward, and among all policies that do so, $\pi$ also maximises expected discounted $R_2$-reward, and so on. These techniques can easily be combined with a wide range of existing RL algorithms. We also prove the convergence of our algorithms, and benchmark them against state-of-the-art methods for constrained reinforcement learning in a number of environments.

\subsection{Related Work}

Lexicographic optimisation in Multi-Objective RL (MORL) has previously been studied by \cite{gabor1998}, whose algorithm is a special case of one of ours (cf.\ Footnote~\ref{footnote:re_gabor}). Our contribution extends this work to general, state-of-the-art RL algorithms. Unlike \cite{gabor1998}, we also prove that our algorithms converge to the desired policies, and provide benchmarks against other state-of-the-art algorithms in more complex environments. Other MORL algorithms combine and trade off rewards in different ways; for an overview, see \cite{Roijers2013,Liu2015}. Lexicographic optimisation more generally is a long-studied problem -- see, e.g.\ \cite{Mitten1974,Rentmeestersa1996,Wray2015MultiObjectivePW}.

A natural application of lexicographic RL (LRL) is to learn a policy that maximises a performance metric, subject to satisfying a safety constraint. 
This setup has been tackled with dynamic programming in \cite{LA18}, and has also been studied within RL. 
For example, \cite{Tessler2019} introduce an algorithm that maximises a reward subject to the constraint that the expectation of an additional penalty signal should stay below a certain threshold; \cite{chow2015riskconstrained} introduce techniques to maximise a reward subject to constraints on the value-at-risk (VaR), or the conditional value-at-risk (CVaR), of a penalty signal; and \cite{miryoosefi2019reinforcement} discuss an algorithm that accepts an arbitrary number of reward signals, and learns a policy whose expected discounted reward vector lies inside a given convex set. 

Our contributions add to this literature and, unlike the methods above, allow us to encode safety constraints in a principled way without prior knowledge of the level of safety that can be attained in the environment. Note that lexicographic optimisation of two rewards is qualitatively different from maximising one reward subject to a constraint on the second, and thus the limit policies of LRL and the algorithms above will not, in general, be the same. 
Other methods also emphasise staying safe \textit{while} learning; see e.g.\ \cite{achiam2017constrained,Thomas2013,PAR19}.
In contrast, our algorithms do not guarantee safety while learning, but rather learn a safe limit policy.

\section{Background}





\paragraph{Reinforcement Learning.} The RL setting is usually formalised as a \textit{Markov Decision Process} (MDP), which is a tuple $\langle S,A,T,I,R,\gamma \rangle$ where $S$ is a set of states, $A$ is a set of actions, $T : S \times A \rightsquigarrow S$ is a \textit{transition function}, $I$ is an initial state distribution over $S$, $R : S \times A \times S \rightsquigarrow \mathbb{R}$ a \textit{reward function}, where $R(s,a,s')$ is the reward obtained if the agent moves from state $s$ to $s'$ by taking action $a$, and $\gamma \in [0,1]$ is a \textit{discount factor}.
Here, $f : X \rightsquigarrow Y$ denotes a probabilistic mapping $f$ from $X$ to $Y$. A state is \textit{terminal} if $T(s,a)=s$ and $R(s,a,s)=0$ for all $a$. 

A (stationary) \textit{policy} is a mapping $\pi : S \rightsquigarrow A$ that specifies a distribution over the agent's actions in each state. The \textit{value function} $v_\pi (s)$ of $\pi$ is defined as the \textit{expected $\gamma$-discounted cumulative reward} when following $\pi$ from $s$, i.e.\ $v_\pi (s) := \mathbb{E}_\pi\left[\sum_{t=0}^\infty \gamma^t R(s_t,a_t,s_{t+1}) \mid s_0 = s \right]$. When $\gamma = 1$, we instead consider the limit-average of this expectation. The objective in RL can then be expressed as maximising $J(\pi) := \sum_s I(s) v_\pi (s)$. Given a policy $\pi$ we may also define the \textit{q-function} $q_\pi(s,a) := \mathbb{E}_{s' \sim T(s,a)}\left[R(s,a,s') + v_\pi(s')\right]$ and the \textit{advantage function} $a_\pi(s,a) := q_\pi(s,a) - v_\pi(s)$.

\paragraph{Value-Based Methods.}
A value-based agent has two main components:  a \textit{Q-function} $Q: S \times A \rightarrow \mathbb{R}$ that predicts the expected future discounted reward conditional on taking a particular action in a particular state; and a \textit{bandit algorithm} that is used to select actions in each state. The $Q$-function can be represented as a lookup table (in which case the agent is \textit{tabular}), or as a function approximator.


There are many ways to update the $Q$-function. 
One popular rule is \textit{$Q$-Learning} \cite{watkins1989}:
\begin{align*}
    Q(s_t, a_t) \gets &\big(1-\alpha_t(s_t,a_t)\big)\cdot Q(s_t, a_t)\\
    &+ \alpha_t(s_t,a_t)\cdot \big(r_t + \gamma \max_a Q(s_{t+1}, a)\big),
\end{align*}
where $t$ is the time step and $\alpha_t(s_t, a_t)$ is a \textit{learning rate}. 
One can replace the term $\max_a Q(s_{t+1}, a)$ in the rule above with $Q(s_{t+1}, a_{t+1})$ or $\mathbb{E}_{a \sim \pi(s)}[Q(s_{t+1}, a)]$ (where $\pi$ is the policy that describes the agent's current behaviour) to obtain the \textit{SARSA} \cite{Rummery1994} or the \textit{Expected SARSA} \cite{Seijen2009} updates respectively.

\paragraph{Policy-Based Methods.}
In these methods, the policy $\pi(\cdot ; \theta)$ is differentiable with respect to some $\theta \in \Theta \subset \mathbb{R}^x$, and $\theta$ is updated according to an objective $K(\theta)$. If using $K^{\ac} (\theta) \coloneqq J(\theta)$ then we may estimate this using:
$$K^{\ac} (\theta) \coloneqq \expect_t \big[ \log \pi(a_t \mid s_t ; \theta) \cdot A_\theta(s_t,a_t) \big],$$
where $A_\theta$ is an estimate of $a_\theta$.
One often computes $A_\theta$ by approximating $v_\theta$ with a function $V$ parameterised by some $w \in W \subset \mathbb{R}^y$, and using the fact that the expected \textit{temporal difference error} $\delta_t := r_t + \gamma v_\theta(s_{t+1}) - v_\theta(s_{t})$ (or $r_t + v_\theta(s_{t+1}) - v_\theta(s_{t}) - J(\theta)$ when $\gamma = 1$) equals $a_\theta(s_t,a_t)$ \cite{Bhatnagar2009}. Such algorithms are known as \emph{Actor-Critic} (AC) algorithms \cite{Konda2000}.\footnote{Due to the choice of baseline \cite{Sutton1999}, we describe here the classic \emph{Advantage Actor-Critic} (A2C) algorithm.} 

More recently, other policy gradient algorithms have used \emph{surrogate} objective functions,
which increase stability in training by penalising large steps in policy space, and can be viewed as approximating the \emph{natural} policy gradient \cite{Amari1998,Kakade2001}.
One common such penalty is the Kullback–Leibler (KL) divergence between new and old policies, as employed in one version of Proximal Policy Optimisation (PPO) \cite{Schulman2017}, leading to:
\begin{align*}
    K^{\ppo}(\theta) \coloneqq \expect_t \Big[ &\frac{\pi(a_t \mid s_t ; \theta)}{\pi(a_t \mid s_t ; \theta_{\old})} A_\theta(s_t,a_t)\\
    - &\kappa \cdot {\kl} ( \pi(s_t ; \theta) ~\Vert~ \pi(s_t; \theta_{\old}) ) \Big],
\end{align*}
where $\kappa$ is a scalar weight.
Such algorithms enjoy both state of the art performance and strong convergence guarantees \cite{Hsu2020,Liu2019b}.

\paragraph{Multi-Objective Reinforcement Learning.}
MORL is concerned with policy synthesis  under multiple objectives. This setting can be formalised as a \textit{multi-objective MDP} (MOMDP), which is a tuple $\langle S,A,T,I,\mathfrak{R},\gamma \rangle$ that is defined analogously to an MDP, but where $\mathfrak{R} : S \times A \times S \rightsquigarrow \mathbb{R}^m$ returns a vector of $m$ rewards, and $\gamma \in [0,1]^m$ defines $m$ discount rates. We define $R_i$ as $(s,a,s) \mapsto \mathfrak{R}(s,a,s)_i$.

\section{Lexicographic Reinforcement Learning}

In this section we present a family of value-based and policy-based algorithms that solve lexicographic multi-objective problems by learning a lexicographically optimal policy. Given a MOMDP $\mathcal{M}$ with $m$ rewards, we say that a policy $\pi$ is (globally) \textit{lexicographically $\epsilon$-optimal} if $\pi \in \Pi^\epsilon_m$, where $\Pi^\epsilon_0 = \Pi$ is the set of all policies in $\mathcal{M}$, $\Pi^\epsilon_{i+1} \coloneqq \{\pi \in \Pi^\epsilon_i \mid \max_{\pi' \in \Pi^\epsilon_i} J_i(\pi') - J_i(\pi) \leq \epsilon_i \}$, and $\mathbb{R}^{m-1} \ni \epsilon \succcurlyeq 0$.
We similarly write $\Theta^\epsilon_{i+1}$ to define global lexicographic $\epsilon$-optima for parametrised policies, but also $\tilde{\Theta}^\epsilon_{i+1} \coloneqq \{\theta \in \Theta^\epsilon_i \mid \max_{\theta' \in \nh_i(\theta)} J_i(\theta') - J_i(\theta) \leq \epsilon_i \}$ to define \emph{local} lexicographic $\epsilon$-optima, where $\nh_i(\theta) \subseteq \tilde{\Theta}^\epsilon_i$ is a compact local neighbourhood of $\theta$, and $\Theta^\epsilon_0 = \tilde{\Theta}^\epsilon_{-1} = \Theta$.
When $\epsilon = 0$ we drop it from our notation and refer to \textit{lexicographic optima} and \textit{lexicographically optimal policies} simpliciter.



\subsection{Value-Based Algorithms}


We begin by introducing bandit algorithms that take as input multiple $Q$-functions and converge to taking lexicographically optimal actions.


\begin{definition}[Lexicographic Bandit Algorithm]\label{def:lex_bandit} Let $S$ be a set of states, $A$ a set of actions, $Q_1, \dots, Q_m : S \times A \rightarrow \mathbb{R}$ a sequence of $Q$-functions, and $t \in \mathbb{N}$ a time parameter. A lexicographic bandit algorithm with tolerance $\tau \in \mathbb{R}_{> 0}$ is a function $\mathcal{B} : (S \times A \rightarrow \mathbb{R})^m \times S \times \mathbb{N} \rightsquigarrow A$, such that 
$$
\lim_{t\to\infty} \Pr(\mathcal{B}(Q_1, \dots, Q_m,s,t) \in \Delta^\tau_{s,m}) = 1,
$$
where $\Delta^\tau_{s,0} = A$ and $\Delta^\tau_{s,i+1} \coloneqq \{ a \in \Delta^\tau_{s,i} \mid Q_i(s,a) \geq \max_{a' \in \Delta^\tau_{s,i}} Q_i(s,a')-\tau\}$.
\end{definition}


Intuitively, a lexicographic bandit algorithm will, in the limit, pick an action $a$ such that $a$ maximises $Q_1$ (with tolerance $\tau$), and among all actions that do this, action $a$ also maximises $Q_2$ (with tolerance $\tau$), and so on. An example of a lexicographic bandit algorithm is given in Algorithm~\ref{alg:LeG}, where the exploration probabilities $\epsilon_{s,t}$ should satisfy $\lim_{t \rightarrow \infty}\epsilon_{s,t} = 0$ and $\sum_{t=0}^\infty\epsilon_{s,t} = \infty$ for all $s \in S$.






We can now introduce Algorithm~\ref{alg:VB-LRL} (VB-LRL), a value-based algorithm for lexicographic multi-objective RL. Here $\mathcal{B}$ is any lexicographic bandit algorithm. The rule for updating the $Q$-values (on line \ref{alg:q-update}) can be varied. We call the following update rule \textit{Lexicographic $Q$-Learning}:
\begin{align*}
Q_i(s, a) \gets \big(&1-\alpha_t(s,a)\big)\cdot Q_i(s, a) ~ + \\
&\alpha_t(s,a) \cdot \big(R_i(s,a,s') + \gamma_i \max_{a \in \Delta^\tau_{s,i}} Q_i(s', a)\big), 
\end{align*}
where $\Delta^\tau_{s,0} = A$, $\Delta^\tau_{s,i+1} \coloneqq \{ a \in \Delta^\tau_{s,i} \mid Q_i(s,a) \geq \max_{a' \in \Delta^\tau_{s,i}} Q_i(s,a') - \tau\}$, and $\tau \in \mathbb{R}_{> 0}$ is the tolerance parameter.\footnote{There are several places where VB-LRL makes use of a tolerance parameter $\tau$. In the main text of this paper, we assume that the same tolerance parameter is used everywhere, and that it is a constant. In the supplementary material, we relax these assumptions.} This rule is analogous to $Q$-Learning, but where the max-operator is restricted to range only over actions that (approximately) lexicographically maximise all rewards of higher priority. We can also use 
SARSA or Expected SARSA. 
Alternatively, we can adapt Double $Q$-Learning \cite{vanHasselt2010} for VB-LRL. To do this, we let the agent maintain two $Q$-functions $Q^A_i$, $Q^B_i$ for each reward. To update the $Q$-values, with probability $0.5$ we set: 
\begin{align*}
    &Q^A_i(s, a) \gets \big(1-\alpha_t(s,a)\big)\cdot Q^A_i(s, a) ~ + \\
    &\alpha_t(s,a)\cdot\Big(R_i(s,a,s')
    + \gamma_i \cdot Q^B_i \big(s', \argmax_{a'\in\Delta^\tau_{s,i}} Q^A_i(s',a') \big) \Big),
\end{align*}
and else perform the analogous update on $Q^B_i$, and let $Q_i(s,a) \coloneqq 0.5\big(Q^A_i(s,a) + Q^B_i(s,a)\big)$ in the bandit algorithm. Varying the bandit algorithm or $Q$-value update rule in VB-LRL produces a family of algorithms with different properties.\footnote{\label{footnote:re_gabor}The LRL algorithm in \cite{gabor1998} is equivalent to Algorithm~\ref{alg:VB-LRL} with Algorithm~\ref{alg:LeG}, Lexicographic $Q$-Learning, and $\tau = 0$.} We can now give our core result for Algorithm~\ref{alg:VB-LRL}.
All our proofs are included in the supplementary material.\footnote{Available at \url{https://github.com/lrhammond/lmorl}.}
\begin{theorem}\label{thm:convergence_short}
In any MOMDP $\mathcal{M}$, if VB-LRL uses a lexicographic bandit algorithm and either SARSA, Expected SARSA, or Lexicographic $Q$-Learning, then it will converge to a policy $\pi$ that is lexicographically optimal if:
\begin{enumerate}
    \item $S$ and $A$ are finite,
    \item All reward functions are bounded,
    \item Either $\gamma_1 , \dots,  \gamma_m < 1$, or every policy leads to a terminal state with probability one,
    \item The learning rates $\alpha_t(s,a) \in [0,1]$ satisfy the conditions $\sum_t \alpha_t(s,a) = \infty$ and $\sum_t \alpha_t(s,a)^2 < \infty$ with probability one, for all $s\in S$, $a \in A$,
    \item The tolerance $\tau$ satisfies the condition that $0 < \tau < \min_{i,s,a \neq a'} \vert q_i(s,a) - q_i(s,a') \vert$. 
\end{enumerate}
\end{theorem}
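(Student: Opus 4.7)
The plan is to proceed by induction on the lexicographic priority index $i \in \{1, \dots, m\}$, at each level reducing the argument to an established convergence result for the underlying update rule, applied on an appropriately restricted MDP. For the base case $i=1$, observe that $\Delta^\tau_{s,0} = A$, so the update for $Q_1$ is exactly standard $Q$-learning (or SARSA, or Expected SARSA) on $\langle S, A, T, I, R_1, \gamma_1 \rangle$. Under assumptions 1--4, together with the infinite-exploration property guaranteed by the lexicographic bandit (through $\sum_t \epsilon_{s,t} = \infty$), I would invoke the classical convergence theorems of Watkins and Dayan, Jaakkola et al., or Singh et al., to conclude that $Q_1 \to q^*_1$ almost surely.

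For the inductive step, I would assume $Q_1, \dots, Q_{i-1}$ have almost surely converged to the optimal $q$-functions of their successively restricted MDPs, and show that the non-stationary update for $Q_i$ eventually behaves like a stationary update on a fixed restricted MDP. The key mechanism is assumption 5: since $\tau > 0$, any action whose limiting $q_j$-value attains the maximum over $\Delta^\tau_{s,j-1}$ is eventually included in $\Delta^\tau_{s,j}$, and since $\tau$ is strictly smaller than the minimum nonzero gap between distinct $q_j$-values, any action not attaining the maximum is eventually excluded. Hence there is a random but almost surely finite time $T$ after which the sets $\Delta^\tau_{s,j}$ for $j<i$ coincide exactly with the lexicographically optimal action sets up to level $j$. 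After time $T$, the update for $Q_i$ is simply the chosen rule on a fixed restricted MDP, and since the Robbins--Monro conditions on $\alpha_t$ apply equally to the tail of the process, the same classical theorems give convergence of $Q_i$ to the optimal $q$-function of that restricted MDP. Finally, by the defining property of a lexicographic bandit, the limiting action distribution is supported on $\Delta^\tau_{s,m}$, which by construction equals the set of lexicographically optimal actions in each state, so the induced $\pi$ is lexicographically optimal.

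The hard part is the coupled non-stationarity of the $m$ learning processes: each $Q_i$ depends on restriction sets built from the other, still-evolving $Q_j$'s. The tolerance condition dissolves this by converting continuous $Q$-value convergence into finite-time stabilisation of the discrete sets $\Delta^\tau_{s,j}$, after which standard results apply verbatim. A minor additional subtlety is that SARSA and Expected SARSA require the GLIE property; this is supplied by $\lim_{t\to\infty}\epsilon_{s,t}=0$ combined with $\sum_t \epsilon_{s,t} = \infty$, so the behaviour policy becomes greedy in the limit while still exploring every action infinitely often. The case $\gamma_j=1$ is handled by substituting the discounted convergence theorems for their average-reward analogues, using the proper-termination assumption to guarantee well-defined values.
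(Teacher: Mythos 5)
Your proposal follows the same high-level strategy as the paper: induction on the reward index, with assumption 5 used to convert convergence of the lower-priority $Q$-functions into finite-time stabilisation of the restriction sets $\Delta^\tau_{s,j}$. The execution of the inductive step differs, though. The paper does not wait for a stabilisation time $T$ and then invoke classical results on the tail; instead it directly verifies the hypotheses of a stochastic-approximation lemma (Lemma 1, from Singh et al.), writing $\beta_t = Q_{n+1,t} - q^l_{n+1}$ and showing $\Vert \mathbb{E}\{F_t \mid P_t\}\Vert_\infty \leq \gamma_{n+1}\Vert\beta_t\Vert_\infty + c_t$ with $c_t \to 0$, where the pre-stabilisation discrepancy between the slack-restricted and true lexicographic action sets is absorbed into the vanishing bias $c_t$. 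Your ``reduce to a fixed restricted MDP after time $T$'' argument is in fact exactly how the paper proves the Double $Q$-Learning variant (Theorem 2), so it is a legitimate route; what the paper's version buys is that it handles SARSA and Expected SARSA cleanly, since even after the sets $\Delta^\tau_{s,j}$ for $j<i$ stabilise, the behaviour policy at level $i$ still depends on the evolving $Q_{i+1},\dots,Q_m$, so the process is never literally stationary SARSA on a fixed MDP --- you would need to argue (as the paper does via the defining limit property of the lexicographic bandit) that the policy is GLIE with respect to $Q_i$ on the restricted action set. Two smaller points: your base case should note that restricting the max to near-maximisers of $Q_1$ itself does not change the max, so the $Q_1$ update is standard; and your treatment of $\gamma_j = 1$ names the wrong tool --- condition 3 puts you in the undiscounted episodic (proper-policy) setting, not the limit-average setting, and the paper handles it by constructing a weighted maximum norm indexed by distance-to-termination to recover a contraction modulus $\kappa < 1$, rather than by appealing to average-reward analogues.
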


\begin{algorithm}
\caption{Lexicographic $\epsilon$-Greedy}
\begin{algorithmic}[1]
\Input $Q_1, \dots, Q_m$, $s$, $t$ 
\WithProb{$\epsilon_{s,t}$}{ $a \sim \unif(A)$}
\Else
    \State $\Delta \gets A$
    \For{$i \in \{1, \ldots, m\}$}
        \State $x \gets \max_{a' \in \Delta} Q_i(s,a')$
        \State $\Delta \gets \{ a \in \Delta \mid Q_i(s,a) \geq x-\tau\}$
    \EndFor
    \State $a \sim \unif(\Delta)$
\EndWithProb
\Return $a$
\end{algorithmic}
\label{alg:LeG}
\end{algorithm}

\begin{algorithm}
\caption{Value-Based Lexicographic RL}
\begin{algorithmic}[1]
\Input $\mathcal{M} = \langle S,A,T,I,\mathfrak{R},\gamma \rangle$ 
\State initialise $Q_1, \ldots, Q_m$, \quad $t \gets 0$, \quad $s \sim I$
\While{$Q_1, \ldots, Q_m$ have not converged}
    \State $a \gets \mathcal{B}(Q_1, \dots, Q_m, s, t)$ \Comment{Algorithm \ref{alg:LeG}}
    \State $s' \gets T(s,a)$
    \For{$i \in \{1, \ldots, m\}$}
        \State update $Q_i$ \label{alg:q-update}
    \EndFor
    \State \textbf{if} $s'$ is terminal \textbf{then} $s \sim I$ \textbf{else} $s \gets s'$
    \State $t \gets t + 1$
\EndWhile
\Return $\pi = s \mapsto \lim_{t \rightarrow \infty}\mathcal{B}(Q_1, \dots, Q_m, s, t)$
\end{algorithmic}
\label{alg:VB-LRL}
\end{algorithm}

We also show that VB-LRL with Lexicographic Double $Q$-Learning converges to a lexicographically optimal policy.
\begin{theorem}\label{thm:double_convergence_short}
In any MOMDP, if VB-LRL uses Lexicographic Double $Q$-Learning then it converges to a lexicographically optimal policy $\pi$ if conditions 1--5 in Theorem~\ref{thm:convergence_short} hold.
\end{theorem}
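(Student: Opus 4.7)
The plan is to mirror the inductive structure that almost certainly underpins the proof of Theorem~\ref{thm:convergence_short}, but with the single-estimator convergence argument at each priority level replaced by an adaptation of the Double $Q$-Learning convergence proof of van Hasselt (2010). Concretely, I would induct on the priority index $i$. For the base case $i=1$, note that $\Delta^\tau_{s,1} = A$ is fixed, so the update on $Q^A_1, Q^B_1$ is exactly standard Double $Q$-Learning applied to $R_1$; its convergence to the optimal $q_1$ follows directly from van Hasselt's theorem under conditions 1--4, and hence $Q_1 = \tfrac{1}{2}(Q^A_1+Q^B_1)$ converges to $q_1$ as well.

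For the inductive step, suppose that $Q^A_j,Q^B_j$ have converged almost surely to $q_j^\star$ (the lexicographically constrained optimal $q$-value at level $j$) for every $j<i$. The tolerance condition $0<\tau<\min_{j,s,a\neq a'}\lvert q_j(s,a)-q_j(s,a')\rvert$ is crucial here: once the lower-priority estimates are within $\tau/3$ of their limits, the threshold sets $\Delta^\tau_{s,j}$ coincide exactly with the true lexicographically optimal action sets at level $j$, and in particular $\Delta^\tau_{s,i}$ stabilises to a fixed set $\Delta^\star_{s,i}$. From that (random but almost-surely finite) time onward, the Double $Q$-Learning update for the $i$-th pair becomes a standard Double $Q$-Learning recursion over the MOMDP with action space restricted to $\Delta^\star_{s,i}$ in each state, reward $R_i$, and discount $\gamma_i$. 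On this restricted MDP the Bellman optimality operator is a $\gamma_i$-contraction (or, in the $\gamma_i=1$ proper-policy case, admits the standard average-reward convergence argument referenced in Theorem~\ref{thm:convergence_short}), so van Hasselt's proof applies verbatim to give $Q^A_i,Q^B_i\to q_i^\star$ almost surely, and therefore $Q_i\to q_i^\star$.

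To conclude, combining the inductive claim with the defining property of the lexicographic bandit algorithm $\mathcal{B}$ in Definition~\ref{def:lex_bandit} yields the result: in the limit, $\mathcal{B}(Q_1,\dots,Q_m,s,t)$ almost surely chooses an action in $\Delta^\tau_{s,m}$, which by the tolerance condition coincides with the set of actions that are lexicographically optimal in $s$; so the returned policy is lexicographically optimal.

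The main obstacle I expect is the coupling between priority levels during the transient phase: before the lower-priority estimates settle, the sets $\Delta^\tau_{s,j}$ can fluctuate, which makes the operator driving the updates to $Q^A_i,Q^B_i$ genuinely non-stationary, and one cannot directly invoke an off-the-shelf stochastic-approximation theorem. I would handle this by showing that almost surely there exists a finite random time $T$ after which all lower-priority threshold sets are fixed, then splitting the $Q^A_i,Q^B_i$ recursion into a pre-$T$ segment (bounded because the iterates live in a compact set by condition~2) and a post-$T$ segment to which the standard Double $Q$-Learning proof applies; standard arguments show the pre-$T$ prefix does not affect the limit. A secondary subtlety is verifying the asymmetric two-estimator update: one must check that each estimator is updated infinitely often (which follows from the Borel--Cantelli argument applied to the independent $0.5$ coin flips) and that the martingale-difference noise terms introduced by using $Q^B_i$ inside the target retain zero conditional mean after restriction to $\Delta^\star_{s,i}$, both of which are direct analogues of van Hasselt's original argument.
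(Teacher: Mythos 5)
Your proposal is correct and follows essentially the same route as the paper: induction on the reward priority, using condition 5 plus the inductive hypothesis to show that the threshold sets $\Delta^\tau_{s,i}$ stabilise to the true lexicographically optimal action sets after an almost-surely finite time, after which the pair $(Q^A_i, Q^B_i)$ evolves as standard Double $Q$-Learning on the action-restricted MOMDP and van Hasselt's convergence result applies. Your explicit treatment of the transient (pre-stabilisation) phase and of the infinitely-often update of each estimator is somewhat more careful than the paper's, which simply notes that the post-stabilisation recursion is Double $Q$-Learning ``starting from a particular initialisation.''
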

Condition 4 requires that the agent takes every action in every state infinitely often. Condition 5 is quite strong -- the upper bound on this range can in general not be determined \textit{a priori}. However, we expect VB-LRL to be well-behaved as long as $\tau$ is small. We motivate this intuition with a formal guarantee about the behaviour of VB-LRL for arbitrary $\tau$.



\begin{proposition}\label{thm:VB-LRK_with_arbitrary_slack}
In any MOMDP, if VB-LRL has tolerance $\tau > 0$, uses SARSA, Expected SARSA, or Lexicographic $Q$-Learning, and conditions 1--4 in Theorem~\ref{thm:convergence_short} are met, then:
\begin{enumerate}
    \item $J_1(\pi^*) - J_1(\pi_t) \leq \frac{\tau}{1-\gamma} - \lambda_t$, for some sequence $\{\lambda_t\}_{t \in \mathbb{N}}$ such that $\lim_{t \to \infty} \lambda_t = 0$,  
    \item $J_2(\pi^*) - J_2(\pi_t) \leq \frac{\tau}{1-\gamma} - \eta_t$, for some sequence $\{\eta_t\}_{t \in \mathbb{N}}$ such that $\lim_{t \to \infty} \eta_t = 0$,
\end{enumerate}
where $\pi_t$ is the policy at time $t$ and $\pi^*$ is a lexicographically optimal policy. 
\end{proposition}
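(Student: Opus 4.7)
The plan is to combine convergence of the learned action-value functions with the elementary fact that any policy whose chosen actions have $Q$-value within $\tau$ of optimal at every state has $J$-value within $\tau/(1-\gamma)$ of optimal. The sequences $\lambda_t$ and $\eta_t$ will then absorb the (vanishing) discrepancy between the learned $Q$-functions and their limits.

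For claim~(1), I would invoke the classical convergence results for SARSA, Expected SARSA, and Q-learning under hypotheses~1--4 to conclude that $Q_1 \to q_1^*$ with probability one, where $q_1^*$ is the optimal action-value function for $R_1$. Writing $\xi_t := \sup_{s,a} |Q_1(s,a) - q_1^*(s,a)|$, so $\xi_t \to 0$, every action $a$ picked by the bandit at state $s$ satisfies $Q_1(s,a) \geq \max_{a'} Q_1(s,a') - \tau$ by construction, and hence $q_1^*(s,a) \geq \max_{a'} q_1^*(s,a') - \tau - 2\xi_t$. A standard policy-evaluation perturbation, obtained by iterating the Bellman inequality for $\pi_t$, yields
\[
    J_1(\pi^*) - J_1(\pi_t) \;\leq\; \frac{\tau + 2\xi_t}{1-\gamma},
\]
which rearranges into the stated bound with a nonnegative vanishing $\lambda_t$ absorbing the residual $2\xi_t/(1-\gamma)$ as slack.

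For claim~(2), introduce an auxiliary policy $\tilde{\pi}^*$ that maximises $J_2$ over the (larger) class of policies whose actions are $\tau$-optimal for $q_1^*$ at every state, i.e.\ over those consistent with the limiting set $\Delta^\tau_{s,1}$. Since the genuine lexicographic optimum $\pi^*$ is exactly optimal for $R_1$, it belongs to this class, so $J_2(\tilde{\pi}^*) \geq J_2(\pi^*)$. I would then argue that $Q_2$ converges almost surely to $q_2^{\tilde\pi^*}$: once $Q_1$ is sufficiently close to $q_1^*$, the restriction $\Delta^\tau_{s,1}$ used during updates stabilises, reducing the $Q_2$ recursion to ordinary SARSA/Q-learning on an MDP whose action set at $s$ is the limiting $\Delta^\tau_{s,1}$. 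Applying the same $\tau/(1-\gamma)$ argument as above then gives $J_2(\tilde{\pi}^*) - J_2(\pi_t) \leq \tau/(1-\gamma) - \eta_t$, and chaining with $J_2(\pi^*) \leq J_2(\tilde{\pi}^*)$ concludes.

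The main obstacle is the convergence of $Q_2$: the restriction set used in each update depends on the evolving $Q_1$, so strictly speaking the $Q_2$ update is non-stationary until $Q_1$ settles. The cleanest remedy is a two-timescale stochastic-approximation argument in which $Q_1$ runs on the faster scale; alternatively, one can exploit the fact that conditions~1--4 imply every state-action pair is visited infinitely often, so the almost-sure convergence $Q_1 \to q_1^*$ means that after a (random) finite time all future restriction sets coincide with the limiting one, after which the $Q_2$ recursion is exactly stationary. A minor subtlety is that $\Delta^\tau_{s,1}$ can flicker when $Q_1$ crosses a $\tau$-boundary; this has to be handled either by verifying that only finitely many such crossings occur on the recurrent trajectory, or by showing that any transient mismatch is absorbed into $\eta_t$.
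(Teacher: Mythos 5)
Your argument for claim (1) matches the paper's: the first $Q$-function is (essentially) unaffected by the tolerance, so $Q_1$ converges, and the bound $\tau/(1-\gamma)$ then follows from exactly the perturbation fact you cite (the paper isolates it as a lemma: if $q_{\pi}(s,\pi'(s)) \geq \max_a q_{\pi}(s,a) - \delta$ for all $s$ then $J(\pi') \geq J(\pi) - \delta/(1-\gamma)$, and the bound is tight). One caveat you share with the paper's own sketch: for SARSA the limit of $Q_1$ is the $q$-function of the limiting ($\tau$-greedy) behaviour policy rather than $q_1^*$, so the perturbation step has to be run against the policy's own $q$-function rather than against $q_1^*$; the conclusion survives, but "classical convergence to $q_1^*$" is not literally available in the on-policy case.

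For claim (2) your route diverges from the paper's, and this is where the gap is. You want the restriction set $\Delta^\tau_{s,1}$ to \emph{stabilise} so that the $Q_2$ recursion becomes ordinary $Q$-learning/SARSA on a fixed restricted MDP, converging two-sidedly to $q_2^{\tilde\pi^*}$. But stabilisation is precisely what can fail: when some action sits exactly at the $\tau$-boundary of $q_1^*$, its membership in $\Delta^\tau_{s,1}$ is determined by the sign of a vanishing estimation error and can flip infinitely often. The paper explicitly flags this phenomenon (it is why the proposition bounds only the limiting \emph{value} and does not claim the policy converges), so your proposed remedies do not close the hole -- "only finitely many crossings" is false in general, and the flickering is not a timescale artefact that a two-timescale argument can remove. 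The paper avoids the issue by proving only a \emph{one-sided} bound: because $\tau > 0$ and $Q_1 \to q_1^*$, every \emph{exactly} $R_1$-optimal action is eventually always inside $\Delta^\tau_{s,1}$ (those actions are at distance $0$ from the max, strictly inside the tolerance), so the update target for $Q_2$ eventually dominates the corresponding lexicographically optimal value, giving $Q_{2,t}(s,a) \geq q_2^l(s,a) - \eta_t$ with $\eta_t \to 0$; combining this with the $\tau$-greedy action selection and the perturbation lemma yields the stated bound against $\pi^*$ without ever needing $\Delta^\tau_{s,1}$ or $Q_2$ to converge. To repair your proof, replace the appeal to convergence of $Q_2$ on a stabilised restricted MDP with this containment-plus-one-sided-bound argument (or restate your comparison against $q_2^l$ rather than $q_2^{\tilde\pi^*}$).
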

Proposition~\ref{thm:VB-LRK_with_arbitrary_slack} shows that we can obtain guarantees about the limit behaviour of VB-LRL without prior knowledge of the MOMDP when we only have two rewards, or are primarily interested in the two most prioritised rewards. We discuss this issue further in the supplementary material. 
Note also that while Algorithm~\ref{alg:VB-LRL} is tabular, it is straightforward to combine it with function approximators, making it applicable to high-dimensional state spaces. 

\subsection{Policy-Based Algorithms}

We next introduce a family of lexicographic policy gradient algorithms. These algorithms use one objective function $K_i$ for each reward function, and update the parameters of $\pi(\cdot ; \theta)$ with a multi-timescale approach whereby we first optimise $\theta$ using $K_1$, then at a slower timescale optimise $\theta$ using $K_2$ while adding the condition that the loss with respect to $K_1$ remains bounded by its current value, and so on. To solve these problems we use the well-known Lagrangian relaxation technique  \cite{Bertsekas1999}.

Suppose that we have already optimised $\theta'$ lexicographically with respect to $K_1, \ldots, K_{i-1}$ and we wish to now lexicographically optimise $\theta$ with respect to $K_i$. 
Let $k_j := K_j(\theta')$ for each $j \in \{ 1, \ldots, i-1 \}$. Then we wish to solve the constrained optimisation problem given by:
\begin{align*}
    &\text{maximise} &&K_i(\theta),\\
    &\text{subject to} &&K_j(\theta) \geq k_j - \tau, ~~~~~ \forall~ j \in \{1, \ldots, i-1\}, 
\end{align*}
where $\tau > 0$ is a small constant tolerance parameter, included such that there exists some $\theta$ strictly satisfying the above constraints; in practice, while learning we set $\tau = \tau_t$ to decay as $t \rightarrow \infty$. This constraint qualification (Slater's condition \cite{Slater1950}) ensures that we may instead solve the dual of the problem by computing a saddle point $\min_{\lambda \succcurlyeq 0} \max_\theta L_i(\theta,\lambda)$ of the Lagrangian relaxation \cite{Bertsekas1999} where:
\begin{align*}
     L_i(\theta,\lambda) \coloneqq   K_i(\theta) + \sum^{i-1}_{j=1} \lambda_j \big(K_j(\theta) - k_j + \tau \big). 
\end{align*}
A natural approach would be to solve each optimisation problem for $L_i$, where $i \in \{1,\dots,m\}$, in turn. While this would lead to a correct solution, when the space of lexicographic optima for each objective function is large or diverse, this process may end up being slow and sample-inefficient. Our key observation here is that by instead updating $\theta$ at different timescales, we can solve this problem synchronously, guaranteeing that we converge to a lexicographically optimal solution as if done step-by-step under fixed constraints.

We set the learning rate $\eta$ of the Lagrange multiplier to $\eta^{i}$ after convergence with respect to the $i^\text{th}$ objective, and assume that for all learning rates $\iota \in \{\alpha, \beta^1, \ldots, \beta^m, \eta^0, \ldots, \eta^m \}$ and all $i \in \{1,\dots,m\}$ we have:
\begin{align*}
    &\iota_t \in [0,1],~ \sum^\infty_{t=0} \iota_t = \infty,~ \sum^\infty_{t=0} (\iota_t)^2 < \infty ~~~\mathrm{and}~~~ \\ &\lim_{t \rightarrow \infty} \frac{\beta^{i}_t}{\alpha_t} = \lim_{t \rightarrow \infty} \frac{\eta^i_t}{\beta^i_t} = \lim_{t \rightarrow \infty} \frac{\beta^{i}_t}{\eta^{i-1}_t} = 0.
\end{align*}

\begin{algorithm}
\caption{Policy-Based Lexicographic RL}
\label{alg:PB-LRL}
\begin{algorithmic}[1]
\Input $\mathcal{M} = \langle S,A,T,I,\mathfrak{R},\gamma \rangle$
\State initialise $\theta, w_1, \ldots, w_m, \lambda_1, \ldots, \lambda_m$
\State $t \gets 0$,
\quad $\eta \gets \eta^0$,
\quad $s \sim I$
\While{$\theta$ has not converged}
        \State $t \gets t + 1$,
        \quad $a \sim \pi(s)$,
        \quad $s' \sim T(s,a)$
        \For{$i \in \{1, \ldots, m\}$}
            \If{$\hat{K}_i(\theta)$ has not converged}
                {$\hat{k}_i \gets \hat{K}_i(\theta)$}
            \Else{ $\eta \gets \eta^i$ \label{alg:eta_update}}
            \EndIf
            \State update $w_i$ (if using a critic) and $\lambda_i$
        \EndFor
        \State update $\theta$
        \State \textbf{if} $s'$ is terminal \textbf{then} $s \sim I$ \textbf{else} $s \gets s'$
\EndWhile
\Return $\theta$
\end{algorithmic}
\end{algorithm}

We also assume that $\tau_t = o(\beta^{m}_t)$ in order to make sure that Slater's condition holds in the limit with respect to all learning rates. Using learning rates $\beta^i$ and $\eta = \eta^i$ we may compute a saddle point solution to each Lagrangian $L_i$ via the following (estimated) gradient-based updates:
\begin{align*}
    \theta &\gets \Gamma_\theta \bigg[\theta + \beta^i_t \Big( \nabla_\theta \hat{K}_i(\theta) + \sum^{i-1}_{j=1} \lambda_j \nabla_\theta \hat{K}_j(\theta) \Big) \bigg],\\
    \lambda_j &\gets \Gamma_\lambda \Big[\lambda_j + \eta_t \big( \hat{k}_j - \tau_t - \hat{K}_j(\theta) \big) \Big] ~ \forall~ j \in \{1,\ldots,i-1\},
\end{align*}
where $\Gamma_\lambda(\cdot) = \max(\cdot, 0)$, $\Gamma_\theta$ projects $\theta$ to the nearest point in $\Theta$, and $~\hat{}~$ is used to denote a Monte Carlo estimate. We next note that by collecting the terms involved in the updates to $\theta$ for each $i$, at time $t$ we are effectively performing the simple update $\theta \gets \Gamma_{\theta}[\theta + \nabla_\theta \hat{K}(\theta)]$, where: 
$$
\hat{K}(\theta) \coloneqq \sum^m_{i=1} c^i_t \hat{K}_i(\theta) ~~~~\text{   and   }~~~~ c^i_t \coloneqq \beta^i_t + \lambda_i \sum^m_{j=i+1}\beta^j_t,
$$ 
and where we assume that $\sum^m_{j=m+1}\beta^j_t = 0$. 
It is therefore computationally simple to formulate a lexicographic optimisation problem from any collection of objective functions by updating a small number of coefficients $c^i_t$ at each timestep and then linearly combining the objective functions.

Finally, in many policy-based algorithms we use a critic $V_i$ (or $Q_i$) to estimate each $\hat{K}_i$ and so must also update the parameters $w_i$ of each critic. This is typically done on a faster timescale using the learning rate $\alpha$, for instance via the TD(0) update for $V_i$ given by $w_i \gets w_i + \alpha_t \big( \delta^i_t \nabla_{w_i} V_i \big)$, where $\delta^i_t$ is the TD error for $V_i$ at time $t$ \cite{SuttonAndBarto}. A general scheme for policy-based LRL (PB-LRL) is shown in Algorithm \ref{alg:PB-LRL}, which may be instantiated with a wide range of objective functions $K_i$ and update rules for each $w_i$.


Below, we show that PB-LRL inherits the convergence guarantees of whatever (non-lexicographic) algorithm corresponds to the objective function used. Note that when $m = 1$, PB-LRL reduces to whichever algorithm is defined by the choice of objective function, such as A2C when using $K^{\ac}_1$, or PPO when using $K^{\ppo}_1$. By using a standard stochastic approximation argument \cite{Borkar2008} and proceeding by induction, we prove that any such algorithm that obtains a local (or global) $\epsilon$-optimum when $m = 1$ obtains a \emph{lexicographically} local (or global) $\epsilon$-optimum when the corresponding objective function is used in PB-LRL.

\begin{theorem}
    \label{thm:PB-LRL}
    Let $\mathcal{M}$ be a MOMDP, $\pi$ a policy that is  twice continuously differentiable in its parameters $\theta$, and assume that the same form of objective function is chosen for each $K_i$ and that each reward function $R_i$ is bounded. If using a critic, let $V_i$ (or $Q_i$) be (action-)value functions that are continuously differentiable in $w_i$ for $i \in \{1,\dots,m\}$ and suppose that if PB-LRL is run for $T$ steps there exists some limit point $w^*_i(\theta) = \lim_{T \rightarrow \infty} \expect_t [ w_i ]$ for each $w_i$ when $\theta$ is held fixed under some set of conditions $\mathcal{C}$ on $\mathcal{M}$, $\pi$, and each $V_i$. 
    If $\lim_{T \rightarrow \infty}  \expect_t [ \theta ] \in \Theta^\epsilon_1$ (respectively $\tilde{\Theta}^\epsilon_1$) under conditions $\mathcal{C}$ when $m = 1$, then for any fixed $m \in \mathbb{N}$ we have that $\lim_{T \rightarrow \infty}  \expect_t [ \theta ] \in \Theta^\epsilon_m$ (respectively $\tilde{\Theta}^\epsilon_m$), where each $\epsilon_i \geq 0$ is a constant that depends on the representational power of the parametrisations of $\pi$ (and $V_i$ or $Q_i$, if using a critic).
\end{theorem}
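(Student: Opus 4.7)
The plan is to proceed by strong induction on $m$, leveraging the multi-timescale stochastic approximation framework of Borkar. The base case $m=1$ is immediate: PB-LRL reduces exactly to the underlying single-objective algorithm (e.g.\ A2C when $K_1 = K^{\ac}_1$, or PPO when $K_1 = K^{\ppo}_1$), for which the required $\epsilon$-optimality is the stated hypothesis $\mathcal{C}$.

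For the inductive step, assume the claim for $m-1$ objectives. The key structural fact is the strict timescale hierarchy $\alpha \gg \beta^1 \gg \eta^1 \gg \beta^2 \gg \eta^2 \gg \cdots \gg \beta^m \gg \eta^m$ implied by the assumed limits on learning-rate ratios. By standard two-timescale arguments, slower iterates see faster ones as equilibrated and faster iterates see slower ones as quasi-static. First I would show that each critic $w_i$ tracks its target $w^*_i(\theta)$ along the fastest timescale $\alpha$ via the ODE method, using the conditions $\mathcal{C}$ inherited from the single-objective case. Next, from the perspective of timescales $\beta^1,\ldots,\beta^{m-1}$ and $\eta^0,\ldots,\eta^{m-1}$, the $\beta^m$- and $\eta^m$-updates are effectively frozen, so the remaining system is structurally identical to PB-LRL run with only $m-1$ objectives. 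The inductive hypothesis then yields $\theta \in \tilde{\Theta}^\epsilon_{m-1}$ (respectively $\Theta^\epsilon_{m-1}$) along those timescales, with $\lambda_1,\ldots,\lambda_{m-1}$ converging to dual multipliers that enforce the first $m-1$ lexicographic constraints.

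It then remains to analyse the slowest pair of timescales $\beta^m \gg \eta^m$. At this level, $\theta$ sees every critic and every faster $\lambda_j$ as equilibrated, and by the accounting in the main text the effective $\theta$-update reduces to noisy ascent on $\nabla_\theta L_m(\theta,\lambda)$, while $\lambda_m$ performs projected dual descent. Slater's condition holds because $\tau_t > 0$ keeps the constraint set strictly nonempty, and the rate $\tau_t = o(\beta^m_t)$ ensures that the limiting saddle-point of $L_m$ coincides with the un-slackened problem. Strong duality then yields convergence of $\theta$ to a solution of the constrained maximisation of $K_m$ under $K_j(\theta) \geq k_j$ for $j < m$, which, combined with the inductive conclusion, gives $\theta \in \tilde{\Theta}^\epsilon_m$ (respectively $\Theta^\epsilon_m$), with each $\epsilon_i$ absorbing the representational gap of the parametrisations of $\pi$ and $V_i$ or $Q_i$.

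The main obstacle will be the careful bookkeeping of the $\epsilon_i$ slack across the nested constraints. The inductive hypothesis provides only approximate optimality for each $K_j$ with $j<m$, and the threshold $k_j$ is frozen at the moment $\hat{K}_j$ first converges, so one has to verify that the subsequent dynamics on strictly slower timescales do not erode this by more than $\epsilon_j$. I would handle this by showing that the projected dual ascent on $\lambda_j$ actively pushes $K_j(\theta)$ back above $k_j - \tau_t$ whenever it is violated, so representational gaps propagate additively rather than compounding. A secondary subtlety is justifying the discrete switch of $\eta$ from $\eta^{i-1}$ to $\eta^i$ triggered by the event ``$\hat{K}_i(\theta)$ has converged''; this can be made rigorous by noting that the event occurs almost surely in finite time on the faster timescale, leaving the asymptotic analysis of the remaining slower dynamics unaffected.
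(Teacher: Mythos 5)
Your overall architecture matches the paper's: induction over the objectives, the multi-timescale separation in which faster iterates are seen as equilibrated and slower ones as quasi-static, critics converging to $w^*_i(\theta)$ on the fastest timescale, and the interpretation of the $\theta$- and $\lambda$-updates as primal ascent and projected dual descent on the Lagrangian $L_i$. However, there are two genuine gaps at the heart of the inductive step. First, you assert that the $\theta$-recursion performs ``noisy ascent on $\nabla_\theta L_m(\theta,\lambda)$'' and converges to the relevant optimum, but the hypothesis of the theorem only guarantees convergence of the $m=1$ algorithm, i.e.\ gradient ascent on a \emph{single} objective $K_1$. The Lagrangian $L_i$ is a weighted linear combination of several objectives, and nothing in your argument explains why convergence to a (local or global) maximum transfers from $K_1$ to $L_i$. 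The paper bridges this by observing that convergence of gradient ascent on $K_1$ to a local/global optimum forces $K_1$ to be locally/globally \emph{invex}; since every $K_i$ has the same form, each is invex, and invexity is preserved under the nonnegative linear combination defining $L_i$, so the stationary point $\theta^*(\lambda)$ of the projected ODE is genuinely a local/global maximum of $L_i$. Without some such argument the inductive step does not go through.

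Second, you invoke ``strong duality'' to conclude that the saddle point of $L_m$ solves the original constrained problem. Strong duality does not hold for these non-convex (in $\theta$) problems, and the paper does not claim it: instead it bounds the \emph{duality gap} above by $\epsilon'_\theta$, an explicit function of the representational slack $\epsilon_\theta$ of the policy class (via the result of Paternain et al.). This bounded gap, together with the critic approximation error $\epsilon_{w_i}$, is precisely the source of the constants $\epsilon_i$ in the statement — so by asserting strong duality you both overstate the conclusion and lose the mechanism that produces the $\epsilon_i$'s. Your closing remarks about additive propagation of slack and the almost-sure finite-time trigger for the learning-rate switch are reasonable and consistent with the paper, but they do not repair these two missing steps.
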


In the remainder of the paper, we consider two particular variants of Algorithm \ref{alg:PB-LRL}, in which we use $K^{\ac}_i$ and $K^{\ppo}_i$ respectively, for each $i$. We refer to the first as Lexicographic A2C (LA2C) and the second as Lexicographic PPO (LPPO). We conclude this section by combining Theorem \ref{thm:PB-LRL} with certain conditions $\mathcal{C}$ that are sufficient for the local and global convergence of A2C and PPO respectively, in order to obtain the following corollaries. The proofs of these corollaries contain further discussion and references regarding the conditions required in each case.

\begin{corollary}
    \label{cor:LA2C}
    Suppose that each critic is linearly parametrised as $V_i(s) = w_i^\top \phi(s)$ for some choice of state features $\phi$ and is updated using a semi-gradient TD(0) rule, and that:
    \begin{enumerate}
        \item $S$ and $A$ are finite, and each reward function $R_i$ is bounded,
        \item For any $\theta \in \Theta$, the induced Markov chain over $S$ is irreducible,
        \item For any $s\in S$ and $a \in A$, $\pi(a \mid s ; \theta)$ is twice continuously differentiable,
        \item Letting $\Phi$ be the $\vert S \vert \times c$ matrix with rows $\phi(s)$, then $\Phi$ has full rank (i.e.\ the features are independent), $c \leq \vert S \vert$, and there is no $w \in W$ such that $\Phi w = 1$.
    \end{enumerate}
    Then for any MOMDP with discounted or limit-average objectives, LA2C almost surely converges to a policy in $\tilde\Theta^\epsilon_m$.
\end{corollary}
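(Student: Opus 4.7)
The plan is to derive Corollary~\ref{cor:LA2C} as a direct instantiation of Theorem~\ref{thm:PB-LRL}. Concretely, I will identify a set of conditions $\mathcal{C}$ on $\mathcal{M}$, $\pi$, and $V_i$, entailed by Conditions 1--4, under which (i) for fixed $\theta$ each linear critic $V_i$ has a well-defined limit point $w^*_i(\theta) = \lim_{T \to \infty} \expect_t[w_i]$, and (ii) A2C (the specialisation of PB-LRL to $m=1$ with objective $K^{\ac}_1$) almost surely drives the actor into $\tilde\Theta^\epsilon_1$. Theorem~\ref{thm:PB-LRL} then lifts (ii) to the lexicographic setting for every fixed $m$, giving $\lim_{T \to \infty}\expect_t[\theta] \in \tilde\Theta^\epsilon_m$ almost surely.

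The first step is the critic analysis. With $\theta$ held fixed, Condition 2 makes the induced Markov chain on the finite state space (Condition 1) irreducible and hence ergodic, with a unique stationary distribution $d_\theta$. Under Condition 4 the feature matrix $\Phi$ has full column rank and its column span does not contain the constant vector $\mathbf{1}$; these are precisely the preconditions of Tsitsiklis and Van Roy's convergence theorem for linear TD(0) in both the discounted ($\gamma_i < 1$) and limit-average ($\gamma_i = 1$) cases, together with boundedness of $R_i$ from Condition 1 and the Robbins--Monro conditions on $\alpha_t$ assumed in Algorithm~\ref{alg:PB-LRL}. This yields almost-sure convergence $w_i \to w^*_i(\theta)$, which is exactly the existence-of-limit-point hypothesis required by Theorem~\ref{thm:PB-LRL}.

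The second step is the actor analysis for $m=1$. Because $\beta^1_t / \alpha_t \to 0$, a standard two-timescale stochastic approximation argument (Borkar, 2008) allows us to treat the critic as tracking $w^*_1(\theta)$ while $\theta$ evolves on the slower timescale. Condition 3 gives a Lipschitz drift for the ODE governing $\theta$, the projection $\Gamma_\theta$ keeps iterates in a compact set, and boundedness of rewards controls the noise. Invoking the actor-critic convergence result of Bhatnagar et al.\ (2009) in the discounted case and Konda and Tsitsiklis (2000) in the limit-average case then shows that $\expect_t[\theta]$ converges almost surely to a local maximum of the biased gradient surrogate that A2C actually optimises; the bias induced by the finite representational capacity of $\phi$ (and hence of the advantage estimate $A_\theta$) is absorbed into the constant $\epsilon_1$, placing the limit in $\tilde\Theta^\epsilon_1$.

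With these two facts in hand, the hypotheses of Theorem~\ref{thm:PB-LRL} are satisfied: each $V_i$ is $C^1$ in $w_i$, $\pi$ is $C^2$ in $\theta$, each $R_i$ is bounded, each $w^*_i(\theta)$ exists, and A2C converges to $\tilde\Theta^\epsilon_1$ under $\mathcal{C}$ when $m=1$. Applying Theorem~\ref{thm:PB-LRL} therefore yields $\lim_{T \to \infty}\expect_t[\theta] \in \tilde\Theta^\epsilon_m$ for any fixed $m$, which is the claim. The main obstacle I expect is cataloguing exactly the right form of $\mathcal{C}$ so that the two-timescale analysis remains consistent simultaneously along all $m+1$ timescales $\alpha, \beta^1, \ldots, \beta^m, \eta^1, \ldots, \eta^m$ in the lexicographic algorithm, in particular that the dual iterates $\lambda_j$ stay bounded thanks to Slater's condition being preserved by $\tau_t \to 0$; however, this burden is already discharged by the induction on $m$ inside the proof of Theorem~\ref{thm:PB-LRL}, so here only the base-case verification outlined above needs to be carried out in detail.
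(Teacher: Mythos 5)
Your proposal is correct and follows essentially the same route as the paper: establish critic convergence to the TD fixed point via Tsitsiklis--Van Roy / Bhatnagar et al.\ under conditions 1, 2, and 4, establish actor convergence to a local $\epsilon$-optimum for $m=1$ from standard A2C results, and then invoke Theorem~\ref{thm:PB-LRL} to lift this to the lexicographic case. The only minor discrepancy is attributional: the paper cites Bhatnagar et al.\ (2009) for the limit-average actor analysis and adapts it to the discounted case (with care about the discounted state distribution $d^\theta_1$), whereas you assign the citations the other way around; this does not affect the validity of the argument.
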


\begin{corollary}
    \label{cor:LPPO}
    Let $\pi(a \mid s; \theta, \chi) \propto \exp\big(\chi^{-1} f(s, a ; \theta)\big)$ and suppose that both $f$ and the \emph{action-value} critics $Q_i$ are parametrised using two-layer neural networks (where $\chi$ is a temperature parameter), that a semi-gradient TD(0) rule is used to update $Q_i$, and that $Q_i$ replaces $A_i$ in the standard PPO loss $K^{\ppo}$, both of which updates use samples from the \emph{discounted} steady state distribution. Further, let us assume that:
    \begin{enumerate}
        \item $S$ is compact and $A$ is finite, with $S \times A \subseteq \mathbb{R}^d$ for some finite $d > 0$, and each reward function $R_i$ is bounded,
        \item The neural networks have widths $\mu_f$ and $\mu_{Q_i}$ respectively with ReLU activations, initial input weights drawn from a normal distribution with mean $0$ and variance $\frac{1}{d}$, and initial output weights drawn from $\mathrm{unif}([-1,1])$,
        \item We have that $q^\pi_i(\cdot,\cdot) \in \big\{Q_i(\cdot,\cdot; w_i) \mid w_i \in  \mathbb{R}^y \big\}$ for any  $\pi \in \Pi$,
        \item There exists $c > 0$ such that for any $z \in \mathbb{R}^d$ and $\zeta > 0$ we have that $\expect_\pi \big[ \mathbf{1} (\vert z^\top (s,a) \vert \leq \zeta  )  \big] \leq \frac{c\zeta}{\Vert z \Vert_2}$ for any $\pi \in \Pi$.
    \end{enumerate}
    Then for any MOMDP with discounted objectives, LPPO almost surely converges to a policy in $\Theta^\epsilon_m$. Furthermore, if the coefficient of the KL divergence penalty $\kappa > 1$ then $\lim_{\mu_f, \mu_{Q_i} \rightarrow \infty} \epsilon = 0$.
\end{corollary}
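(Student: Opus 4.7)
The plan is to invoke Theorem~\ref{thm:PB-LRL} with the set of conditions $\mathcal{C}$ under which single-objective neural PPO is known to converge globally, so that the lexicographic extension follows immediately. Since Theorem~\ref{thm:PB-LRL} reduces the lexicographic convergence claim to the $m=1$ case of whichever underlying algorithm is instantiated, and since LPPO with $m=1$ is exactly neural PPO with a two-layer action-value critic and KL-regularised surrogate objective, it suffices to show that under the listed assumptions the actor-critic pair converges almost surely to a globally $\epsilon$-optimal policy. I would appeal directly to the global convergence analysis of neural PPO in \cite{Liu2019b}: conditions 1--4 of the corollary are precisely the standing assumptions of that analysis (compact state space with a density condition, finite action space, realizability of $q^\pi_i$ by the critic class, and the anti-concentration bound on $z^\top(s,a)$ used to control the neural tangent kernel).

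First, I would verify the preconditions of Theorem~\ref{thm:PB-LRL}. Boundedness of $R_i$ is assumed; the softmax policy $\pi(a\mid s;\theta,\chi)\propto \exp(\chi^{-1}f(s,a;\theta))$ is differentiable in $\theta$ almost everywhere (the ReLU non-smoothness lies on a set of Lebesgue measure zero, which is standard in neural-network stochastic approximation arguments and does not obstruct the two-timescale convergence proof). The existence of a limit point $w^*_i(\theta) = \lim_{T\to\infty}\mathbb{E}_t[w_i]$ for each critic under fixed $\theta$ follows from the neural semi-gradient TD(0) convergence established in the same line of work, using conditions 2--4 and the separation of timescales $\alpha_t \gg \beta^i_t$.

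Next, I would invoke the single-objective neural PPO convergence theorem to conclude $\lim_{T\to\infty}\mathbb{E}_t[\theta]\in\Theta^\epsilon_1$, where $\epsilon$ is controlled by the function-approximation error of the two networks. Then Theorem~\ref{thm:PB-LRL} yields $\lim_{T\to\infty}\mathbb{E}_t[\theta]\in\Theta^\epsilon_m$ by its induction on $i$, since the multi-timescale ratios in Algorithm~\ref{alg:PB-LRL} let each Lagrangian saddle point be solved as if the higher-priority constraints were fixed.

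For the refinement $\lim_{\mu_f,\mu_{Q_i}\to\infty}\epsilon = 0$ when $\kappa > 1$, I would use the fact that the KL penalty with coefficient $\kappa>1$ makes the PPO surrogate strongly concave in a neighbourhood of $\theta_{\mathrm{old}}$, so the optimisation error at each step is bounded above by the representation error of the critic, which in turn vanishes as $\mu_{Q_i}\to\infty$ by standard overparameterised approximation results, while the actor's approximation error vanishes as $\mu_f\to\infty$. Each $\epsilon_i$ is therefore a monotone function of these two errors and tends to zero. The main obstacle I anticipate is a clean interface between Theorem~\ref{thm:PB-LRL} (which assumes twice continuous differentiability) and the ReLU-parameterised networks used here; this must be resolved either by relying on weak differentiability essentially everywhere, or by noting that the quantities entering the proof of Theorem~\ref{thm:PB-LRL} are expectations against continuous distributions for which the measure-zero non-smooth set is harmless.
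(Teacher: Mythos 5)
Your high-level plan matches the paper's: both route the corollary through Theorem~\ref{thm:PB-LRL} and ground the $m=1$ case in the global-optimality analysis of overparametrised neural PPO in \cite{Liu2019b}, whose standing assumptions are indeed conditions 1--4. There are, however, two places where your argument would not go through as written. The lesser one: you cannot appeal ``directly'' to the theorem of \cite{Liu2019b}, because that analysis uses a \emph{constant} learning rate tuned to a fixed iteration budget, whereas PB-LRL's multi-timescale scheme forces Robbins--Monro rates ($\sum_t \iota_t = \infty$, $\sum_t \iota_t^2 < \infty$). The bulk of the paper's proof consists of redoing the error propagation under diminishing rates: bound the per-step error of a generic two-layer network update, telescope over $t$, and divide by $\sum_{t=0}^{T-1}\big(\iota_t(1-h)-4\iota_t^2\big)$ to obtain limiting critic and actor errors $C_2(r_{Q_1},\mu_{Q_1})$ and $C_2(r_f,\mu_f)$. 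Your sketch omits this adaptation entirely, though the paper itself regards it as minor.

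The more serious gap is your mechanism for the claim that $\kappa>1$ gives $\lim_{\mu_f,\mu_{Q_i}\rightarrow\infty}\epsilon=0$. You argue that the KL penalty makes the surrogate strongly concave near $\theta_{\mathrm{old}}$, so that per-step optimisation error is dominated by representation error. But the Hessian of $\mathrm{KL}(\pi_\theta \,\Vert\, \pi_{\theta_{\mathrm{old}}})$ at $\theta=\theta_{\mathrm{old}}$ is the Fisher information matrix, which is generically singular for an overparametrised network, so no strong concavity in $\theta$ follows from $\kappa>1$; and even granting local strong concavity of each subproblem, that controls only a single proximal step, not the suboptimality of the \emph{sequence} of PPO iterates. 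The actual argument (in the paper, following \cite{Liu2019b}) is a multiplicative-weights/regret bound of the form
$$J_1(\theta^*)-\expect_k[J_1(\theta_k)] \;\leq\; \frac{\log\vert A\vert + M\sum_{k=0}^{K-1}\big(\kappa_k^{-2}+\epsilon_k\big)}{(1-\gamma_1)\sum_{k=0}^{K-1}\kappa_k^{-1}},$$
where the width-dependent approximation errors enter through $\epsilon_k$; one needs $K\rightarrow\infty$ to kill the $\log\vert A\vert$ term, $\kappa_k>1$ to control $\sum_k\kappa_k^{-2}$ relative to $\sum_k\kappa_k^{-1}$, and $\mu_f,\mu_{Q_i}\rightarrow\infty$ to send $\epsilon_k\rightarrow 0$. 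Without this aggregated bound your proof establishes only the first claim of the corollary (convergence to $\Theta^\epsilon_m$ for some constant $\epsilon$), not the vanishing of $\epsilon$ in the infinite-width limit.
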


\begin{figure}[ht]
  \centering
    \subfloat[256 states]{{\includegraphics[width=0.24\textwidth]{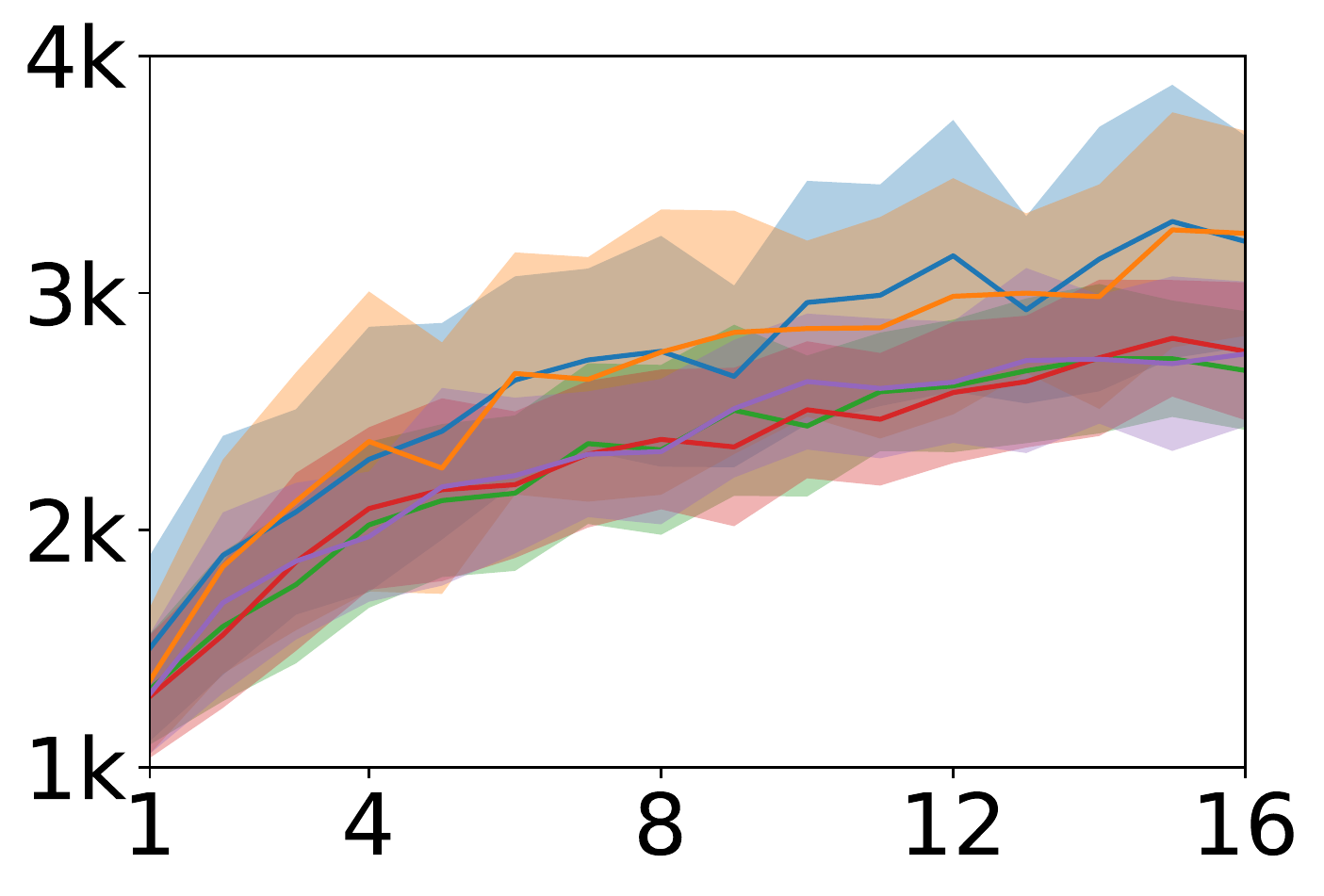}}}
    \subfloat[512 states]{{\includegraphics[width=0.24\textwidth]{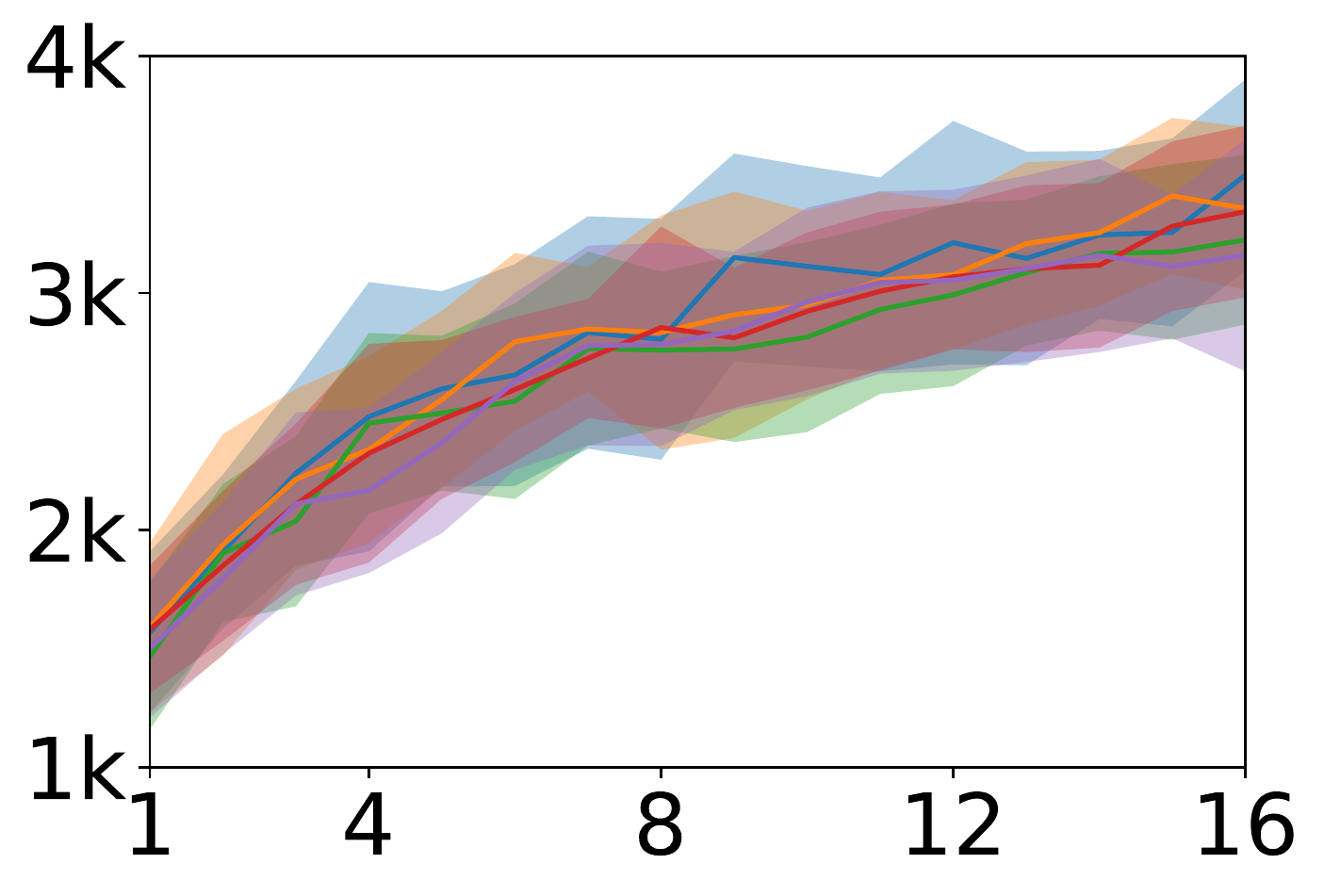}}}
    \caption{We plot the learning time of LRL as the number of episodes until convergence ($y$ axis) against the number of reward signals ($x$ axis). We use randomly generated MOMDPs with 256 or 512 states, four actions, and a varying number of rewards. For each trial we generate 30 MOMDPs, and record the number of episodes it takes for each
    agent's long-run average reward per episode to converge to a stable value. 
    The algorithms are Lexicographic $Q$-Learning (blue), Lexicographic Expected SARSA (orange), Lexicographic Double $Q$-Learning (green), Lexicographic PPO (red), and Lexicographic A2C (purple).
    }
    \label{fig:scaling_in_reward}
\end{figure}

\section{Experiments}
\label{sec:experiments}
In this section we evaluate our algorithms empirically. 
We first show how the learning time of LRL scales with the number of reward functions. We then compare the performance of VB-LRL and PB-LRL against that of other algorithms for solving constrained RL problems.
Further experimental details and additional experiments are described in the supplementary material, and documented in our codebase.\footnote{Available at \url{https://github.com/lrhammond/lmorl}.}


\subsection{Scaling with the Number of Rewards}
\label{sec:experiments:1}

Our first experiment (shown in Figure \ref{fig:scaling_in_reward}) shows how the learning time of LRL scales in the number of rewards. 
The data suggest that the learning time grows sub-linearly as additional reward functions are added, meaning that our algorithms can be used with large numbers of objectives.

%
%

\subsection{Lexicographic RL for Safety Constraints} 
\label{sec:experiments:2}



Many tasks are naturally expressed in terms of both a \textit{performance metric} and a \textit{safety constraint}. Our second experiment compares the performance of LRL against RCPO \cite{Tessler2019}, AproPO \cite{miryoosefi2019reinforcement}, and the actor-critic algorithm for VaR-constraints in \cite{chow2015riskconstrained}, in a number of environments with both a performance metric and a safety constraint. These algorithms synthesise slightly different kinds of policies, but are nonetheless sufficiently similar for a relevant comparison to be made. We use VB-LRL with a neural network and a replay buffer, which we call LDQN, and the PB-LRL algorithms we evaluate are LA2C and LPPO. The results are shown in Figure~\ref{fig:all}.


The CartSafe environment from \textit{gym-safety}\footnote{Available at \url{https://github.com/jemaw/gym-safety}.} is a version of the classic CartPole environment. 
The agent receives more reward the higher up the pole is, whilst incurring a cost if the cart is moved outside a safe region. Here the LRL algorithms, RCPO, and AproPO all learn quite safe policies, but VaR\_AC struggles. 
Of the safer policies LDQN gets the most reward (roughly matching DQN and A2C), followed by RCPO and AproPO, and then LA2C and LPPO. The latter two minimise cost more aggressively, and thus gain less reward. 


\begin{figure}[H]
  \centering
    \subfloat[CartSafe Reward]{{\includegraphics[width=0.24\textwidth]{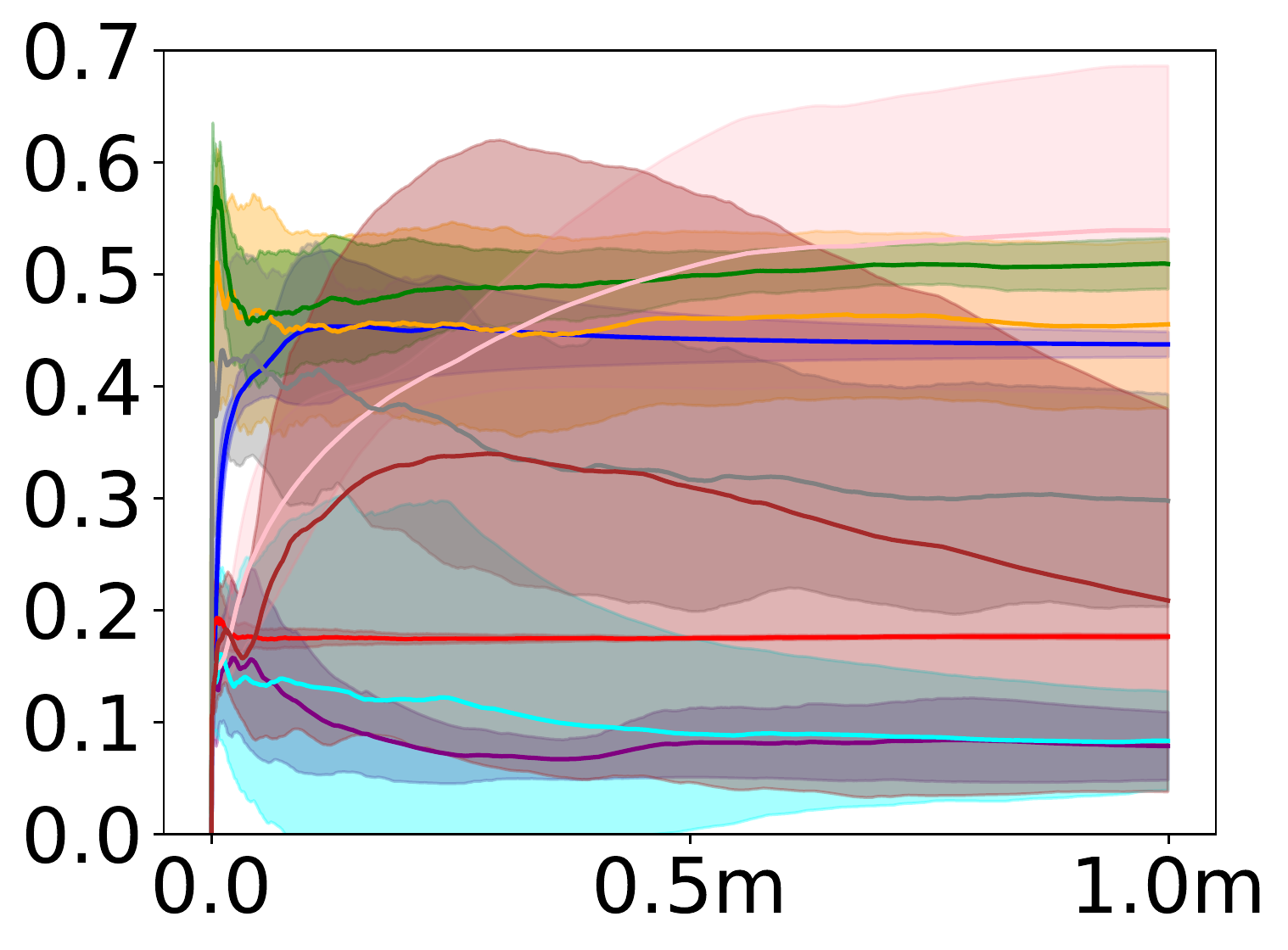}}}
    \subfloat[CartSafe Cost]{{\includegraphics[width=0.24\textwidth]{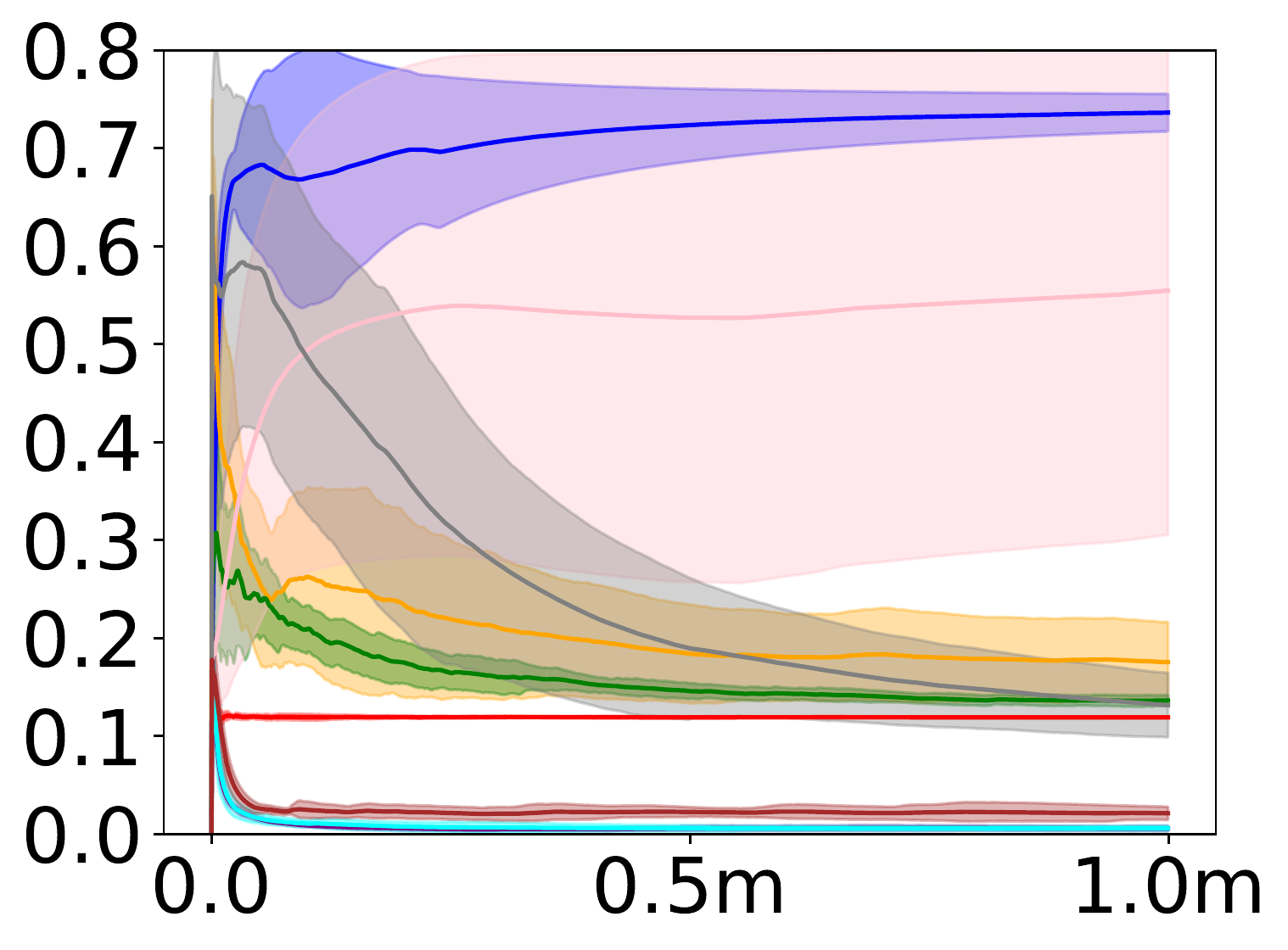}}}
    
    \subfloat[GridNav Reward]{{\includegraphics[width=0.24\textwidth]{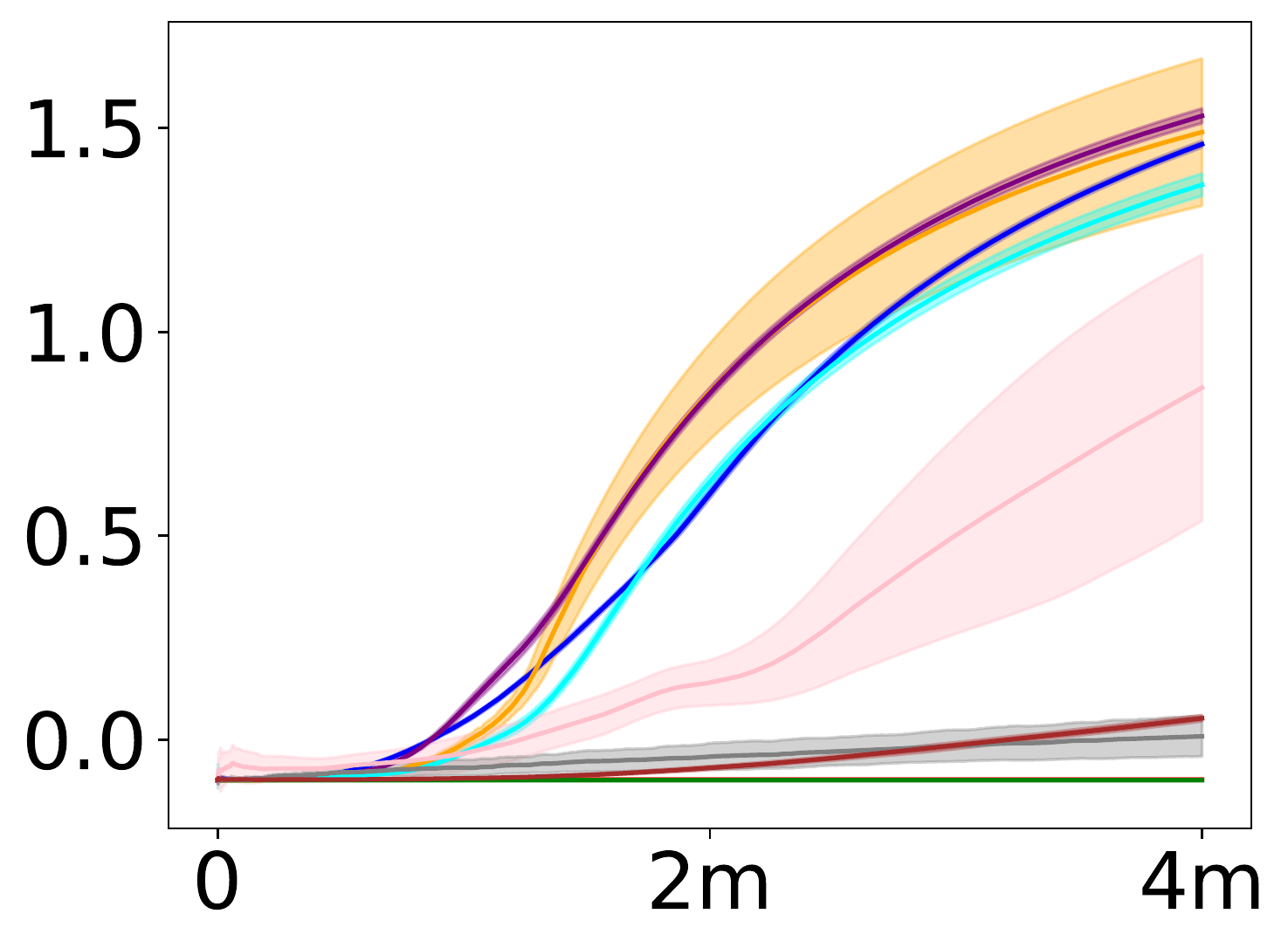}}}
    \subfloat[GridNav Cost]{{\includegraphics[width=0.24\textwidth]{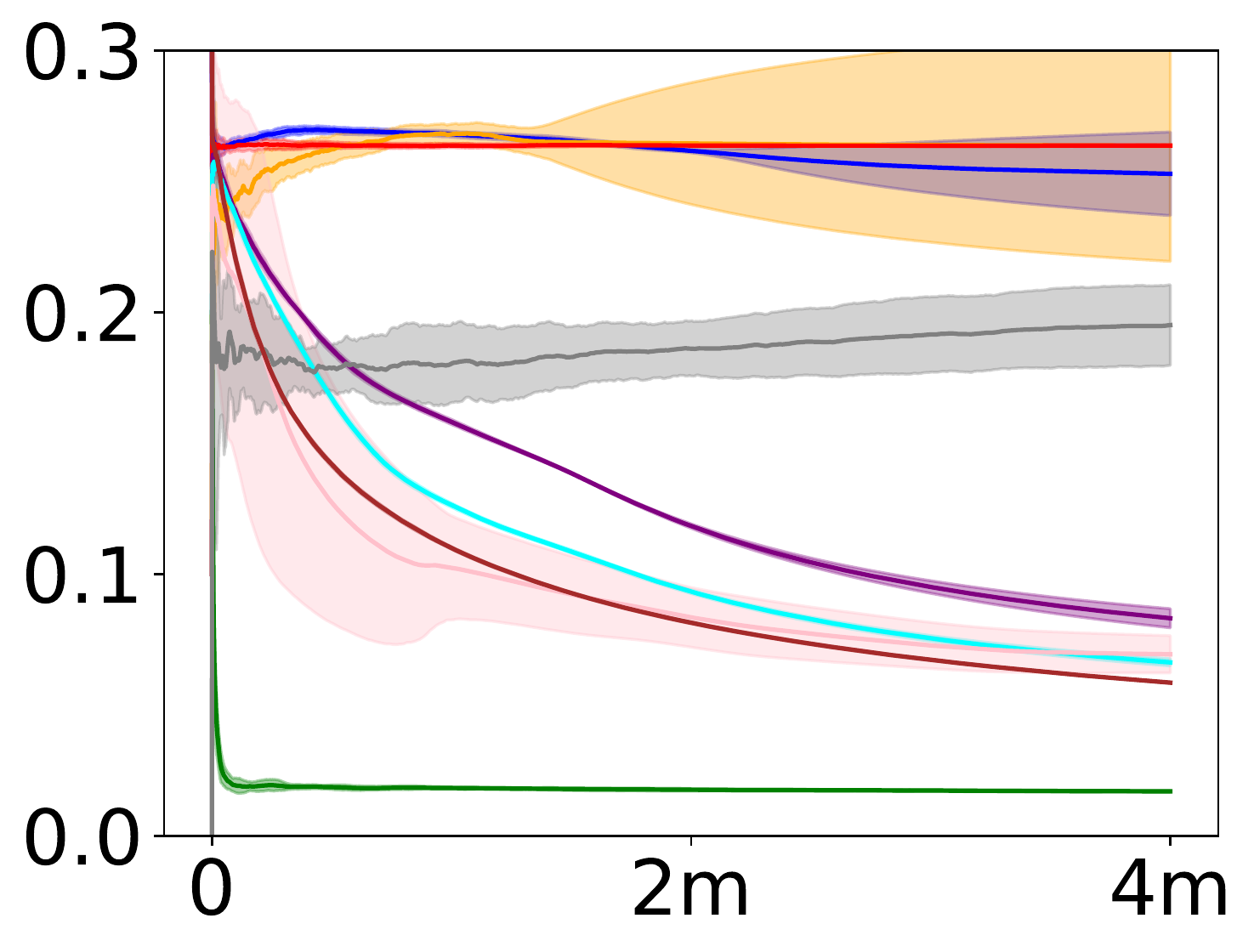}}}
    
    \subfloat[IntersectionEnv Reward]{{\includegraphics[width=0.24\textwidth]{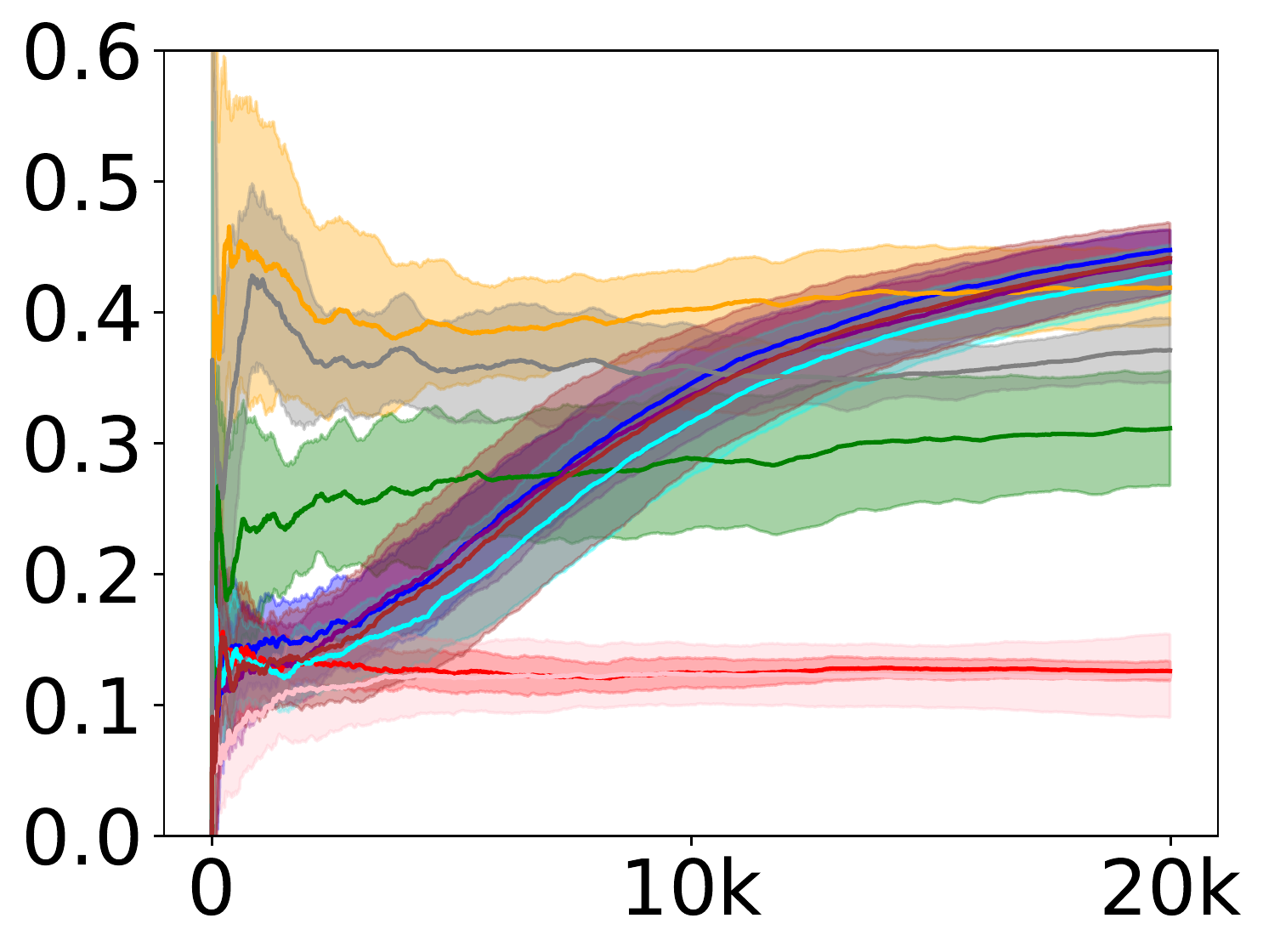}}}
    \subfloat[IntersectionEnv Cost]{{\includegraphics[width=0.24\textwidth]{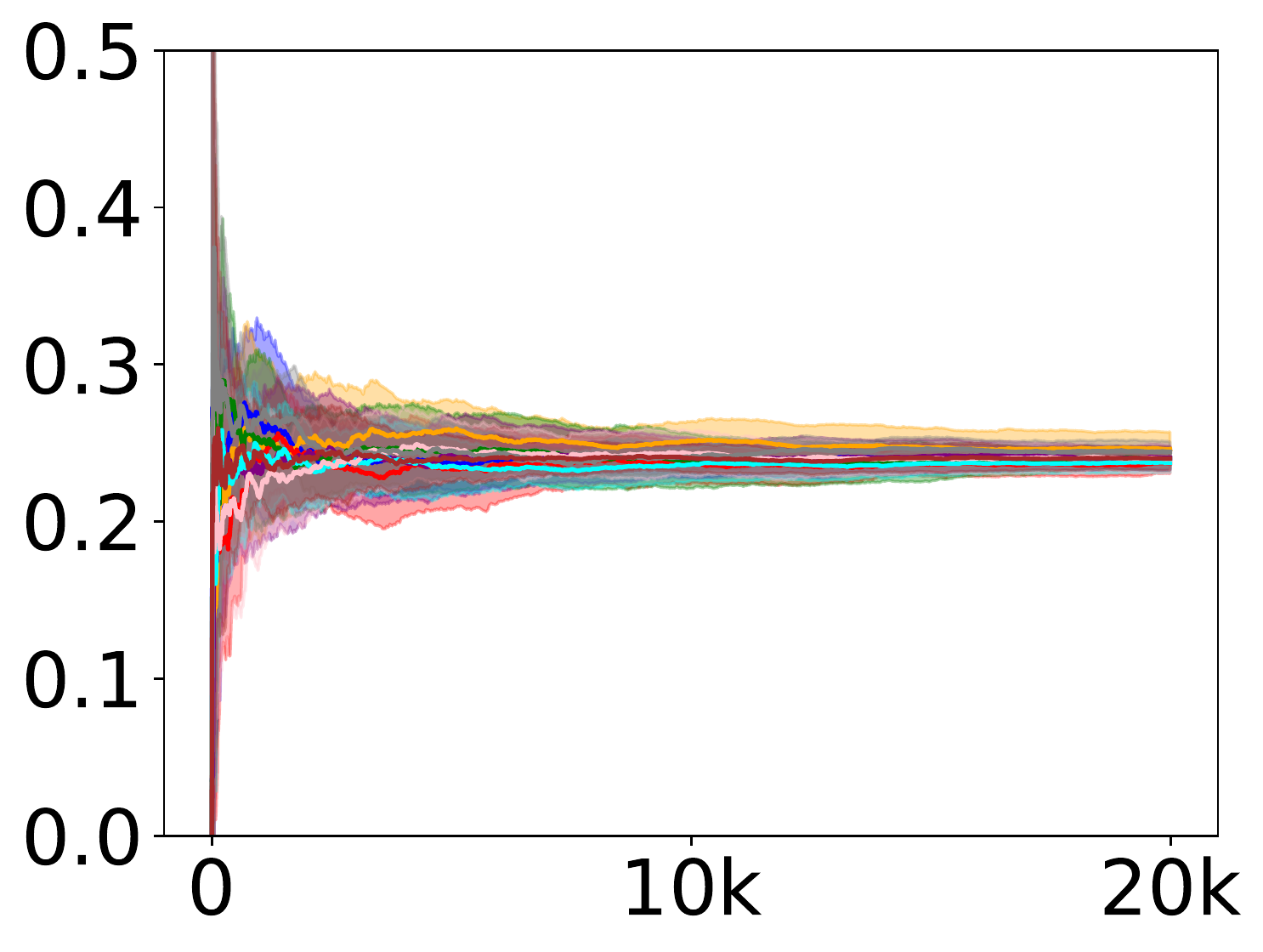}}}
    
     \subfloat{{\includegraphics[width=0.48\textwidth]{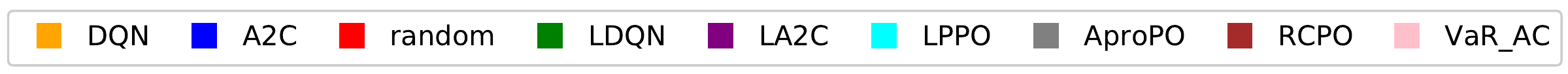}}}
    
    \caption{
    We plot the average reward and cost ($y$ axis) against the number of environment interactions ($x$ axis).
    In each environment, RCPO, AproPO, and VaR\_AC were tasked with maximising reward subject to a constraint on the cost, bounds on both the reward and cost, or a bound on the probability of the cost exceeding a certain constant.
    The LRL algorithms were tasked with minimising cost and, subject to that, maximising reward. Each algorithm was run ten times in each environment. 
    }
    \label{fig:all}
\end{figure}

The GridNav environment, again from \textit{gym-safety} (based on an environment in \cite{chow2018lyapunovbased}), is a large gridworld with a goal region and a number of \enquote{unsafe} squares. The agent is rewarded for reaching the goal quickly, and incurs a cost if it enters an unsafe square. Moreover, at each time step, the agent is moved in a random direction with probability 0.1. Here LDQN is the safest algorithm, but it also fails to obtain any reward. LA2C, LPPO, RCPO, and VaR\_AC are similar in terms of safety, but LA2C and LPPO obtain the most reward, VaR\_AC a fairly high reward, and RCPO a low reward. AproPO has low safety \emph{and} low reward. 

Finally, in the Intersection environment from \textit{highway-env}\footnote{Available at \url{https://github.com/eleurent/highway-env}.} the agent must guide a car through an intersection with dense traffic. We give the agent a reward of 10 if it reaches its destination, and a cost of 1 for each collision that occurs (which is slightly different from the environment's original reward structure). This task is challenging, and all the algorithms incur approximately the same cost as a random agent. However, they still manage to increase their reward, with LA2C and RCPO obtaining the most reward out of the constrained algorithms (roughly matching that of DQN and A2C). This shows that if optimising the first objective is too difficult, then the LRL algorithms fail gracefully by optimising the second objective, even if it has lexicographically lower priority.

\section{Discussion and Conclusions} 


We introduced two families of RL algorithms for solving lexicographic multi-objective problems, which are more general than prior work, and are justified both by their favourable theoretical guarantees and their compelling empirical performance against other algorithms for constrained RL. VB-LRL converges to a lexicographically optimal policy in the tabular setting, and PB-LRL inherits convergence guarantees as a function of the objectives used, leading to locally and globally lexicographically $\epsilon$-optimal policies in the case of LA2C and LPPO respectively. The learning time of the algorithms grows sub-linearly as reward functions are added, which is an encouraging result for scalability to larger problems. Further, when used to impose safety constraints, the LRL algorithms generally compare favourably to the state of the art, both in terms of learning speed and final performance.


We conclude by noting that in many situations, LRL may be preferable to constrained RL for reasons beyond its strong performance, as it allows one to solve different kinds of problems. For example, we might want a policy that is as safe as possible, but lack prior knowledge of what level of safety can be attained in the environment. LRL could also be used e.g.\ to guide learning by encoding prior knowledge in extra reward signals without the risk of sacrificing optimality with respect to the primary objective(s). These applications, among others, provide possible directions for future work.

\section*{Acknowledgments} 
Hammond acknowledges the support of an EPSRC Doctoral Training Partnership studentship (Reference: 2218880). 


\bibliographystyle{named}
\bibliography{refs}

\setcounter{proposition}{0}
\setcounter{theorem}{0}
\setcounter{corollary}{0}
\setcounter{definition}{0}

\newpage



\section{General Form of VB-LRL}

There are several places where VB-LRL makes use of a tolerance parameter $\tau$. In the main text of this paper, we assume that the same tolerance parameter is used everywhere, and that it is a constant. Here we provide a more general form of VB-LRL, which is also the form that will be used in our proofs. First, let a \emph{tolerance function} $\tau : S \times \{1 \dots m\} \times \mathbb{N} \times (S \times A \rightarrow \mathbb{R})^m \rightarrow \mathbb{R}_{> 0}$ be a function that takes as input a state, reward priority, time step, and sequence of $Q$-functions, and returns a (positive) variable corresponding to a tolerance. In other words, $\tau(s,i,t,Q_1, \dots, Q_m)$ is the tolerance for the $i^\text{th}$ reward when selecting an action in state $s$ at time $t$, given the current $Q$-functions. 
We also require that $\lim_{t \rightarrow \infty}\tau(s,i,t,Q_1, \dots, Q_m)$ exists.
We will revisit the specification of VB-LRL using such tolerance functions, starting with a more general definition of lexicographic bandit algorithms.


\begin{definition}[Lexicographic Bandit Algorithm] Let $S$ be a set of states, $A$ a set of actions, $Q_1, \dots, Q_m : S \times A \rightarrow \mathbb{R}$ a sequence of $Q$-functions, and $t \in \mathbb{N}$ a time parameter.
Then a lexicographic bandit algorithm with tolerance function $\tau_\mathcal{B} : S \times \{1 \dots m\} \times \mathbb{N} \times (S \times A \rightarrow \mathbb{R})^m \rightarrow \mathbb{R}_{> 0}$ is a function $\mathcal{B} : (S \times A \rightarrow \mathbb{R})^m \times S \times \mathbb{N} \rightsquigarrow A$, such that: 
$$
\lim_{t\to\infty} \Pr(\mathcal{B}(Q_1, \dots, Q_m,s,t) \in \Delta^{\tau_\mathcal{B}}_{s,m}) = 1,
$$
where $\Delta^{\tau_\mathcal{B}}_{s,0} = A$ and $\Delta^{\tau_\mathcal{B}}_{s,i+1} \coloneqq \{ a \in \Delta^{\tau_\mathcal{B}}_{s,i} \mid Q_i(s,a) \geq \max_{a' \in \Delta^{\tau_\mathcal{B}}_{s,i}} Q_i(s,a')- \lim_{t\to\infty} {\tau_\mathcal{B}}(s,i+1,t,Q_1, \dots, Q_m)\}$.
\end{definition}

We next generalise the definition of lexicographic $Q$-learning in a similar way. 
Given a tolerance function $\tau_Q$, we say that lexicographic $Q$-learning with tolerance $\tau_Q$ is the update rule:
\begin{align*}
Q_i(s, a) \gets \big(&1-\alpha_t(s,a)\big)\cdot Q_i(s, a) ~ + \\
&\alpha_t(s,a) \cdot \big(R_i(s,a,s') + \gamma_i \max_{a \in \Delta^{\tau_Q,t}_{s',i}} Q_i(s', a)\big), 
\end{align*}
where $\Delta^{\tau_Q,t}_{s,0} = A$ and $\Delta^{\tau_Q,t}_{s,i+1} \coloneqq \{ a \in \Delta^{\tau_Q,t}_{s,i} \mid Q_i(s,a) \geq \max_{a' \in \Delta^{\tau_Q,t}_{s,i}} Q_i(s,a')- \tau_Q(s,i+1,t,Q_1, \dots, Q_m)\}$.
Note that $\tau_\mathcal{B}$ and $\tau_Q$ may, but need not, be the same.



These generalisations may seem excessive, but they let us set the tolerance in a much more flexible way.
For example, if we do not know the scale of the reward, then it might be beneficial to express the tolerance as a \emph{proportion} rather than a constant, such as by setting $\tau(s,i,t,Q_1, \dots, Q_m) \coloneqq \sigma \cdot |\max_{a \in \Delta} Q_i(s,a)|$, where $\sigma \in (0,1)$. 
If we do this, then the tolerance will depend on $s$, $i$, and $Q_1, \dots, Q_m$.
We might also wish  $\sigma$ to decrease over time, in which case the tolerance will additionally depend on $t$. To give another example, we could implement the tolerance by casting the $Q$-values to low-precision floats, letting the rounding error correspond to the tolerance.
The benefit of the more general tolerance functions we define above is that they easily can capture these (and similar) kinds of setups.

\section{VB-LRL Convergence Proof}

We now prove that VB-LRL converges to a lexicographically optimal policy, making use of the following lemma.



\begin{lemma}
\label{lemma:convergence_lemma}
Let $\langle \zeta_t , \beta_t, F_t \rangle$ 
be a stochastic process where $\zeta_t , \beta_t, F_t : X \rightarrow \mathbb{R}$ satisfy:
$$
\beta_{t+1}(x) = \big(1-\zeta_t(x_t)\big)\cdot \beta_t(x_t)+\zeta_t(x_t)\cdot F_t(x_t),
$$
with $x_t \in X$ and $t \in \mathbb{N}$. Let $P_t$ be a sequence of increasing $\sigma$-fields  such  that $\zeta_0$ and $\beta_0$ are $P_0$-measurable and $\zeta_t$, $\beta_t$ and $F_{t-1}$ are $P_t$-measurable, and $t\geq1$. Then $\beta_t$ converges to $0$ with probability one if:
\begin{enumerate}
    \item $X$ is finite,
    \item $\zeta_t(x_t) \in [0,1]$ and $\forall x \neq x_t$ we have $\zeta_t(x) = 0$,
    \item $\sum_t \zeta_t(x_t) = \infty$ and $\sum_t \zeta_t(x_t)^2 < \infty$ with probability one,
    \item $\mathrm{Var}\{F_t(x_t) \mid P_t\} \leq K(1+\kappa \Vert \beta_t \Vert_\infty )^2$ for some $K \in \mathbb{R}$ and $\kappa \in [0,1)$,
    \item $\Vert \mathbb{E}\{F_t \mid P_t\}\Vert_\infty \leq \kappa \Vert \beta_t \Vert_\infty + c_t$, where $c_t \rightarrow 0$ with probability one as $t \rightarrow \infty$,
\end{enumerate}
where $\Vert \cdot \Vert_\infty$ is a (potentially weighted) maximum norm.  
\end{lemma}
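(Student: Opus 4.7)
\medskip

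\noindent\textbf{Proof proposal.} The plan is to follow the classical stochastic-approximation template of Jaakkola--Jordan--Singh / Bertsekas--Tsitsiklis: split the increment into a conditional-mean (pseudo-contraction) part and a martingale-difference (noise) part, show the noise part vanishes almost surely, and then iterate a contraction argument to drive $\|\beta_t\|_\infty$ to zero. Concretely, write $F_t = \mathbb{E}[F_t \mid P_t] + w_t$, so that the iteration becomes
\begin{align*}
\beta_{t+1}(x) = (1-\zeta_t(x))\beta_t(x) + \zeta_t(x)\,\mathbb{E}[F_t(x) \mid P_t] + \zeta_t(x)\,w_t(x),
\end{align*}
where by condition 2 the update only fires at $x=x_t$. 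Introduce the auxiliary processes
\begin{align*}
U_{t+1}(x) &= (1-\zeta_t(x))U_t(x) + \zeta_t(x)\,w_t(x),\\
W_{t+1}(x) &= (1-\zeta_t(x))W_t(x) + \zeta_t(x)\,\mathbb{E}[F_t(x) \mid P_t],
\end{align*}
so that $\beta_t = U_t + W_t$ (with matching initial conditions).

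\medskip

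\noindent First I would handle the noise process $U_t$. Conditionally on $P_t$, $\zeta_t w_t$ is a square-integrable martingale difference with variance bounded by $\zeta_t(x)^2 K(1+\kappa\|\beta_t\|_\infty)^2$. Assuming temporarily that $\|\beta_t\|_\infty$ is a.s.\ bounded (this is the genuine sticking point, addressed below), condition 3 combined with the Robbins--Siegmund / Doob martingale convergence theorem yields $U_t(x)\to 0$ a.s.\ for every $x\in X$, and hence (since $X$ is finite by condition 1) $\|U_t\|_\infty \to 0$ a.s. Next I would handle $W_t$: condition 5 says the contractive driver satisfies $\|\mathbb{E}[F_t\mid P_t]\|_\infty \le \kappa\|\beta_t\|_\infty + c_t$ with $\kappa<1$ and $c_t\to 0$. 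A direct induction on the recursion for $W_t$, using that each $\zeta_t(x)\in[0,1]$ is a convex combination coefficient, gives
\begin{align*}
\|W_{t+1}\|_\infty \le \max\{\|W_t\|_\infty,\ \kappa\|\beta_t\|_\infty + c_t\},
\end{align*}
and combined with $\beta_t = U_t+W_t$ this produces the ``shrinking envelope'' argument: given any bound $\|\beta_t\|_\infty \le D$ from some time onwards, one can show $\limsup_t \|\beta_t\|_\infty \le \kappa D$, and iterating the envelope $D_{k+1}=\kappa D_k$ drives it to $0$.

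\medskip

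\noindent The main obstacle, as usual in these proofs, is establishing the a.s.\ boundedness of $\|\beta_t\|_\infty$ that legitimises the variance estimate $K(1+\kappa\|\beta_t\|_\infty)^2$. I would handle this via a standard stopping-time / truncation argument: define $\sigma_N := \inf\{t : \|\beta_t\|_\infty > N\}$, carry out the above decomposition for the stopped process $\beta_{t\wedge\sigma_N}$ where all estimates become unconditional with constants depending on $N$, and then use the stability-via-contraction structure (condition 5 says the drift points inward whenever $\|\beta_t\|_\infty$ is large relative to $c_t/(1-\kappa)$) together with the square-summability of $\zeta_t$ to show $\Pr(\sigma_N < \infty) \to 0$ as $N\to\infty$. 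Once boundedness is in hand, the decomposition/envelope argument above yields $\beta_t \to 0$ a.s., completing the proof. The expected difficulty is almost entirely in this stability step; the noise-vanishing and contraction-shrinking pieces are standard once the iterate is known to live in a bounded set.
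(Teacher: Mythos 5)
The paper offers no proof of this lemma at all --- its ``proof'' is a one-line citation to Singh et al.\ (2000), whose underlying argument is precisely the Jaakkola--Jordan--Singh stochastic-approximation theorem that your sketch reconstructs (the splitting $F_t = \mathbb{E}[F_t \mid P_t] + w_t$, the decomposition $\beta_t = U_t + W_t$ into a vanishing martingale part and a pseudo-contractive part, the shrinking-envelope iteration, and a truncation or rescaling argument for almost-sure boundedness), so your outline is correct and follows the same route as the source the paper defers to. The only understated step is the contraction of $W_t$: the recursion $\Vert W_{t+1}\Vert_\infty \leq \max\{\Vert W_t\Vert_\infty,\ \kappa\Vert\beta_t\Vert_\infty + c_t\}$ by itself gives only non-expansion, and to pull $\limsup_t \Vert W_t\Vert_\infty$ down to $\kappa D$ you must invoke the per-coordinate non-summability $\sum_t \zeta_t(x) = \infty$ (conditions 2--3) so that $\prod_t (1-\zeta_t(x)) \to 0$; you use this implicitly, but it is the step that turns the envelope from merely stable into shrinking.
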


\begin{proof}
See \cite{Singh2000}.
\end{proof}


We next give the general statement of the Theorem:

\begin{theorem}
\label{thm:convergence}
In any MOMDP $\mathcal{M}$, if VB-LRL uses SARSA, Expected SARSA, or Lexicographic $Q$-Learning, then it will converge to a policy $\pi$ that is lexicographically optimal if:
\begin{enumerate}
    \item $S$ and $A$ are finite,
    \item All reward functions are bounded,
    \item Either $\gamma_1 , \dots,  \gamma_m < 1$, or every policy leads to a terminal state with probability one,
    \item The learning rates $\alpha_t(s,a) \in [0,1]$ satisfy the conditions $\sum_t \alpha_t(s,a) = \infty$ and $\sum_t \alpha_t(s,a)^2 < \infty$ with probability one, for all $s\in S$, $a \in A$,
    \item The tolerance function $\tau_\mathcal{B}$ used in the lexicographic bandit algorithm satisfies the condition that $0 < \lim_{t \rightarrow \infty} \tau_\mathcal{B}(s,i,t,Q_1, \dots, Q_m) < \min_{a \neq a'} \vert q_i(s,a) - q_i(s,a') \vert$; if lexicographic $Q$-learning is used then its tolerance function $\tau_\mathcal{Q}$ must also satisfy this condition. 
\end{enumerate}
\end{theorem}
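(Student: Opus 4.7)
The plan is to proceed by induction on the reward priority index $i \in \{1,\dots,m\}$, showing at each stage that $Q_i$ converges almost surely to a function $q_i^*$ which coincides, on the relevant restricted action set, with the true lexicographically optimal $q$-value for reward $i$. The workhorse will be Lemma~\ref{lemma:convergence_lemma} applied to $\beta_t = Q_i - q_i^*$; the contraction factor $\kappa$ will be $\gamma_i$ (or, in the undiscounted case with proper policies, a factor smaller than one guaranteed by the fact that every policy reaches a terminal state). Hypotheses 1--4 of Theorem~\ref{thm:convergence} directly supply hypotheses 1--4 of the lemma, so almost all work goes into verifying the conditional-mean contraction condition 5 of the lemma while the sets $\Delta^{\tau_Q,t}_{s,i}$ fluctuate during learning.

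For the base case $i=1$, observe that $\Delta^{\tau_Q,t}_{s,1} = A$ by definition, so the update rule reduces to ordinary $Q$-learning (resp.\ SARSA / Expected SARSA). Convergence to $q_1^* := q_1^{\pi^*}$ where $\pi^*$ is lexicographically optimal then follows from the classical argument (Watkins--Dayan for $Q$-learning, Singh et al.\ for SARSA variants), noting that condition 5 on the bandit's tolerance ensures the GLIE-type requirement needed for SARSA: because the limiting tolerance is strictly less than the minimum $q$-value gap, the set $\Delta^{\tau_\mathcal{B}}_{s,1}$ eventually coincides with $\arg\max_a q_1(s,a)$, and the exploration probabilities $\epsilon_{s,t}$ ensure every action is visited infinitely often.

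For the inductive step, assume $Q_1,\dots,Q_i$ have converged almost surely to $q_1^*,\dots,q_i^*$ on their respective restricted action sets. The crucial step is a \emph{stabilisation} argument: because $\lim_{t\to\infty}\tau_Q(s,j,t,Q_1,\dots,Q_m)$ is strictly less than the minimum gap $\min_{a\neq a'}|q_j(s,a)-q_j(s,a')|$ for every $j\leq i$, there exists (almost surely) a random time $T$ after which $\Delta^{\tau_Q,t}_{s,j}$ equals the \emph{true} set of lexicographically optimal actions $\Delta^*_{s,j}$ for every $j\leq i$ and every $s$. From time $T$ onward, the update rule for $Q_{i+1}$ is precisely the standard tabular update on the induced sub-MDP obtained by restricting the action set in each state to $\Delta^*_{s,i}$. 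The bounded-variance clause 4 of the lemma holds by boundedness of rewards, and the contraction clause 5 of the lemma holds with $\kappa=\gamma_{i+1}$ (or the proper-policy contraction) by the standard Bellman-operator argument applied to this sub-MDP. Hence $Q_{i+1}\to q_{i+1}^*$ on the restricted sets, closing the induction.

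The main obstacle will be making the stabilisation argument fully rigorous: one needs to argue that the transient before time $T$, during which $\Delta^{\tau_Q,t}_{s,j}$ may be incorrect, does not prevent convergence. The clean way to handle this is to invoke Lemma~\ref{lemma:convergence_lemma} with $c_t$ absorbing the bias that the incorrect restricted sets introduce: since this bias vanishes after the (almost surely finite) random time $T$, we have $c_t\to 0$ a.s., so clause 5 of the lemma is satisfied. Finally, to conclude that the limit policy is lexicographically optimal, combine convergence of all $Q_i$ with the defining property of the lexicographic bandit algorithm: $\mathcal{B}(Q_1,\dots,Q_m,s,t)\in\Delta^{\tau_\mathcal{B}}_{s,m}$ in the limit, and by condition 5 this set coincides with the true lexicographically optimal action set $\Pi^0_m$ restricted to state $s$. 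Measurability across the three cases (SARSA, Expected SARSA, Lex $Q$-learning) is handled uniformly because the only place the update rule differs is in the target, which under the stabilised restriction becomes the same Bellman target on the sub-MDP in each case.
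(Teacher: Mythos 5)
Your proposal is correct and follows essentially the same route as the paper's proof: induction over the reward priorities, an application of Lemma~\ref{lemma:convergence_lemma} with $\beta_t = Q_{i,t} - q_i^l$ and contraction factor $\gamma_i$, and absorption of the transient bias from the not-yet-stabilised restricted action sets $\Delta^{\tau_Q,t}_{s,j}$ into the vanishing term $c_t$, using condition 5 to guarantee that these sets coincide with the true lexicographically optimal action sets after an almost surely finite time. The only step you gloss over is the undiscounted proper-policy case, where the paper explicitly constructs a weighted maximum norm over $S \times A$ (with weights determined by the ``distance'' to a terminal state) to obtain a contraction factor $\kappa < 1$; your parenthetical acknowledgment of this case is consistent with what the paper actually does.
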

Note that condition 4 requires that the agent visits every state-action pair infinitely many times. 
Before giving the proof, we need to provide a few definitions.
We say that the $Q$-functions $Q_1 , \dots,  Q_m$ are \textit{lexicographically optimal} if they correspond to the $q$-values of (globally) lexicographically optimal policies, and we will also use $q_1^l , \dots,  q_m^l$ to denote these $Q$-functions. We define \textit{lexmax} as:
$$
\lexmax_a q_i^l(s,a) = \max_{a \in \Delta_{s,i}} q_i^l(s,a),
$$
with $\Delta_{s,1} = A$ and $\Delta_{s,i+1} = \argmax_{a \in \Delta_{s,i}} q_{i+1}(s,a)$. 
Moreover, given a function $\sigma : S \times \{1 \dots m\} \times (S \times A \rightarrow \mathbb{R})^m \rightarrow \mathbb{R}_{> 0}$ that takes as input a state, reward priority, and sequence of $Q$-functions, and returns a non-negative real value, we define $\sigma$-\textit{slack lexmax} as:
$$
\slexmax Q_i(s,a) = \max_{a \in \Delta_{s,i}^\sigma} Q_i(s,a),
$$
where $\Delta_{s,0}^\sigma = A$ and $\Delta_{s,i+1}^\sigma = \{ a \in \Delta_{s,i}^\sigma \mid Q_i(s,a) \geq \max_{a' \in \Delta_{s,i}^\sigma} Q_i(s,a') - \sigma(s,i,Q_1, \dots, Q_m)\}$. In other words, $\lexmax_a q_i^l(s,a)$ is the maximal value of $q_i^l$ at $s$ if $a$ is chosen from the actions that lexicographically maximise $q_1^l , \dots,  q_{i-1}^l$, and $\slexmax_a Q_i(s,a)$ is the maximal value of $Q_i$ at $s$ if $a$ is chosen from the actions that lexicographically maximise $Q_1 , \dots,  Q_{i-1}$ with tolerance $\sigma$. 
Note that $\sigma$ does not depend on the time $t$, whereas the tolerance functions ($\tau$) which we defined earlier are allowed to depend on $t$.
A $Q$-value update rule is a \textit{temporal difference} (TD) \textit{update rule} if it has the following form:
\begin{align*}
Q_{i}(s_t, a_t) \gets &(1-\alpha_t(s_t,a_t))\cdot Q_{i}(s, a) + \\ &\alpha_t(s_t,a_t) \cdot (r_{i,t} + \gamma_i \hat{v}_{n,t}(s_{t+1})),
\end{align*}
where $\hat{v}_{n,t}(s)$ estimates the value on $s$ under $R_n$. Note that Lexicographic $Q$-Learning, SARSA, and Expected SARSA all are TD update rules. We use $Q_{i,t}$ to refer to the agent's $Q$-function for the $i$'th reward at time $t$. Note that $q_i^l$ is the \enquote{true} $Q$-function for the $i$'th reward, whereas $Q_{i,t}$ is the agent's estimate of $q_i^l$ at time $t$. We can now finally give the proof.

\begin{proof}
The proof proceeds by induction on the reward signals. For $n\in \{ 0 \dots m\}$, let P($n$) signify that $Q_{1,t} , \dots,  Q_{n,t}$ converge to $q_1^l , \dots,  q_n^l$ as $t \rightarrow \infty$. The base case $\mathrm{P}(0)$ holds vacuously. As for the inductive step $\mathrm{P}(n) \rightarrow \mathrm{P}(n+1)$, let:
\begin{itemize}
    \item $X = S \times A$,
    \item $\zeta_t(s,a) = \alpha_t(s,a)$,
    \item $\beta_t(s,a) = Q_{n+1,t}(s,a) - q_{n+1}^l(s,a)$, 
    \item $F_t(s,a) = r_{n+1,t} + \gamma_{n+1} \hat{v}_{n+1,t+1}(s_{t+1}) - q_{n+1}^l(s,a)$. 
\end{itemize}
Note that $\hat{v}_{n+1,t+1}(s_{t+1})$ comes from the $Q$-value update rule. Now assumptions 1, 2, and 4 of this theorem imply conditions 1--4 in Lemma \ref{lemma:convergence_lemma}. It remains to show that condition 5 holds, which we can do algebraically. If Lexicographic $Q$-Learning is used then let :
$$
\sigma(s,i,Q_{1,t} \dots Q_{m,t}) = \lim_{t' \rightarrow \infty}\tau_Q(s,i,t',Q_{1,t} \dots Q_{m,t}),
$$ 
where $\tau_Q$ is the tolerance function used in the lexicographic $Q$-learning update rule. Otherwise, if SARSA or Expected SARSA is used, let 
$$
\sigma(s,i,Q_{1,t} \dots Q_{m,t}) = \lim_{t' \rightarrow \infty}\tau_\mathcal{B}(s,i,t',Q_{1,t} \dots Q_{m,t}),
$$
where $\tau_\mathcal{B}$ is the tolerance function used in the lexicographic bandit algorithm. 
It is important to note that $\sigma$ is defined as the limit of $\tau_Q$ or $\tau_\mathcal{B}$ when the time parameter \emph{that $\tau_Q$ or $\tau_\mathcal{B}$ takes as input} goes to $\infty$, and that this limit is \emph{not} applied to the other arguments of $\tau_Q$ or $\tau_\mathcal{B}$. In other words, it is not a typo that the $Q_{1,t} \dots Q_{m,t}$ are parameterised by $t$ rather than $t'$ (indeed, we have not yet shown that $\lim_{t \rightarrow \infty} Q_{i,t}$ even exists).
With this noted, we begin as follows:

\begin{align*}
     &\Vert \mathbb{E}\{F_t \mid P_t\} \Vert_\infty \\
     = \max_{s,a} &\Bigg| \mathbb{E}\Big[r_{n+1,t} + \gamma_{n+1} \hat{v}_{n+1,t+1}(s_{t+1}) - q_{n+1}^l(s,a) \Big]\Bigg|\\
     = \max_{s,a}&\Bigg| \mathbb{E}\Big[r_{n+1,t} + \gamma_{n+1} \slexmax_{a'} Q_{n+1,t+1}(s_{t+1},a') \\ & - q_{n+1}^l(s,a) + \gamma_{n+1}\hat{v}_{n+1,t+1}(s_{t+1}) \\
     &- \gamma_{n+1} \slexmax_{a'} Q_{n+1,t+1}(s_{t+1},a')\Big]\Bigg|\\
     \leq \max_{s,a}&\Bigg|\mathbb{E}\Big[r_{n+1,t} + \gamma_{n+1} \slexmax_{a'} Q_{n+1,t+1}(s_{t+1},a') \\
     &- q_{n+1}^l(s,a) \Big]\Bigg| + \\
      \max_{s,a}&\Bigg| \mathbb{E}\Big[\gamma_{n+1} \hat{v}_{n+1,t+1}(s_{t+1})- \\
     &\gamma_{n+1} \slexmax_{a'} Q_{n+1,t+1}(s_{t+1},a')\Big]\Bigg|.
\end{align*}
The second term is bounded above by:
$$
\max_{s}\Bigg| \mathbb{E}\Big[\hat{v}_{n+1,t+1}(s)- \slexmax_a Q_{n+1,t+1}(s,a)\Big]\Bigg|. 
$$
Let $k_t$ denote this expression --- we will argue that $k_t \rightarrow 0$ as $t \rightarrow \infty$. First recall that :
$$
\slexmax_a Q_{n+1,t+1}(s,a) = \max_{a \in \Delta_{s,n+1}^\sigma} Q_{n+1,t+1}(s,a).
$$
If VB-LRL uses Lexicographic $Q$-Learning (with $\tau_Q$) then :
$$
\hat{v}_{n+1,t+1}(s) = \max_{a \in \Delta^{\tau_Q,t+1}_{s,n+1}} Q_{n+1,t+1}(s, a),
$$
and also recall that:
\begin{alignat*}{1}
    &\sigma(s,n+1,Q_{1,t+1} \dots Q_{m,t+1}) \\
    = \lim_{t' \rightarrow \infty}&\tau_Q(s,n+1,t',Q_{1,t+1} \dots Q_{m,t+1}).
\end{alignat*}
If assumption 5 holds, and the inductive assumption holds, then this implies that there is a time $t'$ such that:
$$
\Delta_{s,n+1}^\sigma = \Delta^{\tau_Q,t''}_{s,n+1},
$$
for all $t'' \geq t'$, which also means that $k_{t''} = 0$ for all $t'' \geq t'$. Hence, if VB-LRL uses Lexicographic $Q$-Learning then $k_t \rightarrow 0$ as $t \rightarrow \infty$. 
If instead VB-LRL uses SARSA or Expected SARSA then we have that:
\begin{alignat*}{1}
    &\mathbb{E}\left[\hat{v}_{n+1,t+1}(s)\right] \\
    = &\mathbb{E}_{a \sim \mathcal{B}(Q_{1,t+1}, \dots, Q_{m,t+1},s,t+1)}\left[Q_{n+1,t+1}(s,a)\right],
\end{alignat*}
where $\mathcal{B}$ is the bandit algorithm that is used, 
and:
\begin{alignat*}{1}
    &\sigma(s,n+1,Q_{1,t+1} \dots Q_{m,t+1}) \\
    = \lim_{t' \rightarrow \infty}&\tau_\mathcal{B}(s,n+1,t',Q_{1,t+1} \dots Q_{m,t+1}),
\end{alignat*}
where $\tau_\mathcal{B}$ is the tolerance of $\mathcal{B}$.
From the properties of lexicographic bandit algorithms, we have that:
$$
\lim_{t\to\infty} \Pr(\mathcal{B}(Q_1, \dots, Q_m,s,t) \in \Delta^{\tau_\mathcal{B}}_{s,m}) = 1.
$$
Moreover, since in this case $\sigma = \lim_{t \rightarrow \infty} \tau_\mathcal{B}$, we straightforwardly have that: $\Delta^{\sigma}_{s,n+1} = \Delta^{\tau_\mathcal{B}}_{s,n+1}$. This means that:
$$
\lim_{t'\to\infty} \Pr(\hat{v}_{n+1,t'}(s) = \max_{a \in \Delta_{s,n+1}^\sigma} Q_{n+1,t+1}(s,a)) = 1.
$$
Since the action space $A$ is finite, this also means that
$$
\lim_{t'\to\infty} \mathbb{E}\left[\hat{v}_{n+1,t'}(s) - \max_{a \in \Delta_{s,n+1}^\sigma} Q_{n+1,t+1}(s,a)\right] = 0.
$$
In other words, if VB-LRL uses SARSA or Expected SARSA then $k_t \rightarrow 0$ as $t \rightarrow \infty$. 
This exhausts all cases, so we can conclude that $k_t \rightarrow 0$ as $t \rightarrow \infty$.
We thus have:
\begin{alignat*}{2}
    \Vert& \mathbb{E}\{F_t \mid P_t\} \Vert_\infty \\
    \leq \max_{s,a}\Bigg|&\mathbb{E}\Big[r_{n+1,t} \\
    &+ \gamma_{n+1} \slexmax_{a'} Q_{n+1,t}(s_{t+1},a') \\
    &- q_{n+1}^l(s,a) \Big]\Bigg| + k_t \\
    = \max_{s,a}\Bigg|&\mathbb{E}_{s' \sim T(s,a)} \Big[R_{n+1}(s,a,s') \\
    & + \gamma_{n+1} \slexmax_{a'} Q_{n+1,t}(s',a') \\
    & - R_{n+1}(s,a,s') - \gamma_{n+1} \lexmax_{a'} q_{n+1}^l(s',a') \Big]\Bigg|\\
    &+ k_t\\
    = \max_{s,a}\Bigg|& \mathbb{E}_{s' \sim T(s,a)} \Big[ \slexmax_{a'} Q_{n+1,t}(s',a')\\
    & - \lexmax_{a'} q_{n+1}^l(s',a') \Big] \Bigg| \cdot \gamma_{n+1} + k_t.
\end{alignat*}
Recall that:
$$
\lexmax_a q_{n+1}^l(s,a) = \max_{a \in \Delta_{s,n+1}} q_{n+1}^l(s,a),
$$
$$
\slexmax_a Q_{n+1,t}(s,a) = \max_{a \in \Delta_{s,n+1}^\sigma} Q_{n+1,t}(s,a).
$$
The inductive assumption states that $Q_{1,t} , \dots,  Q_{n,t}$ converge to $q_1^l , \dots,  q_n^l$ as $t \rightarrow \infty$. 
This, together with assumption 5, implies that there is a time $t'$ such that $\Delta_{s,n+1} = \Delta_{s,n+1}^\sigma$ for all $t'' \geq t'$.
This in turn implies that :
\begin{alignat*}{1}
  &\slexmax_{a} Q_{n+1,t''}(s,a) - \lexmax_{a} q_{n+1}^l(s,a) \\
  \leq &\max_a | Q_{n+1,t''}(s,a) - q_{n+1}^l(s,a) |,
\end{alignat*}
for all $t'' \geq t'$. We can therefore continue as follows:
\begin{align*}
    \leq & \gamma_{n+1} \max_{s,a}\Bigg| \mathbb{E}_{s' \sim T(s,a)}\Big[ \max_{a'} \Big| Q_{n+1,t}(s',a') \\
    & - q_{n+1}^l(s',a') \Big|
    + d_t(s') \Big] \Bigg| + k_t\\
    \leq & \gamma_{n+1} \max_{s,a} \mathbb{E}_{s' \sim T(s,a)} \Big[ \max_{a'} \Big| \beta(s', a')\Big|\Big] + c_t, 
\end{align*}
where $c_t = \max_s d_t(s) + k_t$, and $d_t(s)$ is an error correction term given by: 
\begin{align*}
    d_t(s) = \max \big(0, &\slexmax_{a} Q_{n+1,t}(s,a) \\
    &- \lexmax_{a} q_{n+1}^l(s,a)\\ 
    &- \max_a | Q_{n+1,t}(s,a) - q_{n+1}^l(s,a)|\big). 
\end{align*}
Note that $d_t(s)$ is $0$ after $t'$, so $c_t \rightarrow 0$ as $t \rightarrow \infty$. Now if $\gamma_{n+1} < 1$ we furthermore have:
\begin{align*}
    \leq & \gamma_{n+1} \max_{s,a}\left| \beta(s,a) \right| + c_t = \gamma_{n+1} \Vert \beta_t \Vert_\infty + c_t. 
\end{align*}
This means that:
$$
\Vert \mathbb{E}\{F_t \mid P_t\} \Vert_\infty \leq \gamma_{n+1} \Vert \beta_t \Vert_\infty + c_t, 
$$
where $\gamma_{n+1} \in [0,1)$ and $c_t \rightarrow 0$ as $t \rightarrow \infty$. Thus by Lemma~\ref{lemma:convergence_lemma} we have that $Q_{n+1}$ converges to $q_{n+1}^l$, and so $\mathrm{P}(n+1)$ holds.

Now suppose $\gamma_{n+1} = 1$, and that every policy leads to a terminal state with probability one. For this case we will need to use a specific maximum norm. Let:
\begin{itemize}
    \item $S_1$ be the set of terminal states in $\mathcal{M}$,
    \item $S_{i+1} = \{s \not\in S_1 \cup \dots \cup S_i \mid \forall a\in A: \\ \mathbb{P}\left[T(s,a) \in S_1 \cup \dots \cup S_i\right] > 0 \}$,
    \item $\eta$ be the largest integer such that $S_\eta \neq \varnothing$,
    \item $S_{a;b} = S_a \cup S_{a+1} \cup \dots \cup S_b$ (where $a,b \in \mathbb{N}$ with $a < b$),
    \item $d : S \rightarrow \mathbb{N}$ be the function such that $d(s) = i$ if $s \in S_i$,
    \item $\epsilon = \min_{\rho \in \{2 , \dots,  \eta\}} \min_{a \in A} \min_{s \in S_d} \\  \mathbb{P}\left[T(s,a) \in S_1 \cup \dots \cup S_{\rho-1}\right]$,
    \item If $\epsilon = 1$ then $w_i = i$, else $w_1 = 1$,
            and $w_{i+1} = 0.5\left(w_i + \min_{j\in \{1 , \dots,  d\}} \\ \left( w_j + \frac{\epsilon}{1-\epsilon}(w_j - w_{j-1})\right) \right)$,
    \item $\Vert \cdot \Vert^W_\infty$ be the maximum norm over $S \times A$ where the weight of $\langle s,a \rangle$ is $w_{d(s)}$.
\end{itemize}
Intuitively, $S_1 , \dots,  S_\eta$ represent different \enquote{distances} to a terminal state. For example, if $s \in S_i$ and an agent is at $s$ then the agent will reach a terminal state with positive probability after $i-1$ steps, regardless of which actions it selects. The constant $\eta$ gives the maximal \enquote{distance}, which must be finite if every policy leads to a terminal state with probability one. The constant $\epsilon$ gives a (positive) lower bound on the probability of moving \enquote{closer} to a terminal state after taking an action, and $w_1 , \dots,  w_\eta$ are the weights we will use to define our maximum norm. 

By straightforward induction we have that $w_1 < \dots < w_\eta$ and that:
\begin{equation*}
    \left(\frac{w_\eta}{w_i}\right)(1-\epsilon) + \left(\frac{w_{i-1}}{w_i}\right)\epsilon < 1. 
\end{equation*}
Let $\kappa = \max_{i \in \{1 , \dots,  \eta\}} \left(\frac{w_\eta}{w_i}\right)(1-\epsilon) + \left(\frac{w_{i-1}}{w_i}\right)\epsilon$. We can now go back to the inequality:
\begin{align*}
    & \gamma_{n+1} \max_{s,a} \sum_{s' \in S} \mathbb{P}\left[T(s,a) = s'\right]\max_{a'} \Big| \beta(s', a')\Big| + c_t\\
    \leq & \max_{s,a} \sum_{s' \in S} \mathbb{P}\left[T(s,a) = s'\right]\max_{a'} \Big| \beta(s', a')\Big| \\
    &\cdot w_{d(s')} \left(\frac{w_{d(s')}}{w_{d(s')}}\right) + c_t\\
    \leq & \max_{s,a} \left( w_{d(s)}\Big|\beta(s, a)\Big| \right) \cdot \\
    &\max_{s,a} \Bigg[ \sum_{s' \in S} \mathbb{P}\left[T(s,a) = s'\right] \left(\frac{w_{d(s')}}{w_{d(s')}}\right) \Bigg] + c_t. 
\end{align*}
By factoring the sum, and using the fact that $w_1 < \dots < w_\eta$, we obtain: 
\begin{alignat*}{2}
    \leq & \Vert \beta_t \Vert^W_\infty \cdot \max_{s,a} \Bigg[ &&\sum_{s' \in S_{1;d(s)-1}} \mathbb{P}\left[T(s,a) = s'\right] \left(\frac{w_{d(s')-1}}{w_{d(s')}}\right) \\
    & &&+ \sum_{s' \in S_{d(s);\eta}} \mathbb{P}\left[T(s,a) = s'\right] \left(\frac{w_\eta}{w_{d(s')}}\right) \Bigg]\\
    &+ c_t\\
    = & \Vert \beta_t \Vert^W_\infty \cdot \max_{s,a} \Bigg[ &&\sum_{s' \in S_{1;d(s)-1}} \mathbb{P}\left[T(s,a) = s'\right] \\
    & &&\left(\frac{w_{d(s')-1}}{w_{d(s')}} - \frac{w_\eta}{w_{d(s')}}\right)\\
    & &&+ \sum_{s' \in S} \mathbb{P}\left[T(s,a) = s'\right] \left(\frac{w_\eta}{w_{d(s')}}\right) \Bigg] + c_t\\
    \leq & \Vert \beta_t \Vert^W_\infty \cdot \max_{s,a} \Bigg[ && \epsilon \left(\frac{w_{d(s')-1}}{w_{d(s')}} - \frac{w_\eta}{w_{d(s')}}\right) + \left(\frac{w_\eta}{w_{d(s')}}\right) \Bigg]\\
    & + c_t\\
    \leq & \Vert \beta_t \Vert^W_\infty \cdot \kappa + c_t. &&
\end{alignat*}
This means that:
$$
\Vert \mathbb{E}\{F_t \mid P_t\} \Vert_\infty \leq \kappa \Vert \beta_t \Vert^W_\infty + c_t, 
$$
where $\kappa \in [0,1)$ and $c_t \rightarrow 0$ as $t \rightarrow \infty$. Thus Lemma~\ref{lemma:convergence_lemma} implies that $Q_{n+1}$ converges to $q_{n+1}^l$, so $\mathrm{P}(n+1)$ holds. 

This exhausts all cases, which means that the inductive step holds, and so $Q_1 , \dots,  Q_m$ must all converge to lexicographically optimal values. From the properties of lexicographic bandit algorithms, and assumption 5, we then see that LRL must converge to a lexicographically optimal policy. 
\end{proof}

We next provide a separate proof that VB-LRL converges to a lexicographically optimal policy if it is used with Lexicographic Double $Q$-Learning.
\begin{lemma}
\label{lemma:double_q_learning}
In any MDP, if conditions 1--4 in Theorem~\ref{thm:convergence} are satisfied then Double $Q$-Learning converges to an optimal policy.
\end{lemma}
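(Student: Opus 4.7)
The plan is to follow van Hasselt's original convergence argument for Double $Q$-Learning, casting it into the framework of Lemma~\ref{lemma:convergence_lemma} that is already deployed in the proof of Theorem~\ref{thm:convergence}. Since this lemma concerns a single-reward MDP, I write $q^*$ for the unique optimal $Q$-function and aim to show that the error processes $\beta^A_t \coloneqq Q^A_t - q^*$ and $\beta^B_t \coloneqq Q^B_t - q^*$ both converge to $0$ almost surely; convergence of the greedy policy with respect to $\tfrac{1}{2}(Q^A_t + Q^B_t)$ to an optimal policy then follows immediately. Applying Lemma~\ref{lemma:convergence_lemma} separately to each process with $\zeta_t(s,a) = \alpha_t(s,a)$ on the coordinate actually updated (and zero elsewhere), conditions 1--3 of the lemma follow from assumptions 1 and 4 directly, and condition 4 (bounded conditional variance of the noise) follows from assumption 2 together with an easy induction showing that $Q^A_t$ and $Q^B_t$ stay uniformly bounded.

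The substantive step is verifying condition 5 of Lemma~\ref{lemma:convergence_lemma}, namely a bound of the form $\|\mathbb{E}[F^A_t \mid P_t]\|_\infty \leq \gamma \|\beta^A_t\|_\infty + c_t$ with $c_t \to 0$. Writing $\hat a \coloneqq \argmax_{a'} Q^A_t(s_{t+1},a')$ and $a^* \coloneqq \argmax_{a'} q^*(s_{t+1},a')$, the target used when updating $Q^A$ is $r_t + \gamma Q^B_t(s_{t+1}, \hat a)$, so the conditional expectation of $F^A_t$ splits into a term $\gamma \mathbb{E}[Q^B_t(s_{t+1},\hat a) - q^*(s_{t+1},\hat a)]$ of magnitude at most $\gamma \|\beta^B_t\|_\infty$ and a term $\gamma \mathbb{E}[q^*(s_{t+1},\hat a) - q^*(s_{t+1},a^*)]$ which is nonpositive but of magnitude up to $2\gamma \|\beta^A_t\|_\infty$ (since $\hat a$ and $a^*$ can differ only by an error governed by $\beta^A_t$). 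A direct triangle-inequality bound thus yields $\gamma \|\beta^B_t\|_\infty + 2\gamma \|\beta^A_t\|_\infty$, which is not a contraction in $\beta^A_t$ alone. This cross-coupling between the two tables is the principal obstacle.

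To dispose of it I would first establish the auxiliary result $\delta_t \coloneqq Q^A_t - Q^B_t \to 0$ almost surely, via a separate application of Lemma~\ref{lemma:convergence_lemma} which exploits the fact that, whichever of $Q^A$ or $Q^B$ is chosen for updating at each step, the target it is driven toward is built from the \emph{other} table, producing a contractive expected drift on $\delta_t$ under assumption 4 (every $(s,a)$ visited infinitely often). Once $\delta_t \to 0$, I may rewrite $Q^B_t(s_{t+1},\hat a) = Q^A_t(s_{t+1},\hat a) - \delta_t(s_{t+1},\hat a) = \max_{a'} Q^A_t(s_{t+1},a') - \delta_t(s_{t+1},\hat a)$, so the conditional expectation of $F^A_t$ reduces to the standard $Q$-Learning expression plus an error that vanishes almost surely and can be absorbed into $c_t$. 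Condition 5 then holds with $\kappa = \gamma$, Lemma~\ref{lemma:convergence_lemma} delivers $\beta^A_t \to 0$, and the same argument symmetrically gives $\beta^B_t \to 0$. The terminating case $\gamma = 1$ is handled exactly as in the proof of Theorem~\ref{thm:convergence}, by switching to the weighted maximum norm built from the terminal-distance stratification $S_1, S_2, \dots, S_\eta$; since no new interaction between the two tables enters, the same absorption of $\delta_t$ into $c_t$ goes through unchanged.
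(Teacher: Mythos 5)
The paper does not prove this lemma itself---its ``proof'' is a one-line citation to van Hasselt (2010)---and your proposal is essentially a correct reconstruction of that cited argument: the auxiliary step establishing $Q^A_t - Q^B_t \to 0$ via the same stochastic-approximation lemma, followed by absorbing the resulting vanishing discrepancy into the $c_t$ term of condition 5 (precisely because the naive bound $\gamma\Vert\beta^B_t\Vert_\infty + 2\gamma\Vert\beta^A_t\Vert_\infty$ is not a contraction), is exactly van Hasselt's device for breaking the cross-coupling between the two tables. Your explicit treatment of the $\gamma = 1$ episodic case via the weighted maximum norm is a small addition needed to cover the lemma's stated hypotheses beyond the original reference, but otherwise your route coincides with the one the paper implicitly relies on.
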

\begin{proof}
See \cite{vanHasselt2010}.
\end{proof}

\begin{theorem}
In any MOMDP $\mathcal{M}$, if VB-LRL is used with Lexicographic Double $Q$-Learning and conditions 1--5 in Theorem~\ref{thm:convergence} are satisfied then it will converge to a lexicographically optimal policy.
\end{theorem}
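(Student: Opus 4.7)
The plan is to mirror the inductive structure used in the proof of Theorem~\ref{thm:convergence} and reduce the analysis of $Q_{n+1}$ to an application of Lemma~\ref{lemma:double_q_learning} on a suitably restricted MDP. Concretely, for $n \in \{0, \dots, m\}$ let P($n$) be the assertion that both $Q^A_{1,t}, \dots, Q^A_{n,t}$ and $Q^B_{1,t}, \dots, Q^B_{n,t}$ (and hence the averaged $Q_{i,t}$) converge to the lexicographically optimal values $q_1^l, \dots, q_n^l$. The base case P($0$) is vacuous.

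For the inductive step, the key observation is the same one used in the proof of Theorem~\ref{thm:convergence}: by the inductive hypothesis together with condition 5, there exists a (random but almost surely finite) time $t'$ after which the slack sets $\Delta^{\tau_Q,t}_{s,n+1}$ computed by the update rule coincide with the true lexicographically optimal set $\Delta_{s,n+1}$ at every state $s$. After time $t'$, the Lexicographic Double $Q$-Learning update for the $(n+1)^\text{th}$ reward is \emph{exactly} the standard Double $Q$-Learning update on the MDP $\mathcal{M}_{n+1}$ obtained from $\mathcal{M}$ by restricting, at each state $s$, the available actions to $\Delta_{s,n+1}$ and using reward function $R_{n+1}$. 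Since conditions 1--4 of Theorem~\ref{thm:convergence} are inherited by $\mathcal{M}_{n+1}$ (finiteness and boundedness are preserved under the restriction, and the learning-rate conditions hold pointwise for each $(s,a)$), Lemma~\ref{lemma:double_q_learning} applied to $\mathcal{M}_{n+1}$ guarantees that both $Q^A_{n+1,t}$ and $Q^B_{n+1,t}$ converge almost surely to the optimal $Q$-function of $\mathcal{M}_{n+1}$, which coincides with $q_{n+1}^l$ on the restricted action set. This establishes P($n+1$). Iterating through all $m$ rewards and invoking the defining property of the lexicographic bandit algorithm $\mathcal{B}$ together with condition 5 (so that $\Delta^{\tau_\mathcal{B}}_{s,m} = \Delta_{s,m}$ in the limit) then yields that the returned policy is lexicographically optimal.

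The main obstacle is justifying that the pre-$t'$ transient does not disrupt convergence. In that interval, the sets $\Delta^{\tau_Q,t}_{s,n+1}$ may differ from $\Delta_{s,n+1}$, so the updates to $Q^A_{n+1}$ and $Q^B_{n+1}$ are with respect to a time-varying, incorrect restriction and can produce arbitrary finite values. However, Lemma~\ref{lemma:double_q_learning} holds for any bounded initial $Q$-values as long as the learning-rate tail conditions $\sum_{t \geq t'} \alpha_t(s,a) = \infty$ and $\sum_{t \geq t'} \alpha_t(s,a)^2 < \infty$ remain satisfied, which follows from condition 4 since $t'$ is almost surely finite. Thus one can condition on the event $\{t' < \infty\}$ (which has probability one) and apply Lemma~\ref{lemma:double_q_learning} to the shifted process $\{Q^A_{n+1,t+t'}, Q^B_{n+1,t+t'}\}_{t \geq 0}$, treating the values at time $t'$ as (random but bounded) initial conditions. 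A minor technical point to verify is that the action selected for the $\argmax$ inside the Double $Q$-Learning update lies in the (correct) restricted set after $t'$, which again follows from condition 5 and the inductive hypothesis, so no action outside $\Delta_{s,n+1}$ is ever used to bootstrap $Q^A_{n+1}$ or $Q^B_{n+1}$ beyond time $t'$.
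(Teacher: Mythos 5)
Your proposal is correct and follows essentially the same route as the paper's proof: induction on the reward index, using condition 5 plus the inductive hypothesis to find a time $t'$ after which the slack sets coincide with $\Delta_{s,n+1}$, and then invoking the Double $Q$-Learning convergence result on the action-restricted MDP. Your explicit handling of the pre-$t'$ transient (treating the values at $t'$ as bounded initial conditions and checking the learning-rate tail conditions) makes precise what the paper compresses into the phrase \enquote{starting from a particular initialisation}, but it is the same argument.
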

\begin{proof}
We prove this by induction on the reward signals. For $n \in \{0 \dots m\}$, let P($n$) signify that $Q^A_{1,t}, \dots, Q^A_{n,t}$ and $Q^B_{1,t}, \dots, Q^B_{n,t}$ converge to $q_1^l, \dots, q_n^l$ as $t \rightarrow \infty$. 


The base case P($0$) is vacuously true.
For the inductive step $\mathrm{P}(n) \rightarrow \mathrm{P}(n+1)$
we have that assumption 5 and the inductive assumption together imply that there is a time $t'$ such that for all $t'' \geq t'$ we have that:
$$
\Delta^{\tau,t''}_{s,n+1} = \Delta_{s,n+1}.
$$
Let $\mathcal{M}^-$ be the MOMDP that is created if in each state of $\mathcal{M}$ the action set is restricted to $\Delta_{s,n+1}$. Note that after time $t'$ the agent will update $Q^A_{n+1,t}$ and $Q^B_{n+1,t}$ like a Double $Q$-Learning agent with reward $R_{n+1}$ running in $\mathcal{M}^-$ (starting from a particular initialisation).
Thus, by Lemma~\ref{lemma:double_q_learning}, $Q^A_{n+1,t}$ and $Q^B_{n+1,t}$ converge to the optimal $Q$-values for $R_{n+1}$ in $\mathcal{M}^-$. These values correspond to the lexicographically optimal values for $R_{n+1}$ in $\mathcal{M}$, and so $\mathrm{P}(n) \rightarrow \mathrm{P}(n+1)$ holds.
We thus have that $Q_{1,t}, \dots, Q_{m,t}$ converge to $q_1^l, \dots, q_m^l$ as $t \rightarrow \infty$. Assumption 5 then implies that Lexicographic Double $Q$-learning converges to a lexicographically optimal policy. This completes the proof.
\end{proof}

\section{VB-LRL Tolerance}

In this section, we prove a formal guarantee about the limit behaviour of VB-LRL with respect to its two most prioritised reward signals, which can be obtained without any prior knowledge of the reward functions. We also further comment on the problem of selecting the tolerance parameters. 

We begin by arguing that although the convergence of $\tau_\mathcal{B}$ and $\tau_Q$ to $0$ is not technically sufficient to guarantee convergence to a lexicographically optimal policy, such a policy should be found as long as these tolerances approach $0$ sufficiently slowly. The reason for this insufficiency is that if $\tau_\mathcal{B}$ and $\tau_Q$ approach $0$ faster than $Q_1, \dots, Q_m$ approach their limit values, then this might lead the sets $\Delta^{\tau,t}_{s,i}$ -- which are quantified over in the $Q$-value update rule -- to forever remain too small, after a certain time. This is because we might have $q_i^l(s,a_1) = q_i^l(s,a_2)$ and a tolerance that shrinks faster than $Q_{i,t}(s,a_1) - Q_{i,t}(s,a_2)$; in that case, $a_1$ or $a_2$ might erroneously be considered suboptimal according to $R_i$ during the updating of $Q_{i+1}, \dots, Q_m$. 

If the tolerance parameters decrease more slowly than the $Q_i$-values converge, however, then this will not happen. It should therefore be possible to reliably obtain convergence in practice by simply letting $\tau_\mathcal{B}$ and $\tau_Q$ decrease to $0$ very slowly.\footnote{Alternatively, one could perhaps design a scheme whereby the rate at which $\tau_\mathcal{B}$ and $\tau_Q$ decrease depends on the rate at which $Q_1, \dots, Q_m$ change. We conjecture that such a scheme could be used to derive a formal guarantee of VB-LRL's convergence to a lexicographically optimal policy with high probability, but in this work we did not investigate this possibility due to the extra computational and theoretical overhead it would introduce.} Moreover, we also expect VB-LRL to in practice be likely to converge to a policy that is approximately lexicographically optimal as long as the limit values of $\tau_\mathcal{B}$ and $\tau_Q$ are small (but positive). Evidence for this is provided by Proposition \ref{thm:VB-LRL_with_arbitrary_slack}, for which we require the following lemma.



\begin{lemma}
\label{lemma:slack_optimisation_bound}
Let $\pi$ and $\pi'$ be two policies in an MDP $\mathcal{M} = \langle S,A,T,I,R,\gamma \rangle$, and let $q_{\pi}(s, \pi'(s)) \geq \max_a q_{\pi}(s,a) - \delta$ for all $s \in S$. Then $J(\pi') \geq J(\pi) - \frac{\delta}{1-\gamma}$. Moreover, this bound is tight.
\end{lemma}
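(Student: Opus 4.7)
The plan is to invoke the Performance Difference Lemma of Kakade and Langford (2002), which expresses
\[
J(\pi') - J(\pi) = \sum_{t=0}^\infty \gamma^t \, \mathbb{E}_{\pi', I}\!\left[ a_\pi(s_t, a_t) \right],
\]
where the expectation is over trajectories generated by $\pi'$ starting from the initial distribution $I$, and $a_\pi(s,a) = q_\pi(s,a) - v_\pi(s)$ is the $\pi$-advantage. This reduces a global comparison between $J(\pi)$ and $J(\pi')$ to a pointwise comparison of advantages along $\pi'$-trajectories, which is exactly what the hypothesis lets us control.

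Next, I would derive a uniform lower bound on the integrand. For any $s$, interpreting $\pi'(s)$ as a distribution over actions and writing $a_\pi(s, \pi'(s)) = q_\pi(s, \pi'(s)) - v_\pi(s)$, the hypothesis gives $q_\pi(s, \pi'(s)) \geq \max_a q_\pi(s,a) - \delta$. Combining this with the elementary inequality $v_\pi(s) = \mathbb{E}_{a \sim \pi(\cdot\mid s)}[q_\pi(s,a)] \leq \max_a q_\pi(s,a)$ yields $a_\pi(s, \pi'(s)) \geq -\delta$ pointwise. Plugging this into the Performance Difference Lemma gives
\[
J(\pi') - J(\pi) \geq -\delta \sum_{t=0}^\infty \gamma^t = -\frac{\delta}{1-\gamma},
\]
which is the required inequality.

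For tightness, I would exhibit a minimal example. Let $\mathcal{M}$ have a single non-terminal state $s_0$ and two actions $a_1, a_2$ that both self-loop at $s_0$ and yield rewards $0$ and $-\delta$ respectively; let $\pi$ always take $a_1$ and $\pi'$ always take $a_2$. A direct computation gives $v_\pi(s_0) = 0$, $q_\pi(s_0, a_1) = 0$ and $q_\pi(s_0, a_2) = -\delta$, so the hypothesis holds with equality at $s_0$; meanwhile $J(\pi) - J(\pi') = \delta/(1-\gamma)$, saturating the bound.

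I do not anticipate a serious obstacle: the argument is a clean application of the Performance Difference Lemma, and the only non-routine observation is that the hypothesis is in fact slightly stronger than needed (it suffices to have $q_\pi(s, \pi'(s)) \geq v_\pi(s) - \delta$, and the gap between $v_\pi(s)$ and $\max_a q_\pi(s,a)$ is thrown away). The $(1-\gamma)^{-1}$ blow-up factor enters exactly once, through summing the geometric series of discount weights, which is precisely what the tight example exploits by compounding a constant per-step loss of $\delta$ forever.
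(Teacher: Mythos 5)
Your proof is correct. It reaches the same conclusion as the paper but by a different route: you invoke the Performance Difference Lemma of Kakade and Langford to write $J(\pi') - J(\pi) = \sum_{t=0}^\infty \gamma^t \, \mathbb{E}_{\pi',I}[a_\pi(s_t,a_t)]$ and then bound the $\pi$-advantage of $\pi'$ pointwise below by $-\delta$, whereas the paper argues directly by induction over the hybrid non-stationary policies $\pi_n$ that switch from one policy to the other after $n$ steps, telescoping the per-step loss of at most $\gamma^n\delta$ and passing to the limit. The two arguments are cousins --- the Performance Difference Lemma is itself usually proved by exactly such a telescoping --- but yours outsources that step to a standard cited result and is cleaner and more self-evidently correct as written (the paper's definition of $\pi_n$, taken literally, has the roles of $\pi$ and $\pi'$ in a slightly confusing order for the claimed limit $\lim_n J(\pi_n) = J(\pi')$). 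Your observation that the hypothesis is stronger than needed --- only $q_\pi(s,\pi'(s)) \geq v_\pi(s) - \delta$ is used, since $v_\pi(s) \leq \max_a q_\pi(s,a)$ --- is a genuine small bonus of your decomposition that the paper's presentation does not surface. Your tightness example is, up to a constant shift of the reward by $\delta$, identical to the paper's single-state two-action self-loop construction, and both correctly saturate the bound.
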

\begin{proof}
Let $\pi_n$ be the (non-stationary) policy that selects its first $n$ actions according to $\pi$, and all its subsequent actions according to $\pi'$. By a straightforward inductive argument we have that $J(\pi_n) \geq J(\pi) - \sum_{i=1}^n \gamma ^n \delta$, for all $n \in \mathbb{N}$. Then, since $\lim_{n \rightarrow \infty} J(\pi_n) = J(\pi')$ and $\lim_{n \rightarrow \infty} J(\pi) - \sum_{i=1}^n \gamma^n\delta = J(\pi) - \frac{\delta}{1-\gamma}$, we have that $J(\pi') \geq J(\pi) - \frac{\delta}{1-\gamma}$.

To see that the bound is tight, consider the MDP $\mathcal{M} = \langle \{s\},\{a_1,a_2\},T,I,R,\gamma \rangle$ where $R(s,a_1,s) = \delta$, $R(s,a_2,s) = 0$, $T(s,\cdot) = s$, $I(s) = 1$, and the policies $\pi : s \mapsto a_1$ and $\pi' : s \mapsto a_2$.
\end{proof}

\begin{proposition}
\label{thm:VB-LRL_with_arbitrary_slack}
Suppose VB-LRL either uses SARSA or Expected SARSA with any lexicographic bandit algorithm with tolerance $\tau_\mathcal{B}$, or that it uses Lexicographic $Q$-Learning and Lexicographic $\epsilon$-Greedy with tolerance $\tau_Q$ and $\tau_\mathcal{B}$. Moreover, suppose that with probability one there exists a $t \in \mathbb{N}$ such that:
$$
\tau_\mathcal{Q}(s,i,t',Q_1, \dots, Q_m) \leq \tau_\mathcal{B}(s,i,t'+1,Q_1, \dots, Q_m),
$$
for all $t' \geq t$, and further that conditions 1--4 in Theorem~\ref{thm:convergence} are satisfied. Then, in any MOMDP, we have that:

\begin{enumerate}
    \item $J_1(\pi^*) - J_1(\pi_t) \leq \frac{\max \tau_\mathcal{B}^1}{1-\gamma} - \lambda_t$, for some sequence $\{\lambda_t\}_{t \in \mathbb{N}}$ such that $\lim_{t \to \infty} \lambda_t = 0$,  
    \item $J_2(\pi^*) - J_2(\pi_t) \leq \frac{\max \tau_\mathcal{B}^2}{1-\gamma} - \eta_t$, for some sequence $\{\eta_t\}_{t \in \mathbb{N}}$ such that $\lim_{t \to \infty} \eta_t = 0$,
\end{enumerate}
where $\pi^*$ is a lexicographically optimal policy, 
and $\max \tau_\mathcal{B}^i$ is $\max_{s,Q_1, \dots, Q_m} \lim_{t \rightarrow \infty} \tau_\mathcal{B}(s,i,t,Q_1, \dots, Q_m)$.
\end{proposition}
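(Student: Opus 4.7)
The plan is to exploit the fact that for the first reward, the tolerance parameters do not actually affect the $Q$-value update rule, so that standard convergence results apply to $Q_1$ directly; the limit policy is then approximately greedy with respect to $Q_1$, and we can invoke Lemma~\ref{lemma:slack_optimisation_bound} to obtain the bound on $J_1$. The bound on $J_2$ follows from a related but more delicate argument in which the assumption $\tau_Q \leq \tau_\mathcal{B}$ is used to ensure that the action sets $\Delta^{\tau_Q,t}_{s,1}$ appearing in the $Q_2$-update agree in the limit with those used by the bandit algorithm.

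First, I would observe that for $i = 1$ the set $\Delta^{\tau_Q,t}_{s,1}$ in the update rule always contains $\argmax_a Q_1(s,a)$, so $\max_{a \in \Delta^{\tau_Q,t}_{s,1}} Q_1(s,a) = \max_a Q_1(s,a)$ and the $Q_1$-update reduces to standard $Q$-Learning, SARSA, or Expected SARSA. Under conditions 1--4, known convergence results (see the proof of Theorem~\ref{thm:convergence}) imply that $Q_1$ converges almost surely to $q_1^*$, the optimal $Q$-function for $R_1$ (in the $Q$-Learning case), or to the $Q$-function of the limit policy $\pi_\infty$ (in the SARSA / Expected SARSA case, using the GLIE-like behaviour induced by the lexicographic bandit algorithm). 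Moreover, by definition of a lexicographic bandit algorithm, $\mathcal{B}$ selects actions in $\Delta^{\tau_\mathcal{B}}_{s,m} \subseteq \Delta^{\tau_\mathcal{B}}_{s,1}$ with probability tending to one.

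For the $J_1$-bound, in the $Q$-Learning case I would apply Lemma~\ref{lemma:slack_optimisation_bound} with $\pi$ a greedy policy for $q_1^*$ and $\pi' = \pi_t$: the near-greediness of the bandit algorithm together with convergence of $Q_1$ yields $q_\pi(s, \pi'(s)) \geq \max_a q_\pi(s,a) - \tau_\mathcal{B}^1 - \lambda'_t$ with $\lambda'_t \to 0$, and the lemma then gives the desired inequality modulo a vanishing $\lambda_t$. For SARSA / Expected SARSA the argument instead uses the performance-difference identity: since $\pi_t$ is asymptotically $\tau_\mathcal{B}^1$-greedy with respect to its own $Q$-function, each advantage $a^{\pi_t}_1(s,a)$ is at most $\tau_\mathcal{B}^1 + o(1)$, and averaging over the discounted visitation distribution of $\pi^*$ gives the same bound. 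The $J_2$-bound follows by an analogous but more involved argument. The assumption that $\tau_Q(\cdot, \cdot, t', \cdot) \leq \tau_\mathcal{B}(\cdot, \cdot, t'+1, \cdot)$ for all sufficiently large $t'$ ensures that, in the limit, the restriction set $\Delta^{\tau_Q,t}_{s,1}$ used in the $Q_2$-update contains every action the bandit algorithm may ever select. Thus $Q_2$ converges to the $Q$-function (optimal or on-policy, as appropriate) of the MDP restricted to $\lim_t \Delta^{\tau_Q,t}_{s,1}$; because this restricted MDP contains $\pi^*$ among its allowed policies, its optimal $R_2$-value is at least $J_2(\pi^*)$. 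The limit policy is $\tau_\mathcal{B}^2$-greedy with respect to this $Q_2$, and a second application of Lemma~\ref{lemma:slack_optimisation_bound} (or the corresponding performance-difference argument) gives $J_2(\pi^*) - J_2(\pi_t) \leq \max \tau_\mathcal{B}^2 / (1-\gamma) - \eta_t$ for some $\eta_t \to 0$.

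The main obstacle is the SARSA / Expected SARSA case, where the limit of $Q_i$ depends on the limit policy itself, forcing us to argue via a performance-difference identity rather than directly through Lemma~\ref{lemma:slack_optimisation_bound}. A secondary difficulty is the propagation of tolerance from $Q_1$ into $Q_2$: even though $\lim_t \Delta^{\tau_Q,t}_{s,1}$ need not equal $\argmax_a q_1^*(s,a)$, the assumption $\tau_Q \leq \tau_\mathcal{B}$ keeps the restricted MDP large enough to still contain a lexicographically optimal policy, which is precisely what lets the $J_2$-bound come out in terms of $\tau_\mathcal{B}^2$ alone.
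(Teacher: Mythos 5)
Your overall architecture matches the paper's (sketch) proof: the $Q_1$-recursion is unaffected by the tolerance, so $Q_{1,t} \to q_1^l$ and the first bound follows from the near-greediness of the bandit plus Lemma~\ref{lemma:slack_optimisation_bound}; the second bound comes from arguing that $Q_{2,t}$ is eventually bounded below by the $Q$-function of an MDP whose action sets still contain the exact lexicographic maximisers of $q_1^l$, and applying the same lemma again. Your handling of the SARSA/Expected SARSA case via a performance-difference identity is a reasonable way to fill in a step the paper leaves implicit.

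There is, however, a concrete error in how you use the hypothesis $\tau_\mathcal{Q}(s,i,t',\cdot) \leq \tau_\mathcal{B}(s,i,t'+1,\cdot)$. A smaller tolerance produces a \emph{smaller} action set, so this hypothesis yields $\Delta^{\tau_Q,t}_{s,1} \subseteq \Delta^{\tau_\mathcal{B}}_{s,1}$ in the limit: the restriction set used in the $Q_2$-update is \emph{contained in} the bandit's candidate set, not the other way around, and the bandit may well select actions outside $\Delta^{\tau_Q,t}_{s,1}$. Your claim that the restriction set ``contains every action the bandit algorithm may ever select'' is therefore false, and the two facts you need come from different hypotheses than you indicate: (i) the restricted MDP contains $\pi^*$ because $\tau_Q > 0$ and $Q_{1,t} \to q_1^l$ force $\Delta^{\tau_Q,t}_{s,1}$ to eventually contain all exact maximisers of $q_1^l$ --- this has nothing to do with the relation between $\tau_Q$ and $\tau_\mathcal{B}$; and (ii) the relation $\tau_Q \leq \tau_\mathcal{B}$ is needed precisely so that the bandit's candidate set is a \emph{superset} of the update's restriction set, which is what guarantees that the selected action $a$ satisfies $Q_2(s,a) \geq \max_{a' \in \Delta^{\tau_Q,t}_{s,1}} Q_2(s,a') - \tau_\mathcal{B}$, i.e.\ that the value to which $Q_2$ converges is actually attainable by the limit policy. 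Under your reversed containment the maximum that the bandit approximates could fall short of $\max_{a' \in \Delta^{\tau_Q,t}_{s,1}} Q_2(s,a')$ by an arbitrary amount, and the $J_2$ bound would not follow. The fix is local --- swap the containment and re-derive step (ii) as above --- but as written the justification of the second bound does not go through.
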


\begin{proof}[Sketch]
The updating of $Q_{1,t}$ is unaffected by the slack parameter, so $Q_{1,t}$ converges to $q_1^l$. This, together with Lemma~\ref{lemma:slack_optimisation_bound} and the properties of lexicographic bandit algorithms, implies the first point of the proposition.

Since $Q_{1,t}$ converges to $q_1^l$, we have that there is a time $t'$ such that $\Delta_{s,1}^{\tau_Q,t''} \subseteq \Delta_{s,1}$ for all $s$ and all $t'' \geq t'$. This means that eventually $Q_{2,t}$ will always be updated in the direction of values that are at least as high as the corresponding $q_2^l$-values. We therefore have that $Q_{2,t}(s,a) \geq q_2^l(s,a) - \eta_t$ for some $\eta_t$ where $\lim_{t \to \infty} \eta_t = 0$ (this can be shown via Lemma~\ref{lemma:convergence_lemma}, but we omit the full derivation for the sake of brevity). This, together with the conditions on $\tau_\mathcal{B}$ and $\tau_Q$, Lemma~\ref{lemma:slack_optimisation_bound}, and the properties of lexicographic bandit algorithms, implies the second point of the proposition.
\end{proof}

Note that the conditions that Proposition~\ref{thm:VB-LRL_with_arbitrary_slack} imposes on $\tau_\mathcal{B}$ and $\tau_Q$ are sufficient, rather than necessary, and that there might be additional choices of $\tau_\mathcal{B}$ and $\tau_Q$ that also lead to the described behaviour.
Note also that Proposition~\ref{thm:VB-LRL_with_arbitrary_slack} does not state that VB-LRL's policy $\pi$ will converge -- it establishes certain conditions on the limit \textit{value} of $\pi$, but does not rule out the possibility that $\pi$ itself continues to oscillate.
This is presumably unlikely, but might perhaps happen if 
$q_i(s,a_1) = q_i(s,a_2) - \lim_{t \rightarrow \infty}\tau_Q(s,i,t,Q_1, \dots, Q_m)$, $a_1 = \argmax_a q_i(s,a)$, and $q_{i+1}(s,a_1) < q_{i+1}(s,a_2)$, for some $s$, $a_1$, $a_2$, and $i$. 

Note also that Proposition~\ref{thm:VB-LRL_with_arbitrary_slack} cannot be strengthened to make any claims about $J_3(\pi), \dots, J_m(\pi)$ without additional assumptions. This is because while we eventually have that $\Delta_{s,0}^{\tau_Q,t} = \Delta_{s,0}$ and $\Delta_{s,1}^{\tau_Q,t} \subseteq \Delta_{s,1}$, there is no guaranteed relationship between $\Delta_{s,i}^{\tau_Q,t}$ 
and $\Delta_{s,i}$ for  $i \geq 3$. To see this, consider what happens as we increase the tolerance. The $Q_1$-values will remain accurate. For $Q_2$, we now have fewer restrictions on what actions we can choose, which means that the $Q_2$-values should increase (or stay the same). However, this might lead the algorithm to choose an action $a$ where $a \not\in \Delta_{s,1}$, with $q_1^l(s,a) \approx \max_a q_1^l(s,a)$ and $q_2^l(s,a) > \max_{a \in \Delta_{s,1}} q_2^l(s,a)$. Then $q_3^l(s,a), \dots, q_m^l(s,a)$ could have any arbitrary values relative to those of the lexicographically optimal actions. 

\section{PB-LRL Convergence Proof}

The proof of PB-LRL's convergence proceeds via a standard multi-timescale stochastic approximation argument; see Chapter 6 in \cite{Borkar2008} for a full exposition. We begin by restating, for reference, the update rules for the parameters $\theta$ and $\lambda$ associated with Lagrangian $L_i$:
\begin{align}
    \theta &\gets \Gamma_\theta \bigg[\theta + \beta^i_t \Big( \nabla_\theta \hat{K}_i(\theta) + \sum^{i-1}_{j=0} \lambda_j \nabla_\theta \hat{K}_j(\theta) \Big) \bigg],\label{pblrl-theta}\\
    \lambda_j &\gets \Gamma_\lambda \Big[\lambda_j + \eta_t \big( \hat{k}_j - \tau_t - \hat{K}_j(\theta) \big) \Big] ~~~~~ \forall~ j < i.\label{pblrl-lambda}
\end{align}
We also restate the conditions on our learning rates $\iota \in \{\alpha, \beta^1, \ldots, \beta^m, \eta^0, \ldots, \eta^m \}$, where for all $i \in \{1,\ldots,m \}$ we have:
\begin{align*}
&\iota_t \in [0,1],~ \sum^\infty_{t=0} \iota_t = \infty,~ \sum^\infty_{t=0} (\iota_t)^2 < \infty ~~~\text{and} \\ &\lim_{t \rightarrow \infty} \frac{\beta^{i}_t}{\alpha_t} = \lim_{t \rightarrow \infty} \frac{\eta^i_t}{\beta^i_t} = \lim_{t \rightarrow \infty} \frac{\beta^{i}_t}{\eta^{i-1}_t} = 0.
\end{align*}
Our main theorem for PB-LRL is then given as follows, where we make some minor observations before beginning the proof.




\begin{theorem}
    Let $\mathcal{M}$ be a MOMDP, $\pi$ a policy that is  twice continuously differentiable in its parameters $\theta$, and assume that the same form of objective function is chosen for each $K_i$ and that each reward function $R_i$ is bounded. If using a critic, let $V_i$ (or $Q_i$) be (action-)value functions that are continuously differentiable in $w_i$ for $i \in \{1,\dots,m\}$ and suppose that if PB-LRL is run for $T$ steps there exists some limit point $w^*_i(\theta) = \lim_{T \rightarrow \infty} \expect_t [ w_i ]$ for each $w_i$ when $\theta$ is held fixed under conditions $\mathcal{C}$ on $\mathcal{M}$, $\pi$, and each $V_i$. 
    If $\lim_{T \rightarrow \infty}  \expect_t [ \theta ] \in \Theta^\epsilon_1$ (respectively $\tilde{\Theta}^\epsilon_1$) under conditions $\mathcal{C}$ when $m = 1$, then for any fixed $m \in \mathbb{N}$ we have that $\lim_{T \rightarrow \infty}  \expect_t [ \theta ] \in \Theta^\epsilon_m$ (respectively $\tilde{\Theta}^\epsilon_m$), where each $\epsilon_i \geq 0$ is a constant that depends on the representational power of the parametrisations of $\pi$ (and $V_i$ or $Q_i$, if using a critic).
\end{theorem}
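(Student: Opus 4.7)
I would prove the theorem by induction on the number of objectives $m$, applying the multi-timescale stochastic approximation machinery of \cite{Borkar2008} (Chapter 6) at each step. The base case $m = 1$ is immediate from the hypothesis: PB-LRL with a single objective is \emph{by construction} the underlying single-objective algorithm whose convergence to $\Theta^\epsilon_1$ (or $\tilde\Theta^\epsilon_1$) under conditions $\mathcal{C}$ is assumed. Assuming the conclusion for $m-1$ objectives, I would then establish it for $m$ by identifying the effective dynamics on each timescale and invoking the single-objective hypothesis as a blackbox on the effective problem that arises at the slowest level.

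The key observation is that the learning-rate conditions enforce a strict timescale hierarchy $\alpha_t \gg \beta^1_t \gg \eta^1_t \gg \cdots \gg \beta^m_t \gg \eta^m_t$. On any $\beta^i_t$ timescale (with $\beta^i_t / \alpha_t \to 0$) the critics have equilibrated to their limit points $w^*_i(\theta)$ as guaranteed by $\mathcal{C}$, so each $\hat{K}_j(\theta)$ tracks $K_j(\theta)$ up to a representational-power slack absorbed into $\epsilon_j$. Now consider the combined update
\[ \theta \gets \Gamma_\theta\Bigl[\theta + \textstyle\sum_{j=1}^m c^j_t \nabla_\theta \hat{K}_j(\theta)\Bigr] \]
viewed on the $\beta^m_t$ timescale. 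Rescaling coefficients by $\beta^m_t$: the term $c^m_t/\beta^m_t \to 1$, while for $k < m$ we have $c^k_t/\beta^m_t \approx \lambda_k \cdot \sum_{j \geq m}\beta^j_t/\beta^m_t$, and by the inductive hypothesis applied on the faster $\beta^k_t$ and $\eta^k_t$ timescales each $\lambda_k$ has already equilibrated at a saddle-point value $\lambda^*_k$ that saturates (or leaves inactive) the constraint $K_k(\theta) \geq k_k - \tau_t$ at the lexicographic $(m{-}1)$-optimum.

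The effective limiting dynamics of $\theta$ on the $\beta^m_t$ timescale therefore coincide with a single-objective gradient flow on the Lagrangian $L_m(\theta, \lambda^*)$. Since $\tau_t$ decays only as $o(\beta^m_t)$, Slater's condition survives the joint limit, and strong duality equates saddle points of $L_m$ with solutions of $\max_\theta K_m(\theta)$ subject to $K_k(\theta) \geq k_k$ for $k < m$. Applying the single-objective hypothesis to this effective dynamics places $\lim_{T \to \infty} \mathbb{E}_t[\theta]$ within $\epsilon_m$ of the constrained maximum, and combining with the inductive hypothesis yields $\theta \in \Theta^\epsilon_m$ (respectively $\tilde\Theta^\epsilon_m$). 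The main obstacle I expect is the careful verification of the hypotheses required to invoke Borkar's multi-timescale theorem at each level: in particular, Lipschitz continuity of the equilibrium maps $\lambda^*_k(\theta)$ and $w^*_i(\theta)$, preservation of Slater's condition through the joint limit $\tau_t \to 0$, and control over how the slacks $\epsilon_i$ might compound across $m$ inductive steps. For the local variant there is the further delicate point that the neighbourhoods used to define $\tilde\Theta^\epsilon_i$ must be chosen consistently with the projection $\Gamma_\theta$ so that the fast variables remain near their equilibria as the slower ones drift within their local basins.
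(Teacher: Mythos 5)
Your overall strategy---induction over the objectives combined with Borkar's multi-timescale analysis, treating faster variables as equilibrated and slower ones as static, and passing to the Lagrangian saddle point at each level---is the same as the paper's. However, there is a genuine gap at the crucial step where you write that one can ``apply the single-objective hypothesis to this effective dynamics'' on $L_m(\theta,\lambda^*)$. The hypothesis only guarantees that gradient ascent on $K_1$ reaches a (local or global) optimum \emph{of $K_1$}; it says nothing about gradient ascent on the Lagrangian $L_m = K_m + \sum_j \lambda_j K_j$, which is a different objective. The paper bridges this by observing that convergence of gradient ascent to optima of $K_1$ forces $K_1$ to be (locally or globally) \emph{invex}; since every $K_i$ has the same form, each is invex, and hence so is each $L_i$ in $\theta$, so the stationary point of the $\theta$-ODE is genuinely a maximum of $L_i$. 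Without some such structural transfer, your invocation of the convergence hypothesis on the Lagrangian dynamics is unjustified.

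The second problem is your appeal to ``strong duality'' to equate saddle points of $L_m$ with solutions of the constrained problem. The objectives here are not concave in $\theta$, so strong duality does not hold exactly even under Slater's condition; the paper instead invokes the result of Paternain et al.\ that bounds the \emph{duality gap} by a quantity $\epsilon'_\theta$ proportional to the representational slack $\epsilon_\theta$ of the policy class. This is precisely where the theorem's clause that each $\epsilon_i$ ``depends on the representational power of the parametrisations'' comes from; in your account the representational slack enters only through the critic approximation of $\hat{K}_j$, which misattributes the source of the error and leaves the duality step unsupported. (A smaller point: at level $i$ the multipliers $\lambda_j$, $j<i$, are updated at rate $\eta^i$, which is \emph{slower} than $\beta^i$, so in the paper's analysis they are the static variables while $\theta$ equilibrates to $\theta^*(\lambda)$, and their convergence is analysed last---not ``already equilibrated on faster timescales'' as you have it.)
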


    In the proof below we use $\epsilon_\theta \geq 0$ to quantify the representational power of our policy parametrisation as follows:
    $$\min_{\theta \in \Theta} \max_{\pi,s} \sum_a \vert \pi(a \mid s) - \pi(a \mid s; \theta) \vert \leq \epsilon_\theta.$$
    where, as we will see, each $\epsilon_i$ is a function of $\epsilon_\theta$. In the tabular setting or when $\pi$ is represented using, say, an over-parametrised neural network, then we have that $\epsilon_\theta = 0$ \cite{Paternain2019}.
    
    Similarly, for any fixed $\theta$ it will not always be possible to guarantee that the approximation $V^*_i$ of each $v^i_\theta$ defined by $w^*_i(\theta)$ will have zero error and thus, if each estimated objective $\hat{K}_i$ is computed using $V_i$, we may only prove convergence to within a small neighbourhood of lexicographic optima.\footnote{In the following remarks and proof, for ease of exposition, we focus primarily on the case of a value rather than action-value critic, but analogous arguments apply in the case of the latter.} In particular, let us define $\tilde{J}_i(\theta) = \sum_s I(s) V^*_i(s)$. When $\max_{\theta,s} \expect_\theta \big[ \vert v_\theta^i(s) -  V^*_i(s) \vert \big] \leq \epsilon_{w_i}$ we have that $\max_{\theta \in \Theta^\epsilon_{i-1}} J_i(\theta) - J_i(\theta^*) \leq \epsilon_{w_i}$ where $\theta^* \in \argmax_{\theta \in \Theta^\epsilon_{i-1}} \tilde{J}_i(\theta)$. As such, for the majority of the proof we consider convergence with respect to each $\tilde{J}_i$ instead of each $J_i$.

    
    
 
    Finally, as noted in the main body of the paper, when $m=1$ then PB-LRL reduces to whatever algorithm is defined by the choice of objective function, such as A2C when using $L^{\ac}_1$, or PPO when using $L^{\ppo}_1$. As, by assumption, PB-LRL converges to an optimum (either local or global) of $\tilde{J}_1$ by following (an estimate of) the gradient of $K_1$, we see that $\nabla_\theta K_1(\theta)$ serves as a faithful \emph{surrogate} for the true policy gradient. Thus (by a simple inductive argument), in the general setting we may in fact focus on convergence according to the objectives $K_1, \dots, K_m$ as opposed to $\tilde{J}_1, \dots, \tilde{J}_m$, safe in the knowledge that a lexicographic optimum with respect to the former is sufficient for a lexicographic optimum with respect to the latter.

\begin{proof}

    The proof proceeds by induction on $i \in \{1,\ldots,m\}$ for some fixed $m$, using a multi-timescale approach. Due to the learning rates chosen, we may consider those more slowly updated parameters fixed for the purposes of analysing the convergence of the more quickly updated parameters \cite{Borkar2008}. Thus, for the updates to each vector $w_i$ the other parameters $\theta$ and $\lambda$ may be viewed as fixed. By assumption, under conditions $\mathcal{C}$ we have that this process thus converges to some fixed $w^*_i(\theta)$ for each $w_i$.
    
    
    Let us next consider, as the base case $i = 1$ for our inductive argument, the convergence of $\theta$ with respect to the 
    Lagrangian $L_1(\theta, \lambda)$ objective. As all of the critic parameters are updated on a faster timescale we may view them as having converged for the purpose of our analysis. Further, by the above argument we have that each $w_i = w^*_i(\theta)$. Similarly, as the updates to $\theta$ with respect to $L_j(\theta, \lambda)$ for $j > i = 1$ occur at a slower timescale we may view $\theta$ as static with respect to these latter updates. Note that at this step we may also ignore updates to the sets of Lagrange multipliers $\lambda$ as they do not feature in $L_1(\theta, \lambda) = K_1(\theta)$. As such, this reduces to the case with $m=1$ where, by assumption, we have that $\theta$ converges to either a local or global optimum of $K_1$ (and thus an $\epsilon$-optimum, for $\epsilon_1 = \epsilon_{w_1}$, of $J_i$) under conditions $\mathcal{C}$, as required.

    Let us next assume as our inductive hypothesis, not only that $\theta$ has converged with respect to objectives $K_1,\ldots,K_{i-1}$ (as entailed by of our choice of learning rates), but has converged, under conditions $\mathcal{C}$, to some policy in $\Theta^\epsilon_{i-1}$. Then we have that $\hat{k}_j := \hat{K}_j(\theta)$ has also converged for each $j \in \{1,\ldots,i-1\}$
    and that $\eta = \eta^i$ is our current learning rate for the Lagrange multipliers
    (see lines 6 and 7 of Algorithm 3).
    Note that as the updates to $\theta$ with respect to $L_j(\theta, \lambda)$ for $j > i$ occur at a slower timescale we may view $\theta$ as static with respect to these latter updates, and as updates for $j < i$ occur at a faster timescale then we may assume that $\theta$ has converged with respect to each such $L_j(\theta, \lambda)$. 
    
    As before, we have that the Lagrange multipliers $\lambda$ are updated more slowly and the critic parameters $w^*_i(\theta)$ more quickly so may view each of them as static. Thus, when using a critic or when estimating the gradient using another unbiased method (such as Monte Carlo estimates generated from trajectories) then $\expect_\theta [\hat{K}_i (\theta)] = K_i(\theta)$ for each $i \in \{1,\ldots,m\}$ and so:
    \begin{align*}
    &\expect_\theta \Big[ \nabla_\theta \hat{K}_i(\theta) + \sum^{i-1}_{j=0} \lambda_j \nabla_\theta \hat{K}_j(\theta) \Big] \\ &= \nabla_\theta K_i(\theta) + \sum^{i-1}_{j=0} \lambda_j \nabla_\theta K_j(\theta) = \nabla_\theta L_i(\theta, \lambda).
    \end{align*}
    In other words, the update rule (\ref{pblrl-theta}) uses an unbiased estimate of the gradient of the Lagrangian $L_i$ with respect to $\theta$ and can thus be seen as a discretisation of the ODE:
    $$\dot{\theta^t} = \Gamma_\theta \big( \nabla_\theta L_i(\theta^t,\lambda) \big),$$
    where $t$ indexes the values of $\theta$ over time and the projection operator $\Gamma_\theta$ ensures that iterates governed by this ODE remain in a compact set. 
    
    Given that gradient descent on $K_1$ converges to a locally or globally optimal stationary point when $m = 1$, then it must be the case that $K_1$ is either locally or globally \emph{invex} respectively \cite{BenIsrael1986}, where recall that a differentiable function $f : \mathbb{R}^n \rightarrow \mathbb{R}$ is (globally) invex if and only if there exists a function $g : \mathbb{R}^n \times \mathbb{R}^n \rightarrow \mathbb{R}^n$ such that $f(x_1) - f(x_2) \geq g(x_1,x_2)^\top \nabla f(x_2)$ for all $x_1, x_2 \in \mathbb{R}^n$ \cite{Hanson1981}, with a generalisation to local invexity being straightforward \cite{Craven1981}. Further, as each objective $K_i$ is of the same form (by assumption) then each $K_i$ is similarly locally or globally invex, and thus so too is each $L_i$ in $\theta$, being a linear combination of $K_1,\ldots,K_{i-1}$ with the addition of a single scalar term. Hence, the stationary point $\theta^*(\lambda)$ of the above ODE for fixed Lagrange multipliers $\lambda$ is such that $L_i(\theta^*(\lambda),\lambda)$ is is either a local or global maximum of $L_i$ respectively.
    
    What remains is to consider the convergence of said Lagrange multipliers at a slower timescale given by $\eta = \eta_{i}$. As the actor and critics are updated at a faster timescale we may again view them as having converged to $\theta^*(\lambda)$ and $w^*_i(\theta^*)$ respectively. Similarly to above we have that:
    $$\expect_{\theta} [ \hat{k}_j - \tau_t - \hat{K}_j(\theta) ] = k_j - \tau_t - K_j(\theta) = \nabla_{\lambda_j} L_i(\theta, \lambda),$$
    for each $j \in \{1,\ldots,m-1\}$. As such, the gradient update in (\ref{pblrl-lambda}) uses unbiased estimates and thus forms a discrete approximation of the following ODE:
    $$\dot{\lambda^{t'}_j} = - \nabla_{\lambda_j} L_i(\theta(\lambda^{t'}),\lambda^{t'}),$$
    where $t'$ indexes the values of each $\lambda_j$ over time, $\theta(\lambda^{t'})$ is the limit of the $\theta$ recursion with static parameters $\lambda^{t'}$, and $\Gamma_\lambda$ ensures that $\lambda \succcurlyeq 0$. As shown above, $\theta^t(\lambda^{t'}) \rightarrow \theta^*$, and by a similar argument we have $\lambda_j^{t'} (\theta^*) \rightarrow \lambda_j^*$, where $\lambda_j^{t'} (\theta)$ is the limit of the $\lambda^{t'}_j$ recursion with static parameters $\theta$ and thus that $ L_i(\theta^*,\lambda^*)$ is a global minimum (as $L_i$ is convex in $\lambda$), given $\theta^*$. 
    
    Thus, $(\theta^*,\lambda^*)$ forms a local or global saddle point of the Lagrangian $L_i$, representing a solution to the dual formulation of our original constrained optimisation problem. Furthermore, it can be shown that (given Slater's condition) the duality gap between this solution and a locally or globally optimal solution to the primal formulation is bounded above by \cite{Paternain2019}:
    $$\epsilon'_\theta \coloneqq \epsilon_\theta \cdot \bigg( \max_{s,a}R_i(s,a) + \Vert \lambda_\xi \Vert_1 \max_{j<i,s,a} \frac{R_j(s,a)}{1-\gamma_j} \bigg),$$
    where:
    \begin{align*}
    &\lambda_\xi = \argmin_{\lambda \succcurlyeq 0} \max_{\theta}~ L_i(\theta,\lambda) - \xi \sum_{j=1}^i \lambda_j ~~~\text{ and }~~~  \\ &\xi = \max_{j<i,s,a} \frac{\epsilon_\theta R_j(s,a)}{1-\gamma_j}.
    \end{align*}
    Given our approximation of $V_i$ by $w_i$, and thus our convergence with respect to $\tilde{J}_i$ instead of $J_i$, we therefore have $\theta$ converges to within $\epsilon_i \coloneqq \epsilon'_\theta + \epsilon_{w_i} + \lim_{t \rightarrow \infty} \tau_t = \epsilon'_\theta + \epsilon_{w_i}$ of a local or global lexicographic optimum for $J_1,\ldots,J_i$, i.e.\ that $\lim_{T \rightarrow \infty}  \expect_t [ \theta ] \in \Theta^\epsilon_{i}$ or $\tilde{\Theta}^\epsilon_{i}$ respectively. The proof of this inductive step completes our overall argument. It can readily be seen the procedure concludes when solving, in effect, the (relaxed version of) the constrained optimisation problem for the $m^\text{th}$ objective, and hence that $ \lim_{T \rightarrow \infty}  \expect_t [ \theta ] \in \Theta^\epsilon_m$ or $\tilde{\Theta}^\epsilon_{i}$, as required. 
\end{proof}

\begin{corollary}
    Suppose that each critic is linearly parametrised as $V_i(s) = w_i^\top \phi(s)$ for some choice of state features $\phi$ and is updated using a semi-gradient TD(0) rule, and that:
    \begin{enumerate}
        \item $S$ and $A$ are finite, and each reward function $R_i$ is bounded,
        \item For any $\theta \in \Theta$, the induced Markov chain over $S$ is irreducible,
        \item For any $s\in S$ and $a \in A$, $\pi(a \mid s ; \theta)$ is twice continuously differentiable,
        \item Letting $\Phi$ be the $\vert S \vert \times c$ matrix with rows $\phi(s)$, then $\Phi$ has full rank (i.e.\ the features are independent), $c \leq \vert S \vert$, and there is no $w \in W$ such that $\Phi w = 1$.
    \end{enumerate}
    Then for any MOMDP with discounted or limit-average objectives, LA2C almost surely converges to a policy in $\tilde\Theta^\epsilon_m$.
\end{corollary}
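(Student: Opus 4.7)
The plan is to verify the hypotheses of Theorem 3 hold in this setting and then invoke its conclusion directly; the $m=1$ instantiation of PB-LRL with objective $K^{\ac}_1$ is precisely standard (A2C-style) actor-critic with a linearly parametrised critic updated by semi-gradient TD(0), whose local convergence under essentially these four conditions is already established in the two-timescale stochastic approximation literature \cite{Bhatnagar2009,Konda2000}. First I would check the basic regularity requirements of Theorem 3: condition 1 gives finite $S$, $A$ and bounded $R_i$; condition 3 supplies the twice continuous differentiability of $\pi(\cdot;\theta)$; and the linear critic $V_i(s) = w_i^\top \phi(s)$ is manifestly continuously differentiable in $w_i$.

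The next step is to verify that for each $\theta$ held fixed a limit point $w_i^*(\theta) = \lim_{T\to\infty}\expect_t[w_i]$ exists. With $\theta$ fixed, linear TD(0) is a linear stochastic approximation whose associated ODE can be written $\dot{w}_i = A_i(\theta) w_i + b_i(\theta)$ with $A_i(\theta) = \Phi^\top D(\theta)(\gamma_i P(\theta) - I)\Phi$ in the discounted case, and with an analogous expression including a centring correction for the limit-average case. Condition 2 (irreducibility) guarantees a unique stationary distribution $D(\theta)$, and condition 4 (full column rank of $\Phi$ with $\mathbf{1}\notin\mathrm{range}(\Phi)$) yields, by the classical Tsitsiklis-Van Roy argument, that $A_i(\theta)$ is negative definite. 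Hence the fixed point $w_i^*(\theta) = -A_i(\theta)^{-1} b_i(\theta)$ exists, is unique, and is globally asymptotically stable, so the required limit $w_i^*(\theta)$ is well defined.

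With the critic limit in place, the substantive step is the $m=1$ local convergence of the actor. Under the two-timescale schedule $\beta^1_t/\alpha_t \to 0$, the actor effectively sees an averaged critic and thus tracks the ODE $\dot{\theta} = \Gamma_\theta\bigl(\nabla_\theta \tilde{J}_1(\theta)\bigr)$. Standard Kushner-Clark type arguments for projected stochastic approximations with bounded iterates then give almost sure convergence to the set of locally asymptotically stable equilibria of this ODE, i.e.\ to a point within $\epsilon_{w_1}$ of a local maximum of $J_1$, which is exactly $\tilde\Theta^\epsilon_1$. This is precisely the content of the A2C convergence result of Bhatnagar et al.\ \cite{Bhatnagar2009} and is what Theorem 3 requires as its $m=1$ input.

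The main obstacle is really just invoking this base-case argument cleanly: the two-timescale A2C proof is itself nontrivial, and some care is needed to handle the limit-average case uniformly with the discounted case (the TD update is then based on the centred error $r_t + v_\theta(s_{t+1}) - v_\theta(s_t) - J(\theta)$, which does not change the structure of $A_i(\theta)$ and leaves the negative-definiteness argument intact under condition 4). Once the $m=1$ conclusion is secured, no further work is needed: Theorem 3 takes conditions $\mathcal{C} =$ (1)--(4) together with local convergence of A2C for $m=1$ and, by the induction over the timescale hierarchy $\alpha \gg \beta^1 \gg \eta^1 \gg \beta^2 \gg \cdots \gg \beta^m$, promotes this to $\lim_{T\to\infty}\expect_t[\theta] \in \tilde\Theta^\epsilon_m$ almost surely, which is the claimed conclusion.
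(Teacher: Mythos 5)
Your proposal is correct and follows essentially the same route as the paper: establish the existence of the critic limit point $w_i^*(\theta)$ via the Tsitsiklis--Van Roy / Bhatnagar et al.\ analysis of linear semi-gradient TD(0) under conditions 2 and 4, secure the $m=1$ local convergence of A2C from \cite{Bhatnagar2009} (with the noted adaptation for the discounted case), and then invoke Theorem \ref{thm:PB-LRL} to lift the conclusion to $\tilde\Theta^\epsilon_m$. The only cosmetic difference is that you spell out the ODE and negative-definiteness argument explicitly where the paper defers to citations, and the paper is slightly more careful to note that the relevant state distribution in the discounted case is the $\gamma_i$-discounted visitation distribution $d^\theta_i$ rather than the stationary distribution.
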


\begin{proof}
     Note that the critic updates are equivalent to the well-known Linear Semi-Gradient TD(0) algorithm \cite{SuttonAndBarto} under the state distribution $d^\theta_i$ that depends on $\gamma_i$,\footnote{Said dependence is often ignored in the literature, we refer the reader to \cite{Thomas2014} for further discussion.} given by: 
     $$d^\theta_i(s) := \lim_{T \rightarrow \infty} \frac{1}{\sum^T_{t=0} \gamma_i^t} \sum^T_{t=0} \gamma^t_i \Pr(s_t = s \vert s_0, \theta) I (s_0).$$
     Under the assumed conditions this algorithm is known to converge with probability one to some $w_i^*(\theta)$ where $V_i = \Phi w^*_i(\theta)$ is the (unique) TD fixed point minimising the mean squared Bellman error. In the discounted case, see \cite{Tsitsiklis1997} for the original proof (Theorem 1) and \cite{SuttonAndBarto} for a more recent exposition (Section 9.4). In the limit-average case see, for one particularly clear and concise proof, Lemma 5 in \cite{Bhatnagar2009}. 
     
     Similarly, under the conditions listed above the standard A2C algorithm is known to converge asymptotically to a local $\epsilon_{w_1}$-optimum $J_1$ when $m=1$. See \cite{Bhatnagar2009} for the limit-average setting, the proof for which can be readily adapted to the discounted setting assuming that proper discounting is applied to the various gradient updates in order to reflect $d^\theta_1$, and given that the definition of $J_1$ is based on a specific initial distribution $I$, rather than the steady state distribution under the policy $\theta$ \cite{Sutton1999,Thomas2014}. With these facts in hand we may apply Theorem \ref{thm:PB-LRL} to obtain our result.
\end{proof}

\begin{corollary}
    Let $\pi(a \mid s; \theta, \chi) \propto \exp\big(\chi^{-1} f(s, a ; \theta)\big)$ and supposed that both $f$ and the \emph{action-value} critics $Q_i$ are parametrised using two-layer neural networks (where $\chi$ is a temperature parameter), that a semi-gradient TD(0) rule is used to update $Q_i$, and that $Q_i$ replaces $A_i$ in the standard PPO loss $\hat{K}^{\ppo}$, both of which updates use samples from the \emph{discounted} steady state distribution. Further, let us assume that:
    \begin{enumerate}
        \item $S$ is compact and $A$ is finite, with $S \times A \subseteq \mathbb{R}^d$ for some finite $d > 0$, and each reward function $R_i$ is bounded,
        \item The neural networks have widths $\mu_f$ and $\mu_{Q_i}$ respectively with ReLU activations, initial input weights drawn from a normal distribution with mean $0$ and variance $\frac{1}{d}$, and initial output weights drawn from $\mathrm{unif}([-1,1])$,
        \item We have that $q^\pi_i(\cdot,\cdot) \in \big\{Q_i(\cdot,\cdot; w_i) \mid w_i \in  \mathbb{R}^y \big\}$ for any  $\pi \in \Pi$,
        \item There exists $c > 0$ such that for any $z \in \mathbb{R}^d$ and $\zeta > 0$ we have that $\expect_\pi \big[ \mathbf{1} (\vert z^\top (s,a) \vert \leq \zeta  )  \big] \leq \frac{c\zeta}{\Vert z \Vert_2}$ for any $\pi \in \Pi$.
    \end{enumerate}
    Then for any MOMDP with discounted objectives, LPPO almost surely converges to a policy in $\Theta^\epsilon_m$. Furthermore, if the coefficient of the KL divergence penalty $\kappa > 1$ then $\lim_{\mu_f, \mu_{Q_i} \rightarrow \infty} \epsilon = 0$.
\end{corollary}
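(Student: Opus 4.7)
The plan is to reduce this to an application of Theorem~\ref{thm:PB-LRL}, analogous to the proof of Corollary~\ref{cor:LA2C}, by verifying two ingredients under the stated conditions: (i) for each fixed $\theta$, the semi-gradient TD(0) recursion on $w_i$ converges to some limit point $w_i^*(\theta)$ such that $Q_i(\cdot,\cdot; w_i^*(\theta))$ approximates $q^\pi_i$ with an error that vanishes as $\mu_{Q_i} \to \infty$; and (ii) in the single-objective case $m=1$, running the resulting algorithm (which is then just neural PPO with a Q-critic) converges almost surely to a \emph{global} $\epsilon$-optimum of $K_1$, with $\epsilon \to 0$ as $\mu_f, \mu_{Q_1} \to \infty$ provided $\kappa > 1$. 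Given these, Theorem~\ref{thm:PB-LRL} immediately yields lexicographic convergence to $\Theta^\epsilon_m$, and the width-dependence of the per-step error translates into the desired limit statement.

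For (i), I would invoke the neural TD convergence analysis for two-layer ReLU networks under the given random initialisation (mean-zero Gaussian input weights, uniform output weights). Realisability (condition 3) ensures the TD fixed point is not restricted by the representation class, and the anti-concentration condition 4 prevents the input distribution from concentrating near the activation boundaries of ReLU neurons, which is what allows one to linearise the network around its initialisation and treat the TD recursion as, essentially, linear TD(0) in the random feature space. Standard results (e.g.\ the neural TD guarantees cited in \cite{Liu2019b}) then give almost-sure convergence of $w_i$ to some $w_i^*(\theta)$ with approximation error $\epsilon_{w_i} = O(\mu_{Q_i}^{-1/2})$.

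For (ii), I would appeal directly to the global convergence result for neural PPO in \cite{Liu2019b}, whose assumptions essentially match ours: softmax-parametrised policy through a two-layer ReLU network, Q-critic substituted for the advantage in $K^{\ppo}$, updates drawn from the discounted steady-state distribution, and a KL penalty coefficient $\kappa > 1$ (which is precisely what is needed to guarantee contraction of the mirror-descent-style policy update and thus global rather than local convergence). That result yields $\lim_{T\to\infty}\expect_t[\theta] \in \Theta^{\epsilon_1}_1$ with $\epsilon_1 \to 0$ as both $\mu_f$ and $\mu_{Q_1}$ grow. Then, by Theorem~\ref{thm:PB-LRL}, when the same form of PPO objective is used for each $K_i$ the algorithm converges to $\Theta^\epsilon_m$ with each $\epsilon_i$ depending on $\epsilon_\theta$ and $\epsilon_{w_i}$; under over-parametrisation both contributions vanish, giving $\lim_{\mu_f, \mu_{Q_i} \to \infty} \epsilon = 0$.

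The main obstacle is the bookkeeping involved in matching our setup to the hypotheses of the cited neural PPO and neural TD results --- in particular, (a) checking that the Lagrangian-based linear combination of PPO objectives used at timescale $i$ in PB-LRL still falls within the scope of the global convergence analysis (this works because at each stage the combined objective is a convex combination of PPO losses with fixed weights on the slower timescale, so the same mirror-descent interpretation applies), (b) handling the discounting in the state distribution used for the updates consistently across actor and critic, and (c) verifying that condition 4 of the corollary, together with boundedness of $R_i$, suffices to uniformly bound the various norms appearing in the neural TD/PPO error analysis across policies visited during training. None of these require new ideas, but writing them out carefully is what makes the full proof technical.
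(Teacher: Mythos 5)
Your proposal is correct and follows essentially the same route as the paper's proof: both reduce the claim to Theorem~\ref{thm:PB-LRL} by invoking the neural-PPO global convergence analysis of \cite{Liu2019b} for the $m=1$ base case (with the same adaptations for diminishing learning rates, the discounted steady-state distribution, and the neural TD(0) critic under the realisability and anti-concentration conditions), and both conclude that $\kappa > 1$ plus over-parametrisation drives every error contribution, and hence $\epsilon$, to zero. The only differences are bookkeeping ones you already flag — the paper carries out the telescoped error bound explicitly (yielding width-dependent constants whose precise rates differ slightly from your stated $O(\mu_{Q_i}^{-1/2})$), but this does not change the argument.
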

    
\begin{proof}
    Our proof makes use of recent results about the global optimality of PPO when using over-parametrised networks. See \cite{Liu2019b} for the original detailed result that we make use of here and \cite{Hsu2020} for a more intuitive exposition based on the theory of multiplicative weights. Our arguments are essentially identical to those found in \cite{Liu2019b} with minor adaptions to accommodate: a) our use of diminishing rather than constant learning rates; b) our focus on asymptotic convergence; and c) our assumption that samples are taken from the \emph{discounted} steady-state distribution $d^\theta_i(s)$ defined in the previous proof. We refer the reader to this paper for a full exposition.
    
    The aforementioned result relies on bounding the error for a generic network $u(\cdot, \cdot ; \omega)$ of the same form as that representing $f$ and each $Q_i$ based on the updates given by:
    \begin{align*}
        \omega_{t+1} \gets \Gamma_\omega \Big[ & \omega_t - \iota \big( u(s,a;\omega_t) - g(s,a) - h \cdot u(s',a';\omega_t) \big) \\ &\nabla_\omega u(s,a;\omega_t)  \Big]
    \end{align*}
    for a fixed learning rate $\iota$, and some fixed function $g$ and constant $h$, where $\Gamma_\omega [ \omega' ] \coloneqq \argmin_{\omega \in \mathcal{B}^0(r_u)} \Vert \omega - \omega' \Vert_2$ projects $\omega'$ to the ball $\mathcal{B}^0(r_u) \coloneqq \{\omega \mid \Vert \omega_0 - \omega \Vert_2 \leq r_u  \}$ of radius $r_u$ about the initial parameters $\omega_0$. If we instead replace $\iota$ with a series of learning rates $\{ \iota_t \}_{t \in \mathbb{N}}$ such that $\iota_t \in [0,1]$, $\sum^\infty_{t=0} \iota_t = \infty$, and $\sum^\infty_{t=0} (\iota_t)^2 < \infty$ then at time $t$ we have the following error bound:
    \begin{align*}
        &\expect_{\nu} \Big[ \big( u(s,a;\omega_t) - u^0(s,a;\omega^*) \big)^2 \mid \omega_t \Big] \leq\\
        &\frac{1}{\iota_t(1 - h) - 4\iota^2_t} \Big( \Vert \omega_t - \omega^* \Vert^2_2 - \expect_\nu \big[ \Vert \omega_{t+1} - \omega^* \Vert^2_2 \mid \omega_t \big] \\ & + C_1(r_u) \iota^2_t \Big) + C_2(r_u, \mu_u),
    \end{align*}
    for some choice of distribution $\nu$ over $S \times A$, and where $u^0$ is the local linearisation of $u$ at $\psi_0$, $\omega^*$ is such that:
    \begin{align*}
    &\expect_\nu \Big[ \big(u^0(s,a;\omega^*) - g(s,a) - h \cdot  u^0(s',a';\omega^*)\big) \\ &\nabla_\omega \big(u^0(s,a;\omega^*) \Big] ^\top (\omega - \omega^*) \geq 0,
    \end{align*} 
    for any $\omega \in \mathcal{B}^0(r_u)$, and $C_1(r_u)$ and $C_2(r_u, \mu_u)$ are constants of size $O(r^{2}_u)$ and $O(r^{5/2}_u \mu_u^{1/4} + r^{3}_u \mu_u^{1/2})$ respectively. Then telescoping up to timestep $T$ and dividing by $\sum^{T-1}_{t=0} \big(\iota_t(1 - h) - 4\iota^2_t \big)$ we have:
    \begin{align*}
        &\expect_{\nu,I_{\omega},t} \Big[ \big( u(s,a;\omega_t) - u^0(s,a;\omega^*) \big)^2  \Big]\\
        &\leq \frac{\Big( \expect_{I_{\omega}} [ \Vert \omega_0 - \omega^* \Vert^2_2 ] + \sum^{T-1}_{t=0}  C_1(r_u) \iota^2_t \Big)}{\sum^{T-1}_{t=0} \big(\iota_t(1 - h) - 4\iota^2_t \big)} + C_2(r_u, \mu_u)\\
        &\leq \frac{\Big( r_u + \sum^{T-1}_{t=0}  C_1(r_u) \iota^2_t \Big)}{\sum^{T-1}_{t=0} \big(\iota_t(1 - h) - 4\iota^2_t \big)}  + C_2(r_u, \mu_u),
    \end{align*}
    where $t$ is sampled from $[0,T-1]$ with probability $\propto \iota_t(1 - h) - 4\iota^2_t$, $\omega_0$ is sampled from $I_{\omega}$ (where $I_{\omega}$ is the distribution described by condition 2 of the corollary), and in the second inequality we use the fact that $\Vert \omega_0 - \omega^* \Vert^2_2 \leq r_u$. By using different instantiations of $g$, $h$, $r_u$, and $\iota$, we may instantiate the update rules for both $w_1$ and $\theta$, as described in \cite{Liu2019b}. 
    
    Let the error bounds for these cases be given by $\epsilon_{Q_1}(T)$ and $\epsilon_{f}(T)$ respectively, and note that due to the conditions on $\alpha$ and $\beta^1$ then we have $\lim_{T \rightarrow \infty} \epsilon_{Q_1}(T) = C_2(r_{Q_1}, \mu_{Q_1})$ and $\lim_{T \rightarrow \infty} \epsilon_f(T) = C_2(r_f, \mu_f)$ respectively. Supposing that $\pi$ and $\kappa$ are updated every $T$ steps (where recall that $\kappa$ is the coefficient of the KL divergence penalty term in the PPO update rule), and that this is repeated $K$ times, then we have the following bound:
    \begin{align*}
        &J_1(\theta^*) -  \expect_k [J_1(\theta_k) ]\\
        &\leq \epsilon^{T,K}_1 \coloneqq
    \frac{\log \vert A \vert + M \sum^{K-1}_{k=0} ( \kappa^{-2}_k + \epsilon_k ) }{(1 - \gamma_1) \sum^{K-1}_{k=0} \kappa^{-1}_k},
    \end{align*}
    where $k$ is sampled from $[0,K-1]$ with probability $\propto \kappa^{-1}_k$ and we have:
    \begin{align*}
        M &\coloneqq 2 \big( \expect_{\theta^*} [\max_{a \in A} Q(s,a;w_0)^2] + r^2_f \big),\\
        \epsilon_k &\coloneqq \chi^{-1}_{k+1} \epsilon_{f}(T) \phi^*_k + \kappa^{-1}_k \epsilon_{Q_1}(T) \omega^*_k + \vert A \vert \chi^{-1}_{k+1} \epsilon_{f}(T)^2,
    \end{align*}
    for constants $\phi^*_k$ and $\omega^*_k$ that represent density ratios -- see equation 4.2 in \cite{Liu2019b} for their precise definitions. Letting $\epsilon_1 \coloneqq \lim_{T,K \rightarrow \infty} \epsilon^{T,K}_1$ allows us to apply Theorem 3, where note that if $\kappa_k > 1$ then for each $i$ we have:
    \begin{align*}
        & \mu_{Q_i}, \mu_f \rightarrow \infty \\
        ~\Rightarrow~ & \epsilon_{w_i}, \epsilon_\theta, C_2(r_{Q_i}, \mu_{Q_i}), C_2(r_f, \mu_f) \rightarrow 0  \\ 
        ~\Rightarrow~ & \epsilon_k \rightarrow 0 \\
        ~\Rightarrow~ & \epsilon_i \rightarrow 0,
    \end{align*}
    which thus results in a global lexicographic optimum.
\end{proof}

\section{Hyperparameters}

In this section we list the hyperparameters used in Section \ref{sec:experiments}, as well as other noteworthy implementation details. 

In Section \ref{sec:experiments:1}, the VB-LRL algorithms are tabular, with a learning rate of $\alpha_t = 0.01$. All VB-LRL algorithms use Lexicographic $\epsilon_t$ Greedy with $\epsilon_t = 0.05$, and slack $\tau(s,i) = \xi(s,i,t) = 0.01\cdot \max_{a\in\Delta^\tau_{s,i}}Q_i(s,a)$. The PB-LRL algorithms take the state as a one-hot encoded vector, and use fully-connected neural networks without hidden layers or bias terms. The code used to generate the MOMDPs is based on \emph{MDPToolBox}\footnote{Available at \url{https://github.com/sawcordwell/pymdptoolbox}.} but with Gaussian noise ($\sigma = 0.2$) added to the reward. 

In the CartSafe environment, RCPO's constraint was $J_2(\pi) \leq 10$, AproPO's constraints were $J_1(\pi) \in [90,350]$, $J_2(\pi)\in[0,10]$, and VaR\_AC's constraint was to keep the cost under 10 with probability at least 0.95. In the GridNav environment, RCPO's constraint was $J_2(\pi) \leq 5$, AproPO's constraints were $J_1(\pi)\in [90,100]$, $J_2(\pi)\in [0,5]$, and VaR\_AC's constraint was to keep the cost under 5 with probability at least 0.95. Finally, in the Intersection environment, RCPO's constraint was $J_2(\pi) \leq 1$, AproPO's constraints were $J_1(\pi)\in [9,10]$, $J_2(\pi)\in[0,1]$, and VaR\_AC's constraint was to keep the cost under 1 with probability at least 0.95.
The lexicographic algorithms were trying to minimise cost and, subject to that, maximise reward. Each algorithm was run ten times in each environment.

In Section \ref{sec:experiments:2}, we use VB-LRL with Lexicographic $\epsilon_t$-Greedy with $\epsilon_t = 0.05$, Lexicographic $Q$-Learning, and slack $\tau(s,i) = \xi(s,i,t) = 0.05\cdot \max_{a\in\Delta^\tau_{s,i}}Q_i(s,a)$ in GridNav and Intersection, and $0.001\cdot \max_{a\in\Delta^\tau_{s,i}}Q_i(s,a)$ in CartSafe. All agents use fully connected networks, and for the algorithms with parameterised policies, separate networks were used for prediction and control. AproPO's best-response oracle was a DQN algorithm. In GridNav and Intersection, the algorithms update their networks every 32 steps, and use 32 as their batch size, but in CartSafe they update every 8 steps, and have a batch size of 8. DQN and LDQN have a replay buffer of size 100,000, and AproPO performs an update every 8 episodes in GridNav and Intersection, and every episode in CartSafe. The networks were updated with the Adam optimiser, with a learning rate of 0.001 in CartSafe and Intersection, and 0.01 in GridNav. In the CartSafe environment the agents use networks with two hidden layers with 24 neurons each, in the GridNav environment they use networks without hidden layers or bias terms, and in the Intersection environment they use networks with two hidden layers with 512 neurons each.

\section{A Hyperparameter Study for VB-LRL's Slack}

Here we present some additional data that investigates the practical significance of VL-LRL's slack parameter. In particular, we run LDQN in CartSafe, and vary $\tau(s,i) = \xi(s,i,t)$ from 0 to 1 in steps of 0.2. The result of this experiment is shown in Figure~\ref{fig:tolerance_study}.  We can see that the choice of slack parameter does have an impact on the performance of the algorithm, but that this impact is fairly gradual. The hyperparameters used in this experiment were similar to those used for the experiment in Section \ref{sec:experiments:2}.

\begin{figure}
  \centering
    \subfloat[Reward]{{\includegraphics[width=0.24\textwidth]{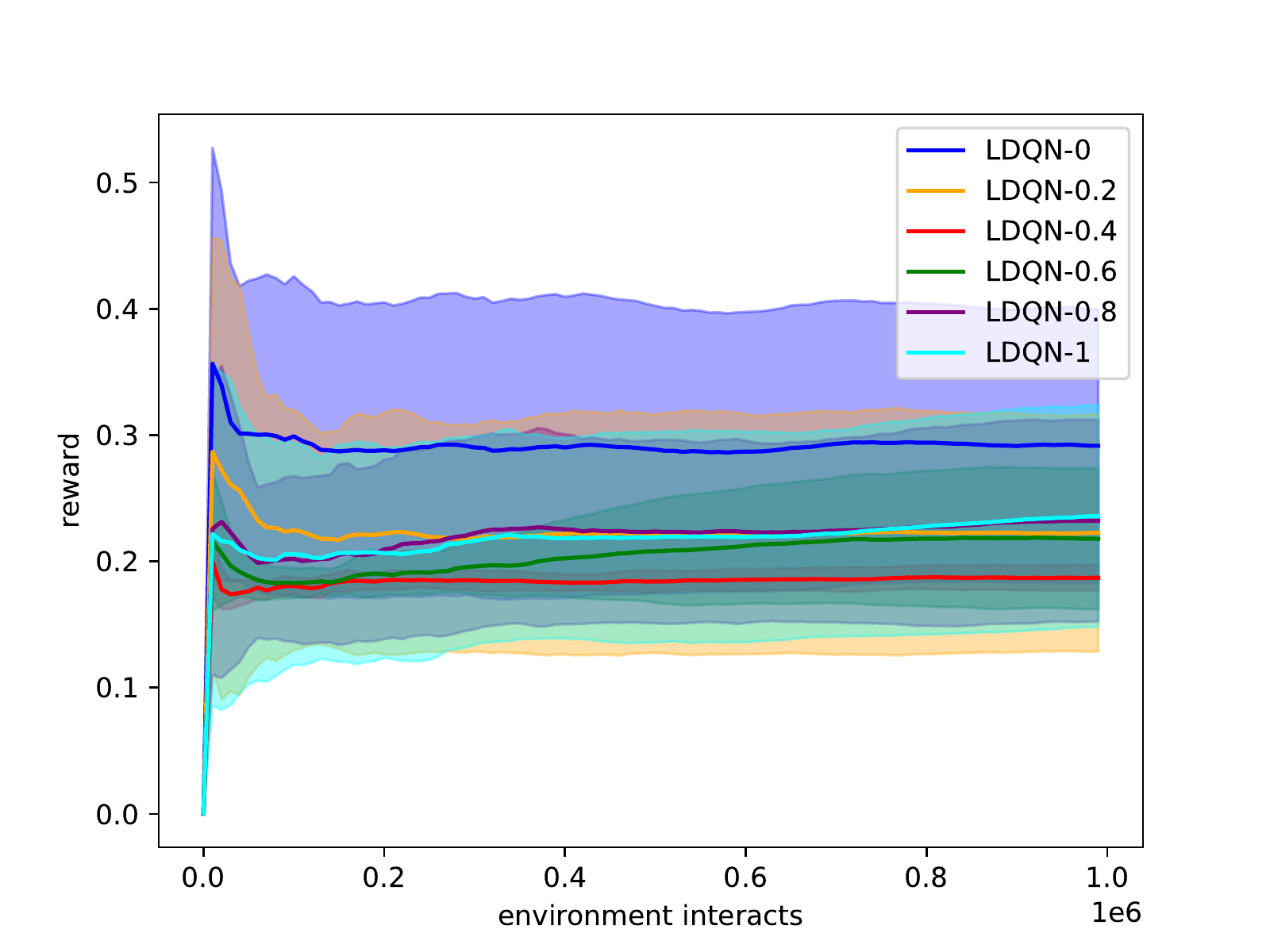}}}
    \subfloat[Cost]{{\includegraphics[width=0.24\textwidth]{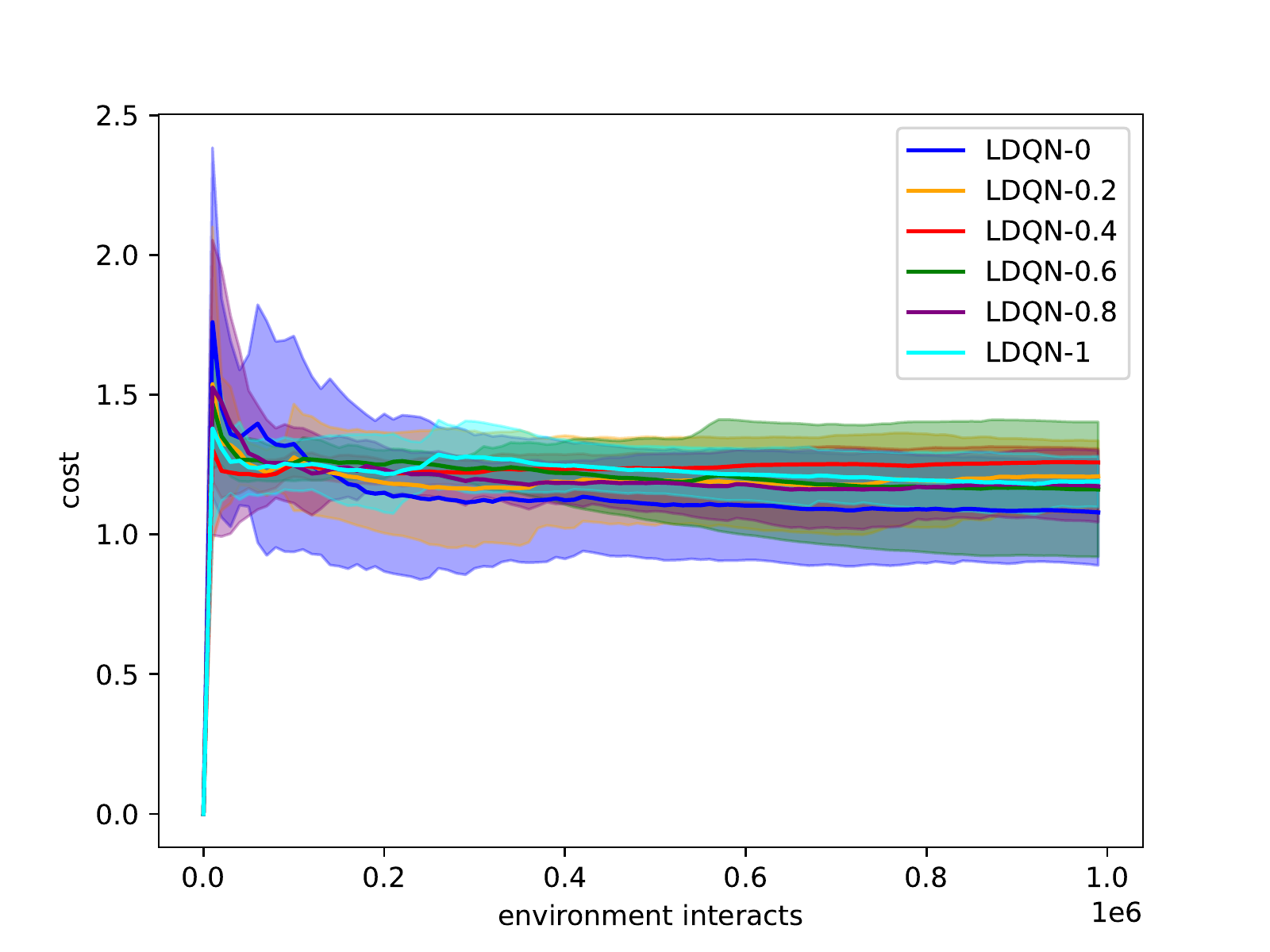}}}
    \caption{This experiment studies the impact of LDQN's tolerance parameter in the CartSafe environment.}
    \label{fig:tolerance_study}
\end{figure}

\section{A Comparison of Different Settings of PB-LRL}

Here we present data that compares different settings of of PB-LRL. Specifically, we run LPPO and LA2C in CartSafe, with and without second-order derivatives in the loss, and with and without sequential optimisation of the objectives, as opposed to the simultaneous method presented in the paper. We also include LDQN for comparison. The result of this experiment is shown in Figure~\ref{fig:lac_extras}. We can see that these versions of PB-LRL all are quite similar in terms of the cost they attain, and that LA2C achieves higher reward than LPPO. Moreover, it seems that the use of second-order terms or sequential optimisation has an impact on the performance, but that this impact is not that large. The other hyperparameters used in this experiment were similar to those used for the experiment in Section \ref{sec:experiments:2}.

\begin{figure}
  \centering
    \subfloat[Reward]{{\includegraphics[width=0.24\textwidth]{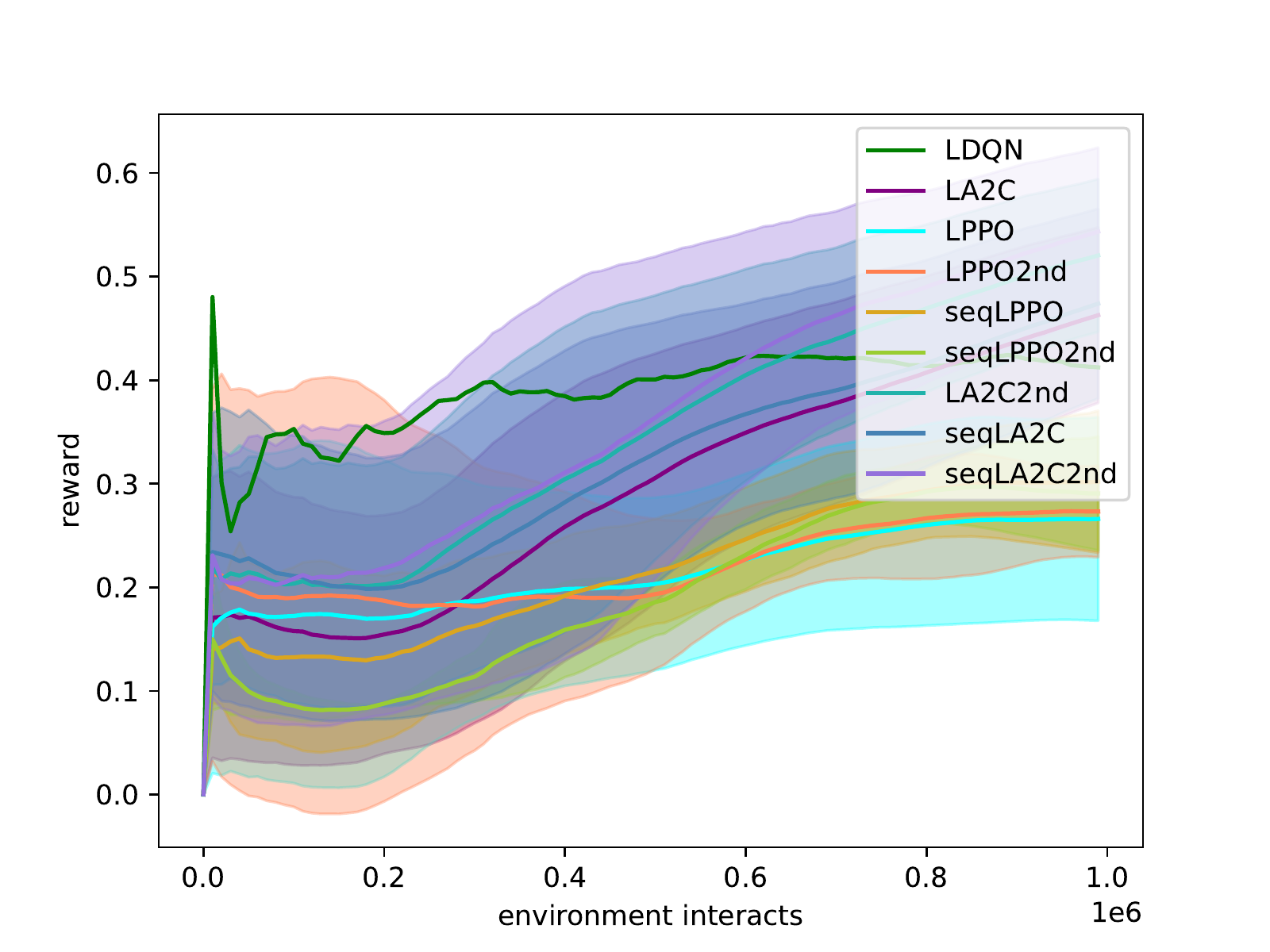}}}
    \subfloat[Cost]{{\includegraphics[width=0.24\textwidth]{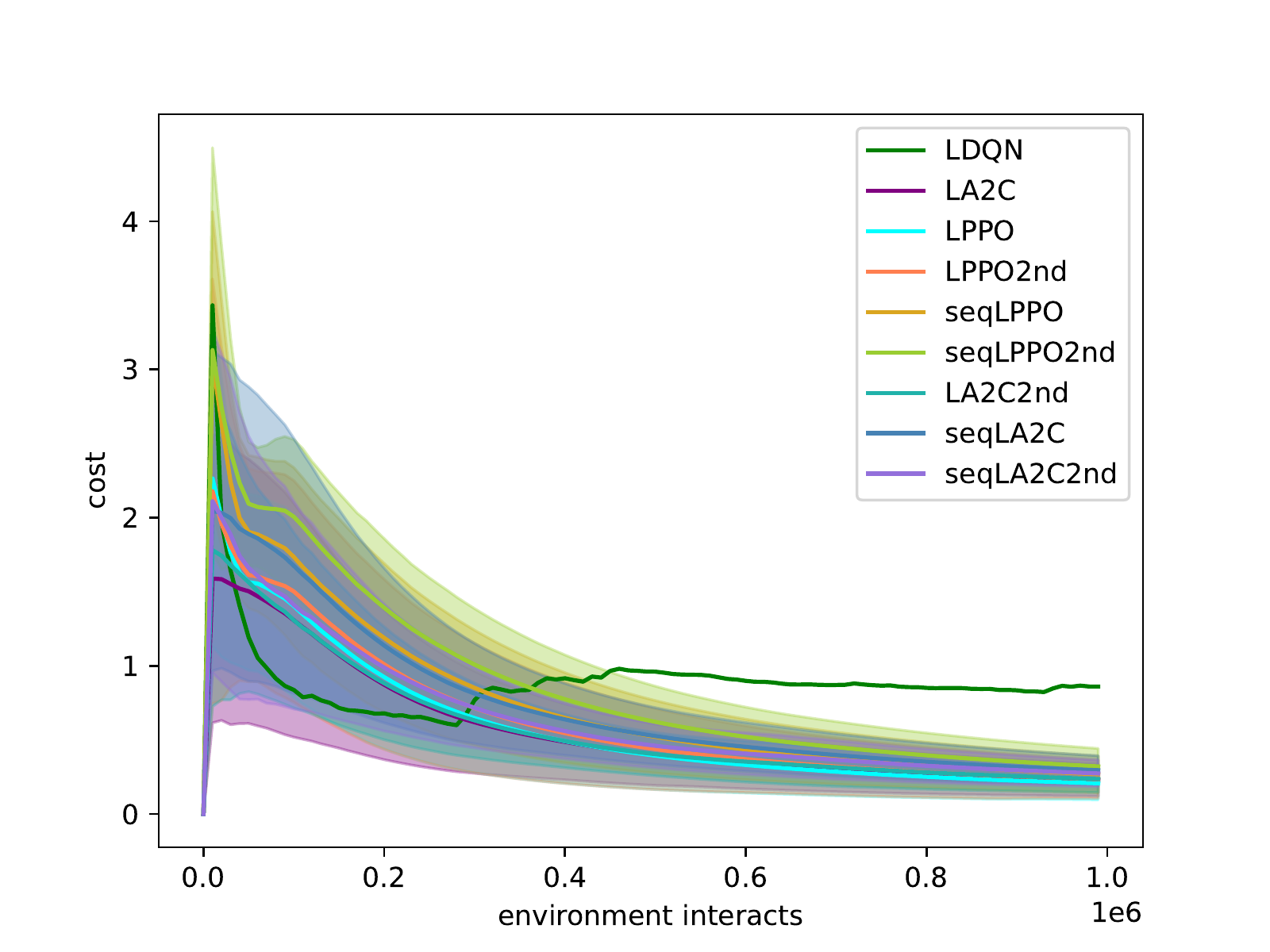}}}
    \caption{This experiment compares different versions of PB-LRL in the CartSafe environment.}
    \label{fig:lac_extras}
\end{figure}

\section{Policy Visualisations}

Here we present some additional experimental data. Figure~\ref{fig:cartsafe_policy_visualisations} displays visualisations of the policies learned by all the algorithms in the CartSafe environment during the experiment presented in Section \ref{sec:experiments}, and Figure~\ref{fig:gridnav_policy_visualisations} does the same for the GridNav environment. We have not created any visualisations of the policies learned in the Intersection environment, since it would be difficult to display such a policy as a static 2D plot.

Looking at the plots for the CartSafe environment, we can see that A2C and VaR\_AC seem to reliably learn to simply go straight to the right or straight to the left. LA2C and LPPO are good at staying inside the safe region, but also don't get a lot of reward. LDQN learns policies that generate more reward, but sometimes strays outside the safe region, and RCPO will sometimes learn policies that are safe but overly cautious, and sometimes learn policies that get more reward, but sometimes go outside the safe region. Note that this environment allows for policies that get high reward, but never stray outside the safe region.

Looking at the plots for the GridNav environment, we can see that LA2C, LPPO, and VaR\_AC all seem to mostly reach the goal. LDQN mostly stays around the start square, without reaching the goal. RCPO wanders around the board a lot, but only sometimes reaches the goal. AproPO seems to have a preference for going in the same direction, and sometimes reaches the goal.

\newpage

\begin{figure}[h]
    \centering
\stackunder[5pt]{
		\stackunder[5pt]{\subfloat{{\includegraphics[width=0.04\textwidth]{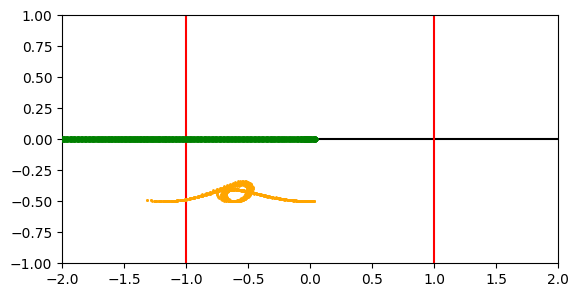}}}}{\tiny{09231}}
		\stackunder[5pt]{\subfloat{{\includegraphics[width=0.04\textwidth]{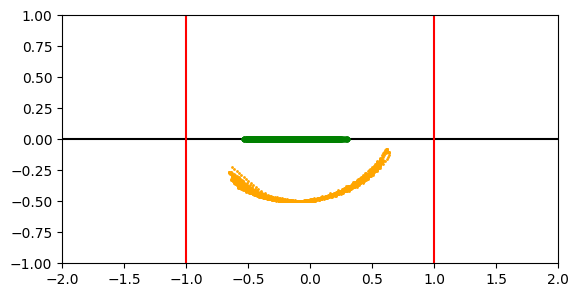}}}}{\tiny{72498}}
		\stackunder[5pt]{\subfloat{{\includegraphics[width=0.04\textwidth]{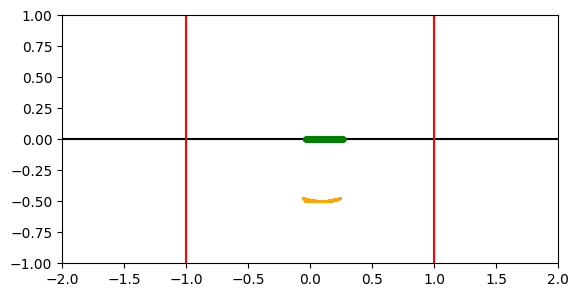}}}}{\tiny{76473}}
		\stackunder[5pt]{\subfloat{{\includegraphics[width=0.04\textwidth]{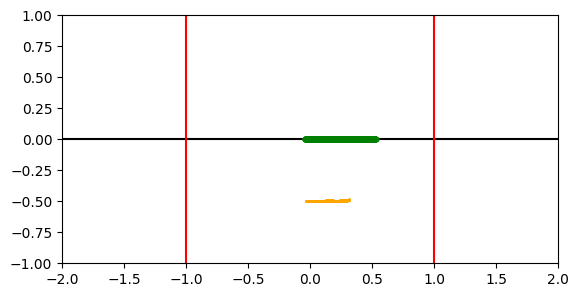}}}}{\tiny{77119}}
		\stackunder[5pt]{\subfloat{{\includegraphics[width=0.04\textwidth]{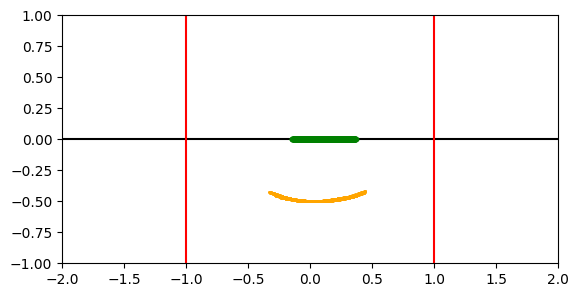}}}}{\tiny{77305}}
		\stackunder[5pt]{\subfloat{{\includegraphics[width=0.04\textwidth]{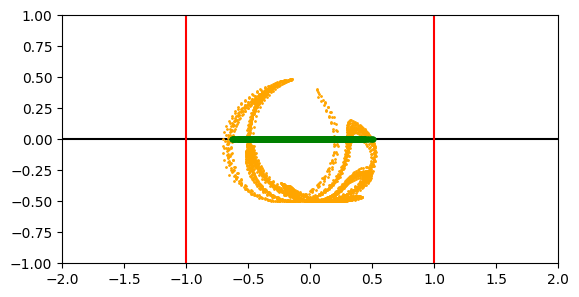}}}}{\tiny{78146}}
		\stackunder[5pt]{\subfloat{{\includegraphics[width=0.04\textwidth]{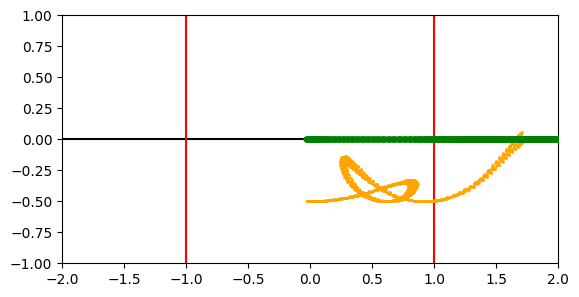}}}}{\tiny{82822}}
		\stackunder[5pt]{\subfloat{{\includegraphics[width=0.04\textwidth]{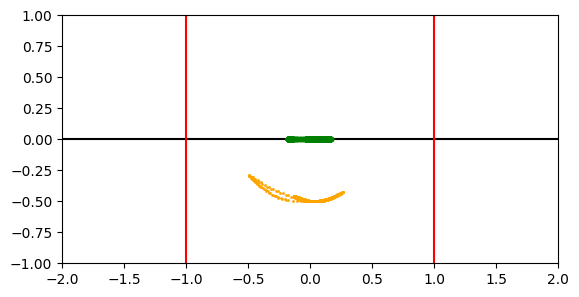}}}}{\tiny{86686}}
		\stackunder[5pt]{\subfloat{{\includegraphics[width=0.04\textwidth]{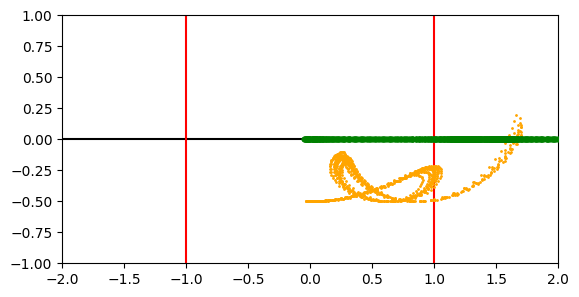}}}}{\tiny{94373}}
		\stackunder[5pt]{\subfloat{{\includegraphics[width=0.04\textwidth]{figures/CartSafe/Policies/DQN-94373-policy.png}}}}{\tiny{94373}}
		\label{fig:cartsafe_dqn}
	}{\tiny{Policy visualisations for DQN in the CartSafe environment.}}
\stackunder[5pt]{
		\stackunder[5pt]{\subfloat{{\includegraphics[width=0.04\textwidth]{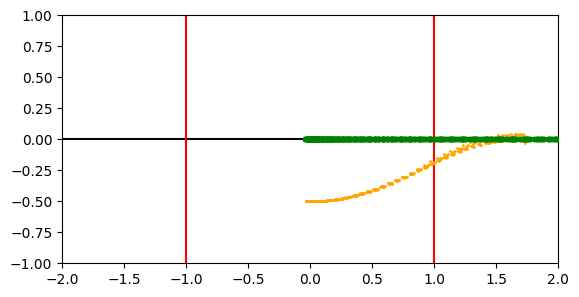}}}}{\tiny{04338}}
		\stackunder[5pt]{\subfloat{{\includegraphics[width=0.04\textwidth]{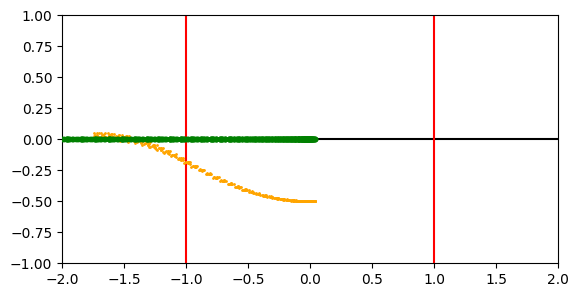}}}}{\tiny{04464}}
		\stackunder[5pt]{\subfloat{{\includegraphics[width=0.04\textwidth]{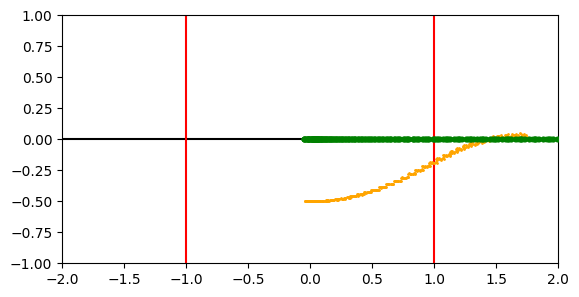}}}}{\tiny{04512}}
		\stackunder[5pt]{\subfloat{{\includegraphics[width=0.04\textwidth]{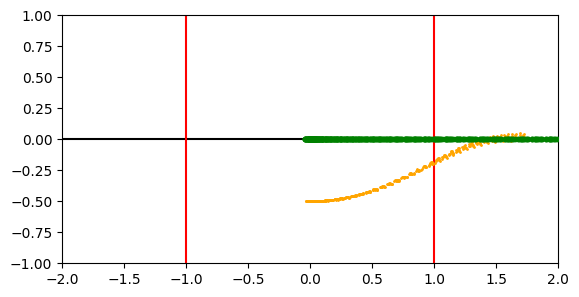}}}}{\tiny{04548}}
		\stackunder[5pt]{\subfloat{{\includegraphics[width=0.04\textwidth]{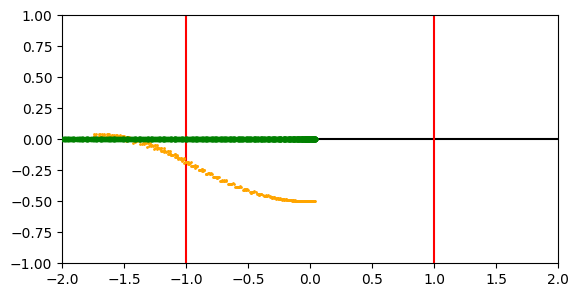}}}}{\tiny{04639}}
		\stackunder[5pt]{\subfloat{{\includegraphics[width=0.04\textwidth]{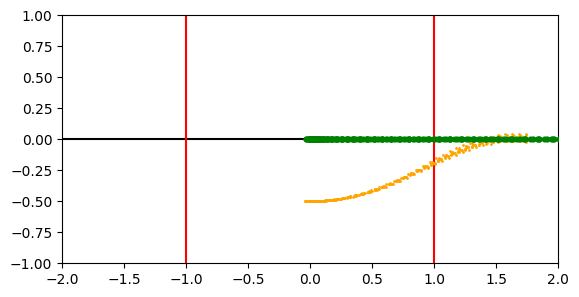}}}}{\tiny{04952}}
		\stackunder[5pt]{\subfloat{{\includegraphics[width=0.04\textwidth]{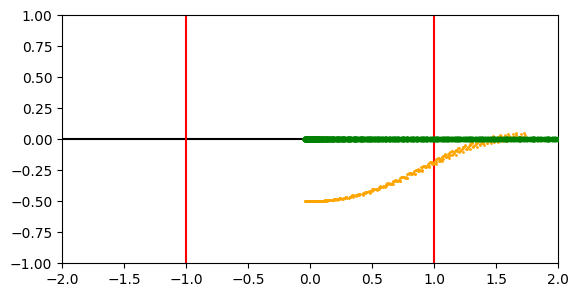}}}}{\tiny{44986}}
		\stackunder[5pt]{\subfloat{{\includegraphics[width=0.04\textwidth]{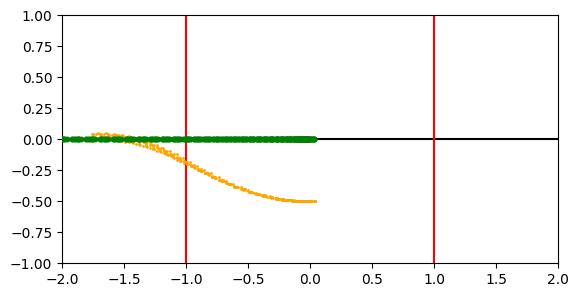}}}}{\tiny{45117}}
		\stackunder[5pt]{\subfloat{{\includegraphics[width=0.04\textwidth]{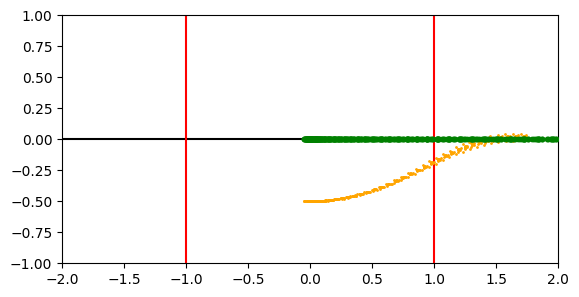}}}}{\tiny{48894}}
		\stackunder[5pt]{\subfloat{{\includegraphics[width=0.04\textwidth]{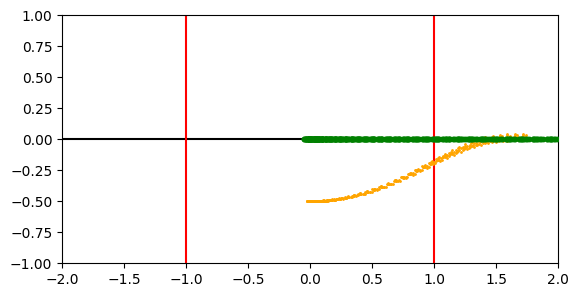}}}}{\tiny{99956}}
		\label{fig:cartsafe_ac}
	}{\tiny{Policy visualisations for A2C in the CartSafe environment.}}
\stackunder[5pt]{
        \stackunder[5pt]{\subfloat{{\includegraphics[width=0.04\textwidth]{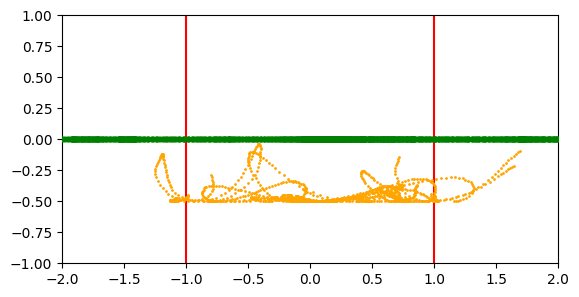}}}}{\tiny{16902}}
		\stackunder[5pt]{\subfloat{{\includegraphics[width=0.04\textwidth]{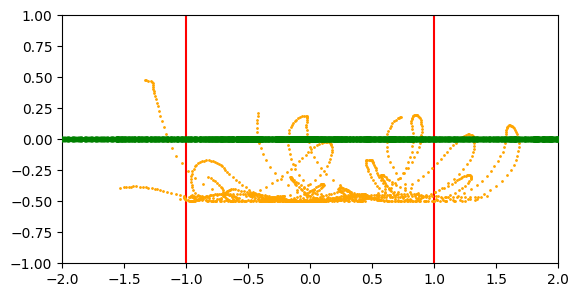}}}}{\tiny{24395}}
		\stackunder[5pt]{\subfloat{{\includegraphics[width=0.04\textwidth]{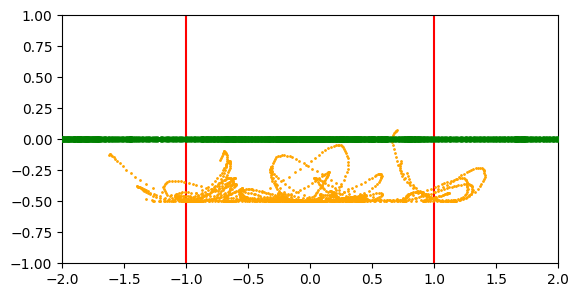}}}}{\tiny{27076}}
		\stackunder[5pt]{\subfloat{{\includegraphics[width=0.04\textwidth]{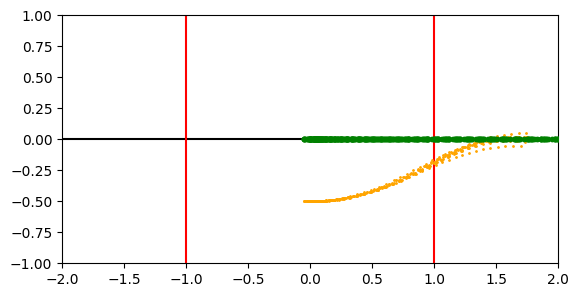}}}}{\tiny{29703}}
		\stackunder[5pt]{\subfloat{{\includegraphics[width=0.04\textwidth]{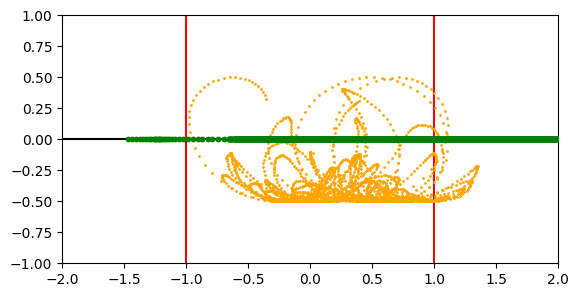}}}}{\tiny{34374}}
		\stackunder[5pt]{\subfloat{{\includegraphics[width=0.04\textwidth]{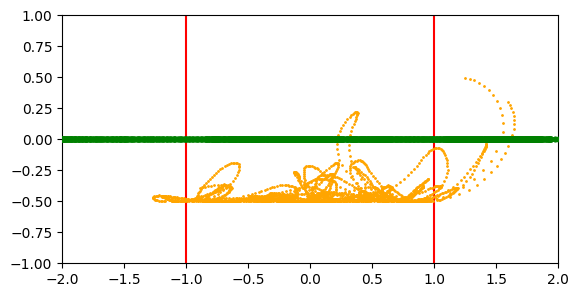}}}}{\tiny{39561}}
		\stackunder[5pt]{\subfloat{{\includegraphics[width=0.04\textwidth]{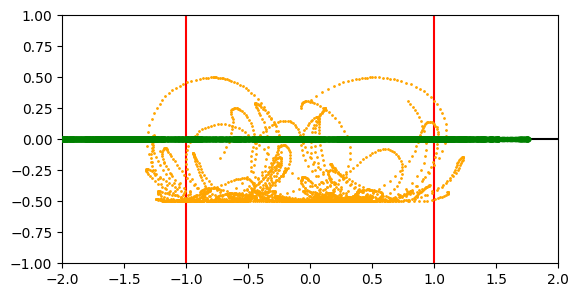}}}}{\tiny{76288}}
		\stackunder[5pt]{\subfloat{{\includegraphics[width=0.04\textwidth]{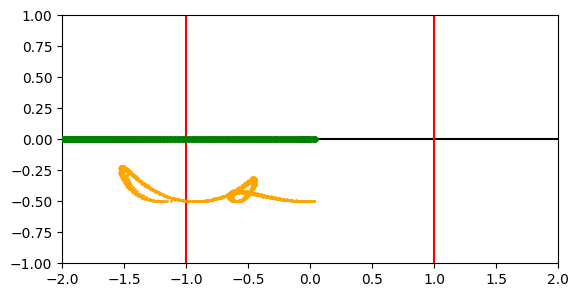}}}}{\tiny{76601}}
		\stackunder[5pt]{\subfloat{{\includegraphics[width=0.04\textwidth]{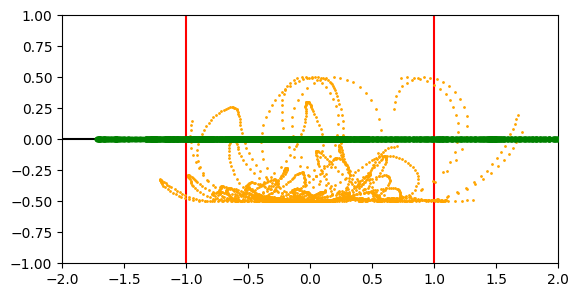}}}}{\tiny{95442}}
		\stackunder[5pt]{\subfloat{{\includegraphics[width=0.04\textwidth]{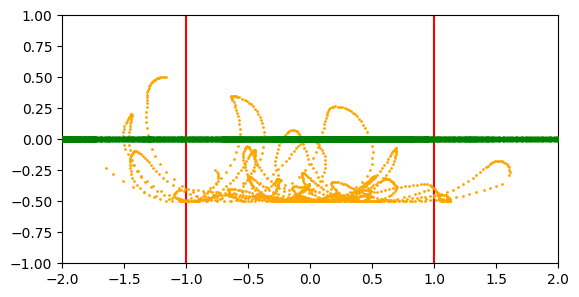}}}}{\tiny{99713}}
		\label{fig:cartsafe_ldqn}
	}{\tiny{Policy visualisations for LDQN in the CartSafe environment.}}
\stackunder[5pt]{
		\stackunder[5pt]{\subfloat{{\includegraphics[width=0.04\textwidth]{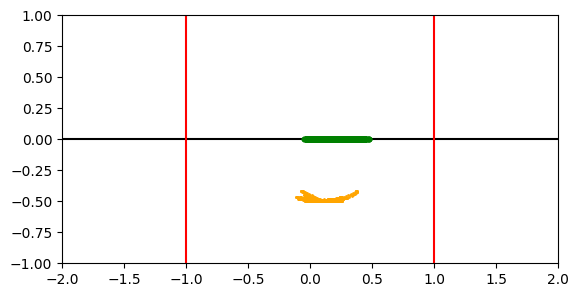}}}}{\tiny{04383}}
		\stackunder[5pt]{\subfloat{{\includegraphics[width=0.04\textwidth]{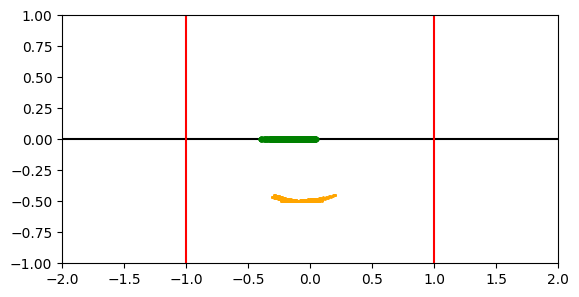}}}}{\tiny{04917}}
		\stackunder[5pt]{\subfloat{{\includegraphics[width=0.04\textwidth]{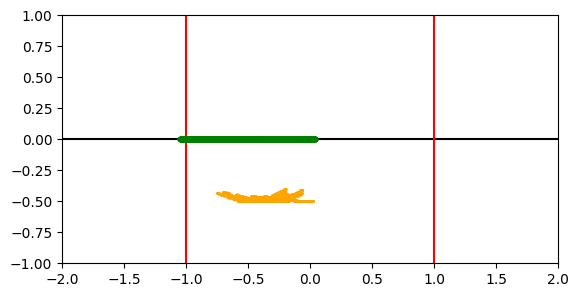}}}}{\tiny{05185}}
		\stackunder[5pt]{\subfloat{{\includegraphics[width=0.04\textwidth]{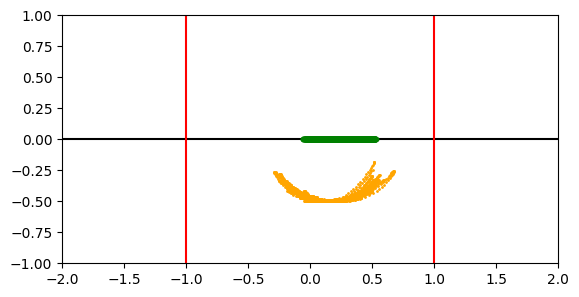}}}}{\tiny{20007}}
		\stackunder[5pt]{\subfloat{{\includegraphics[width=0.04\textwidth]{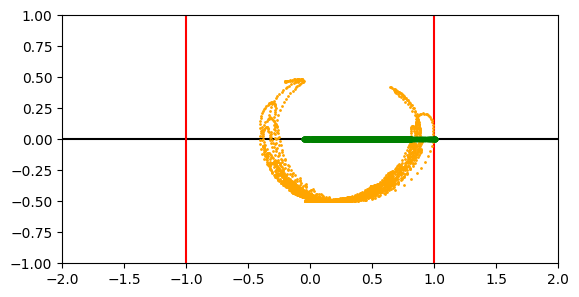}}}}{\tiny{44552}}
		\stackunder[5pt]{\subfloat{{\includegraphics[width=0.04\textwidth]{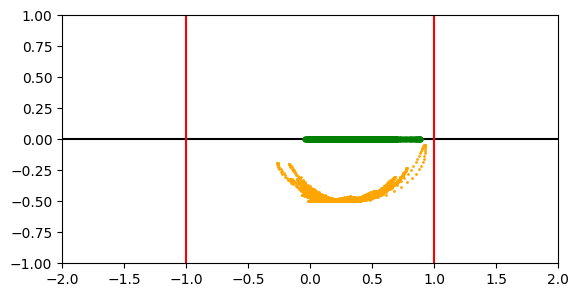}}}}{\tiny{45421}}
		\stackunder[5pt]{\subfloat{{\includegraphics[width=0.04\textwidth]{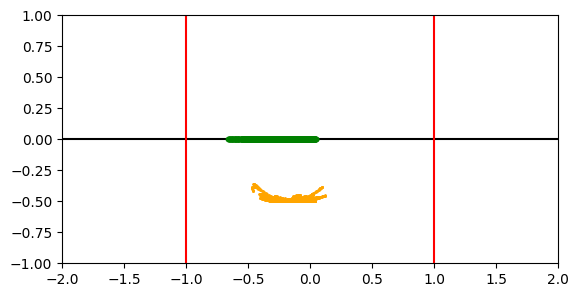}}}}{\tiny{50823}}
		\stackunder[5pt]{\subfloat{{\includegraphics[width=0.04\textwidth]{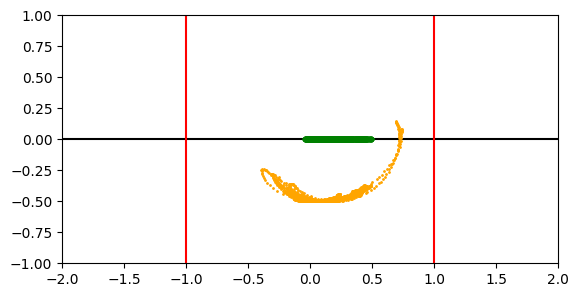}}}}{\tiny{80867}}
		\stackunder[5pt]{\subfloat{{\includegraphics[width=0.04\textwidth]{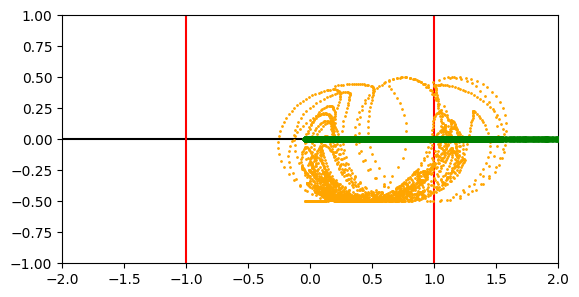}}}}{\tiny{83733}}
		\stackunder[5pt]{\subfloat{{\includegraphics[width=0.04\textwidth]{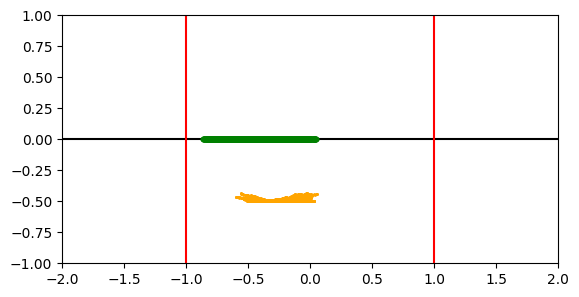}}}}{\tiny{98053}}
		\label{fig:cartsafe_la2c}
	}{\tiny{Policy visualisations for LA2C in the CartSafe environment.}}
\stackunder[5pt]{
		\stackunder[5pt]{\subfloat{{\includegraphics[width=0.04\textwidth]{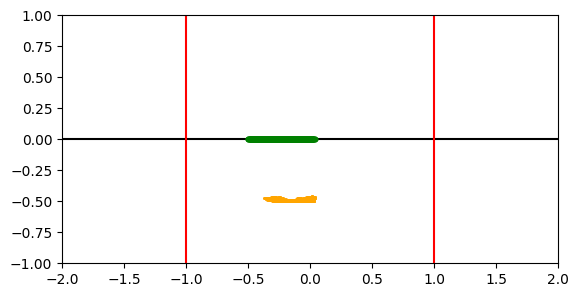}}}}{\tiny{04496}}
		\stackunder[5pt]{\subfloat{{\includegraphics[width=0.04\textwidth]{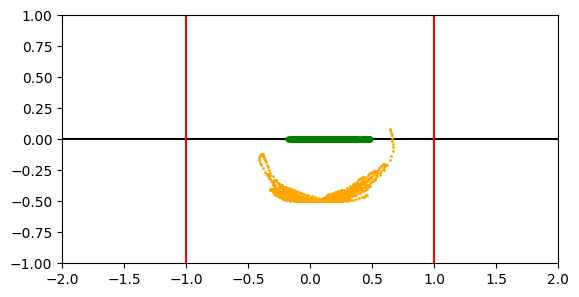}}}}{\tiny{04512}}
		\stackunder[5pt]{\subfloat{{\includegraphics[width=0.04\textwidth]{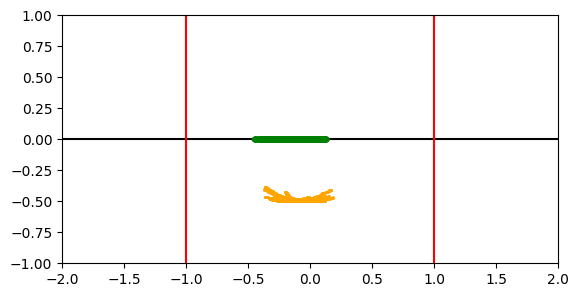}}}}{\tiny{08716}}
		\stackunder[5pt]{\subfloat{{\includegraphics[width=0.04\textwidth]{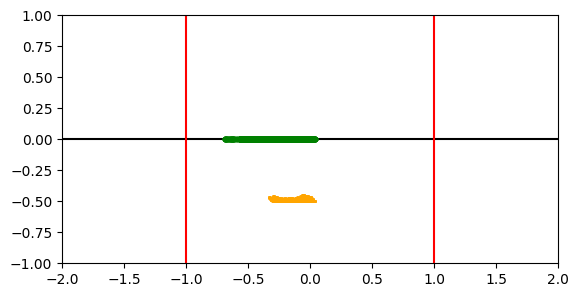}}}}{\tiny{26224}}
		\stackunder[5pt]{\subfloat{{\includegraphics[width=0.04\textwidth]{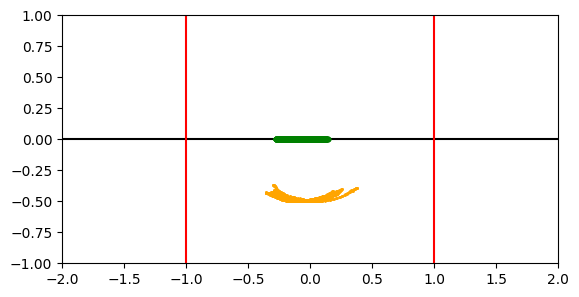}}}}{\tiny{38364}}
		\stackunder[5pt]{\subfloat{{\includegraphics[width=0.04\textwidth]{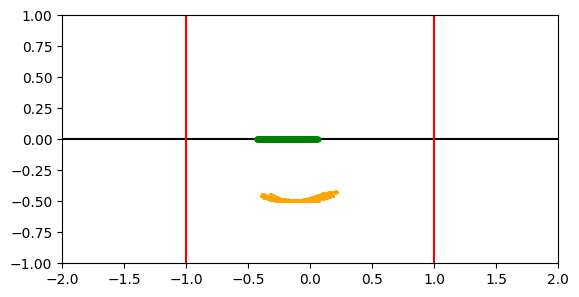}}}}{\tiny{44798}}
		\stackunder[5pt]{\subfloat{{\includegraphics[width=0.04\textwidth]{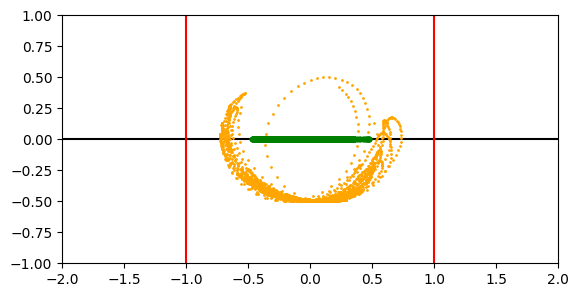}}}}{\tiny{45074}}
		\stackunder[5pt]{\subfloat{{\includegraphics[width=0.04\textwidth]{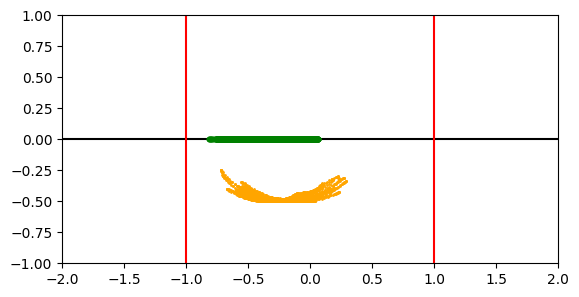}}}}{\tiny{60068}}
		\stackunder[5pt]{\subfloat{{\includegraphics[width=0.04\textwidth]{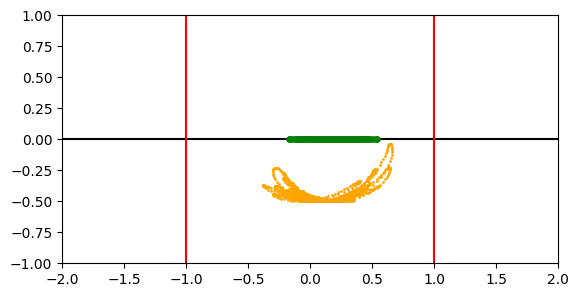}}}}{\tiny{60438}}
		\stackunder[5pt]{\subfloat{{\includegraphics[width=0.04\textwidth]{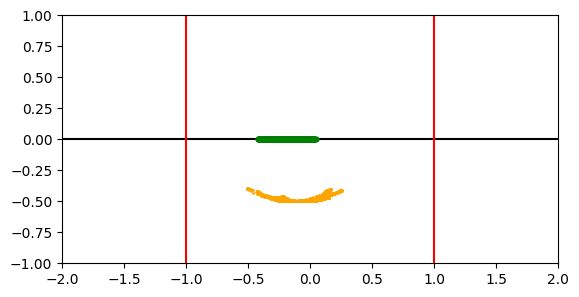}}}}{\tiny{60441}}
		\label{fig:cartsafe_lppo}
	}{\tiny{Policy visualisations for LPPO in the CartSafe environment.}}
\stackunder[5pt]{
		\stackunder[5pt]{\subfloat{{\includegraphics[width=0.04\textwidth]{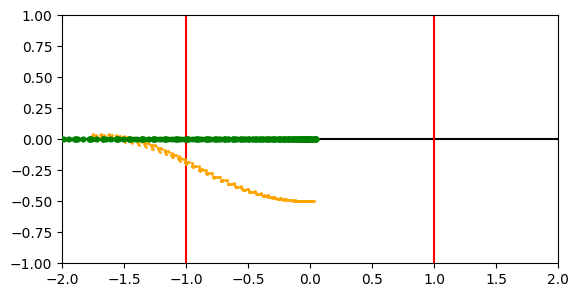}}}}{\tiny{02066}}
		\stackunder[5pt]{\subfloat{{\includegraphics[width=0.04\textwidth]{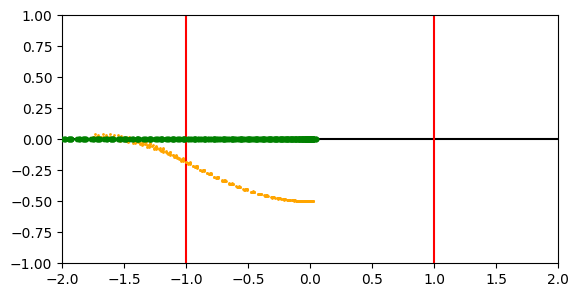}}}}{\tiny{04317}}
		\stackunder[5pt]{\subfloat{{\includegraphics[width=0.04\textwidth]{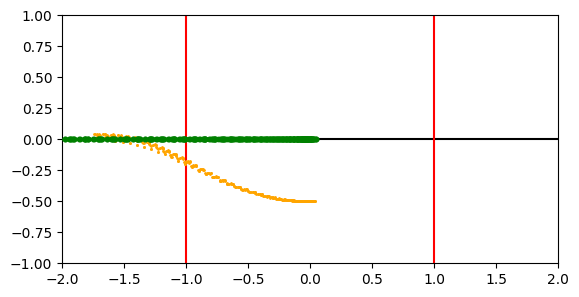}}}}{\tiny{04408}}
		\stackunder[5pt]{\subfloat{{\includegraphics[width=0.04\textwidth]{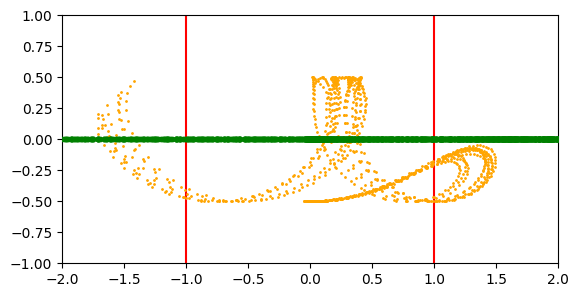}}}}{\tiny{04443}}
		\stackunder[5pt]{\subfloat{{\includegraphics[width=0.04\textwidth]{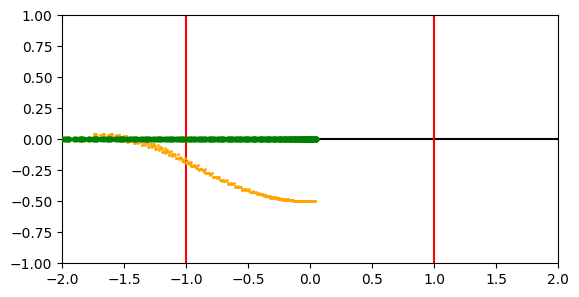}}}}{\tiny{04484}}
		\stackunder[5pt]{\subfloat{{\includegraphics[width=0.04\textwidth]{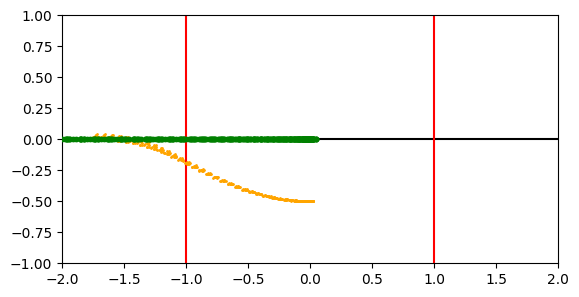}}}}{\tiny{10488}}
		\stackunder[5pt]{\subfloat{{\includegraphics[width=0.04\textwidth]{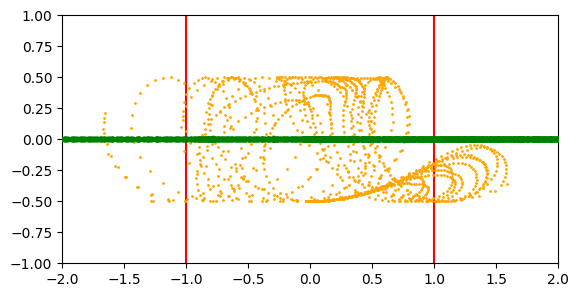}}}}{\tiny{13725}}
		\stackunder[5pt]{\subfloat{{\includegraphics[width=0.04\textwidth]{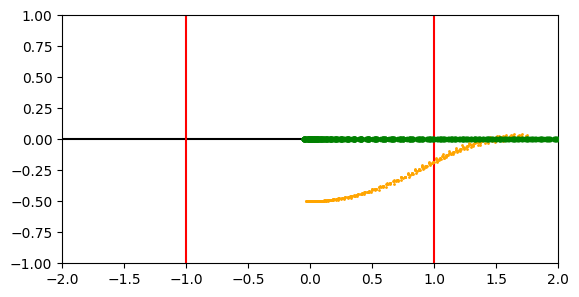}}}}{\tiny{44478}}
		\stackunder[5pt]{\subfloat{{\includegraphics[width=0.04\textwidth]{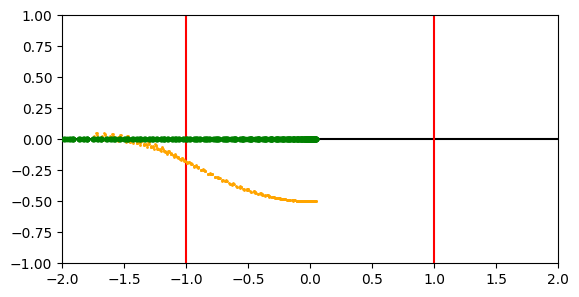}}}}{\tiny{60446}}
		\stackunder[5pt]{\subfloat{{\includegraphics[width=0.04\textwidth]{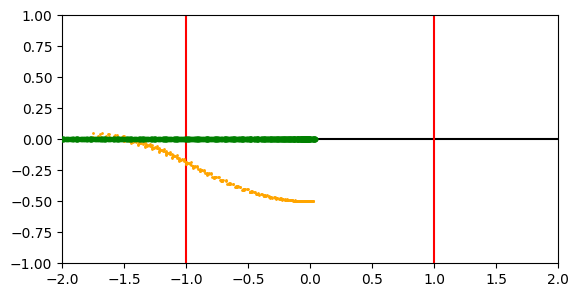}}}}{\tiny{74344}}
		\label{fig:cartsafe_var_ac}
	}{\tiny{Policy visualisations for VaR\_AC in the CartSafe environment.}}
\stackunder[5pt]{
		\stackunder[5pt]{\subfloat{{\includegraphics[width=0.04\textwidth]{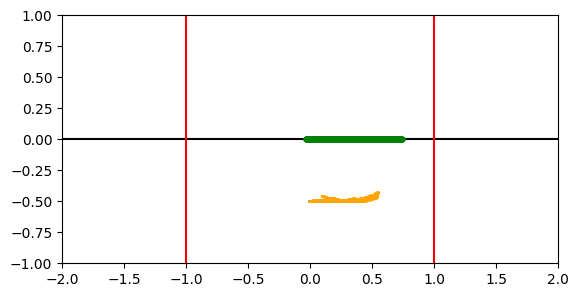}}}}{\tiny{02738}}
		\stackunder[5pt]{\subfloat{{\includegraphics[width=0.04\textwidth]{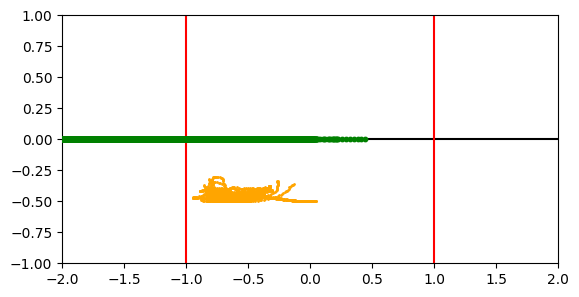}}}}{\tiny{04336}}
		\stackunder[5pt]{\subfloat{{\includegraphics[width=0.04\textwidth]{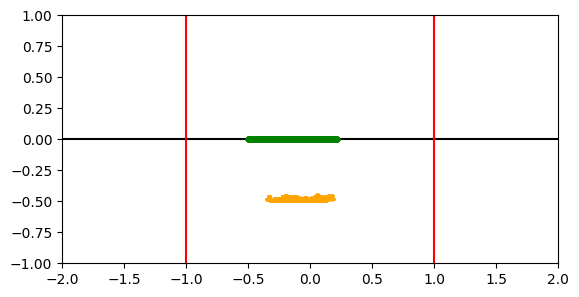}}}}{\tiny{04411}}
		\stackunder[5pt]{\subfloat{{\includegraphics[width=0.04\textwidth]{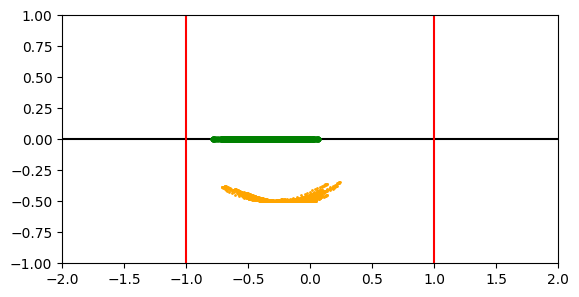}}}}{\tiny{04536}}
		\stackunder[5pt]{\subfloat{{\includegraphics[width=0.04\textwidth]{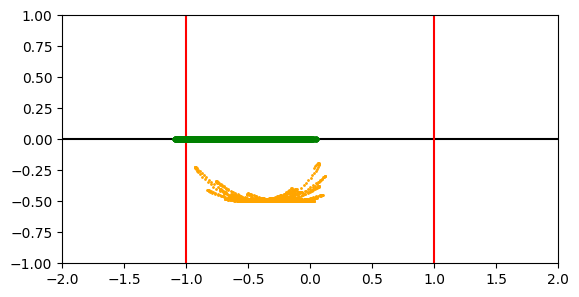}}}}{\tiny{04643}}
		\stackunder[5pt]{\subfloat{{\includegraphics[width=0.04\textwidth]{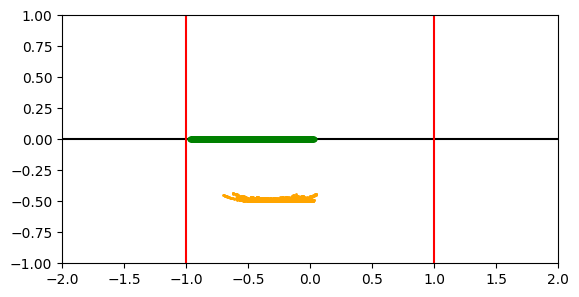}}}}{\tiny{04915}}
		\stackunder[5pt]{\subfloat{{\includegraphics[width=0.04\textwidth]{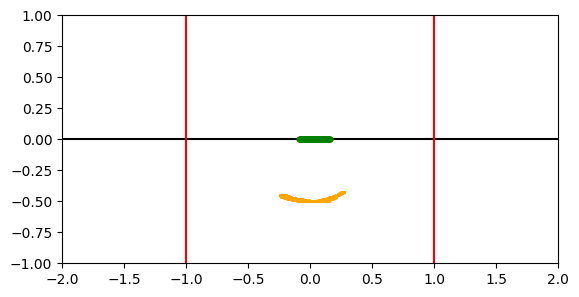}}}}{\tiny{44643}}
		\stackunder[5pt]{\subfloat{{\includegraphics[width=0.04\textwidth]{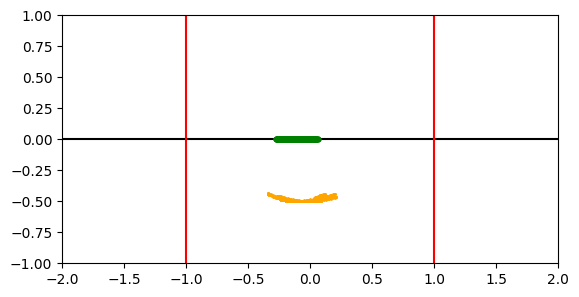}}}}{\tiny{45532}}
		\stackunder[5pt]{\subfloat{{\includegraphics[width=0.04\textwidth]{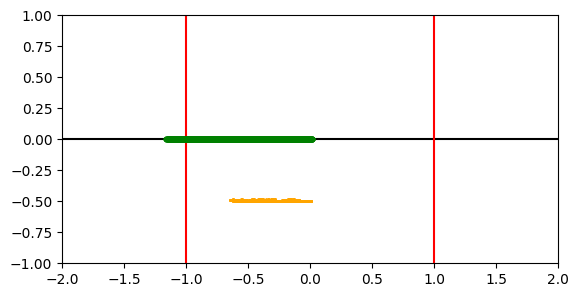}}}}{\tiny{48825}}
		\stackunder[5pt]{\subfloat{{\includegraphics[width=0.04\textwidth]{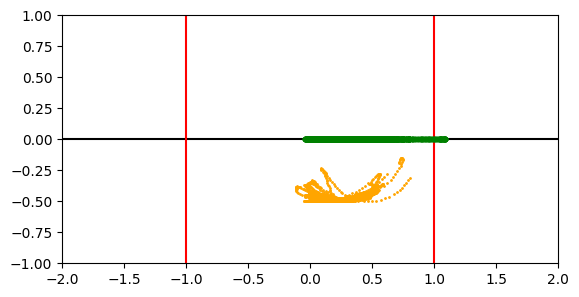}}}}{\tiny{60446}}
		\label{fig:cartsafe_rcpo}
	}{\tiny{Policy visualisations for RCPO in the CartSafe environment.}}
	\stackunder[5pt]{
		\stackunder[5pt]{\subfloat{{\includegraphics[width=0.04\textwidth]{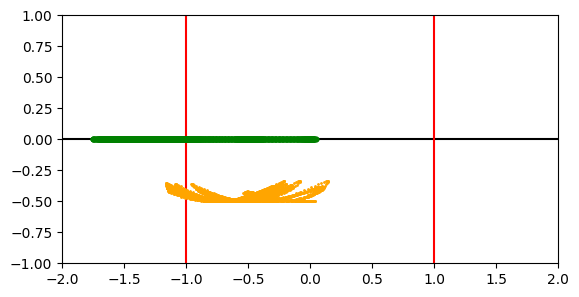}}}}{\tiny{03451}}
		\stackunder[5pt]{\subfloat{{\includegraphics[width=0.04\textwidth]{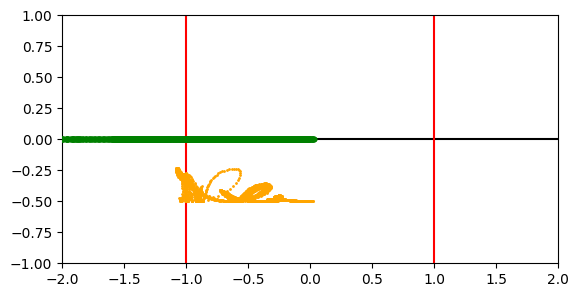}}}}{\tiny{04274}}
		\stackunder[5pt]{\subfloat{{\includegraphics[width=0.04\textwidth]{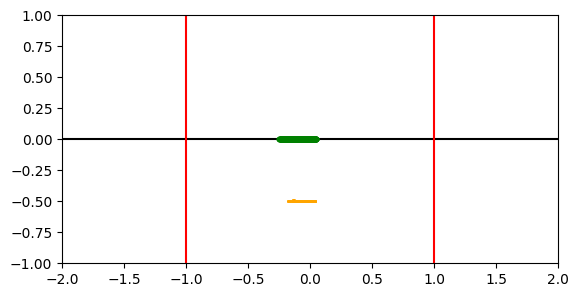}}}}{\tiny{04567}}
		\stackunder[5pt]{\subfloat{{\includegraphics[width=0.04\textwidth]{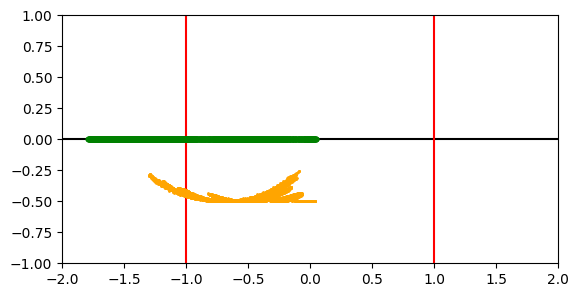}}}}{\tiny{04913}}
		\stackunder[5pt]{\subfloat{{\includegraphics[width=0.04\textwidth]{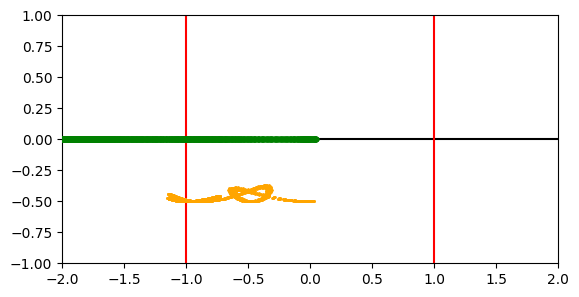}}}}{\tiny{05177}}
		\stackunder[5pt]{\subfloat{{\includegraphics[width=0.04\textwidth]{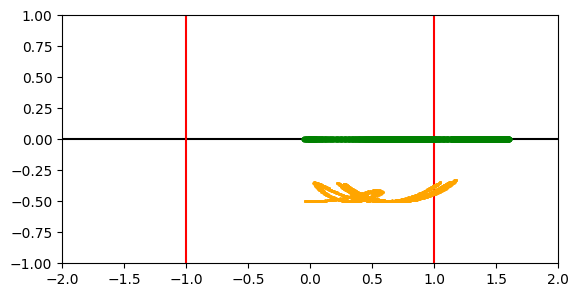}}}}{\tiny{28312}}
		\stackunder[5pt]{\subfloat{{\includegraphics[width=0.04\textwidth]{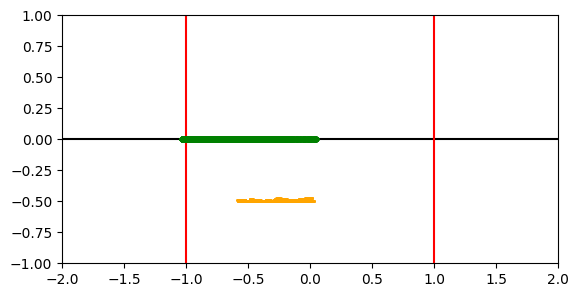}}}}{\tiny{44478}}
		\stackunder[5pt]{\subfloat{{\includegraphics[width=0.04\textwidth]{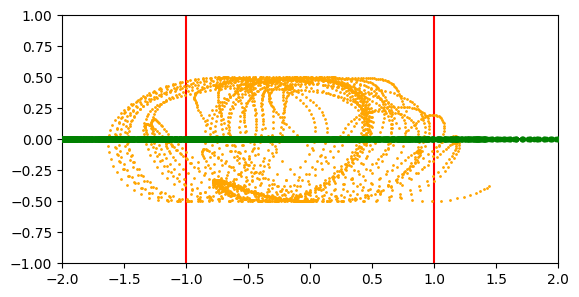}}}}{\tiny{44588}}
		\stackunder[5pt]{\subfloat{{\includegraphics[width=0.04\textwidth]{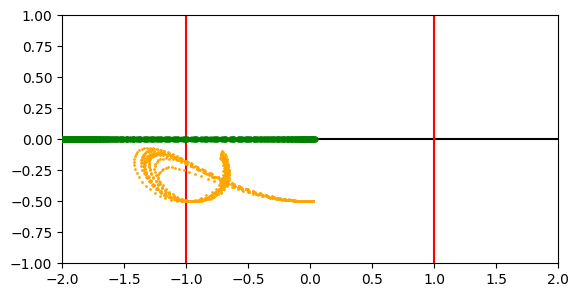}}}}{\tiny{60203}}
		\stackunder[5pt]{\subfloat{{\includegraphics[width=0.04\textwidth]{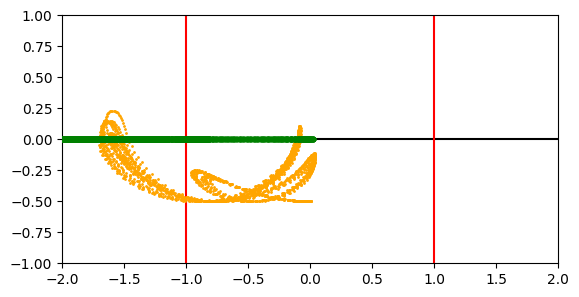}}}}{\tiny{73893}}
		\label{fig:cartsafe_apropo}
	}{\tiny{Policy visualisations for AproPO in the CartSafe environment.}}
    \caption{This figure displays visualisations of the policies learned by the algorithmsin the CartSafe environment. Each plot corresponds to a policy learned by an algorithm, and is labelled with the random seed used in that run of the experiment.They were generated by running each policy ten times, and drawing trajectory of the cart body (in green) and the tip of the pole (in orange).}
\label{fig:cartsafe_policy_visualisations}
\end{figure}

\newpage

\begin{figure}[h]
	\centering
\stackunder[5pt]{
		\stackunder[5pt]{\subfloat{{\includegraphics[width=0.04\textwidth]{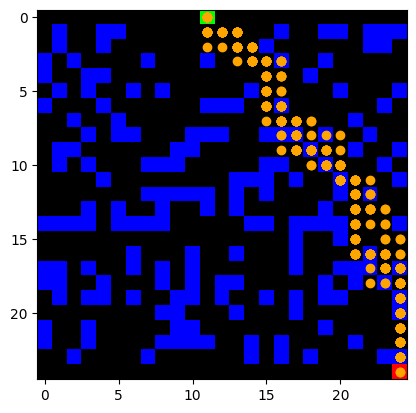}}}}{\tiny{36123}}
		\stackunder[5pt]{\subfloat{{\includegraphics[width=0.04\textwidth]{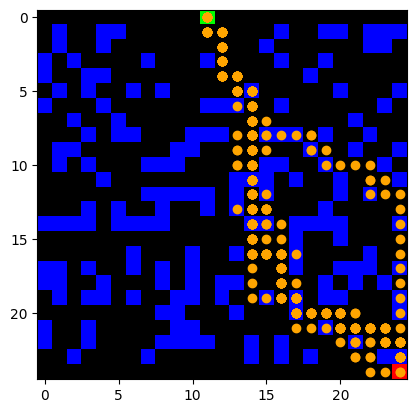}}}}{\tiny{84193}}
		\stackunder[5pt]{\subfloat{{\includegraphics[width=0.04\textwidth]{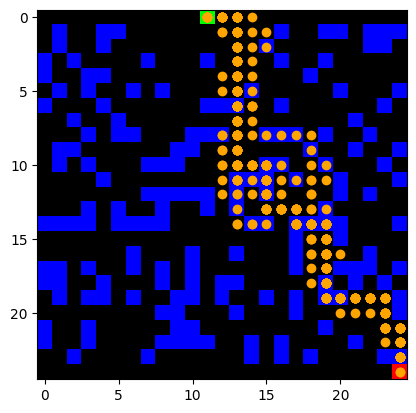}}}}{\tiny{73132}}
		\stackunder[5pt]{\subfloat{{\includegraphics[width=0.04\textwidth]{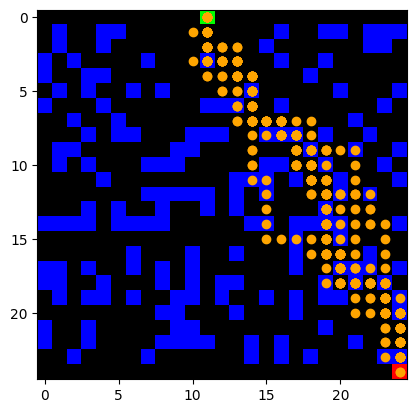}}}}{\tiny{93386}}
		\stackunder[5pt]{\subfloat{{\includegraphics[width=0.04\textwidth]{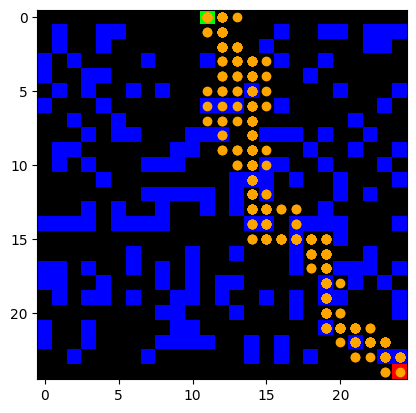}}}}{\tiny{89232}}
		\stackunder[5pt]{\subfloat{{\includegraphics[width=0.04\textwidth]{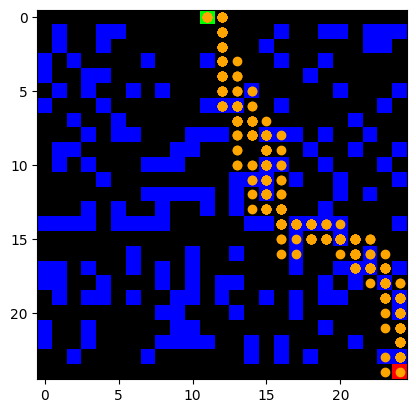}}}}{\tiny{60262}}
		\stackunder[5pt]{\subfloat{{\includegraphics[width=0.04\textwidth]{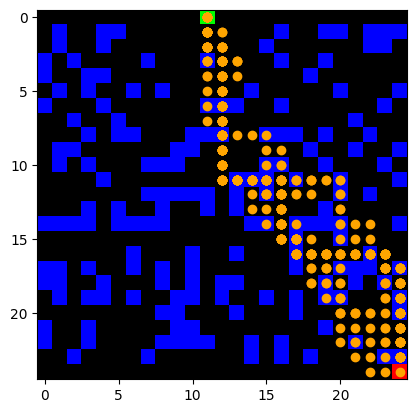}}}}{\tiny{72034}}
		\stackunder[5pt]{\subfloat{{\includegraphics[width=0.04\textwidth]{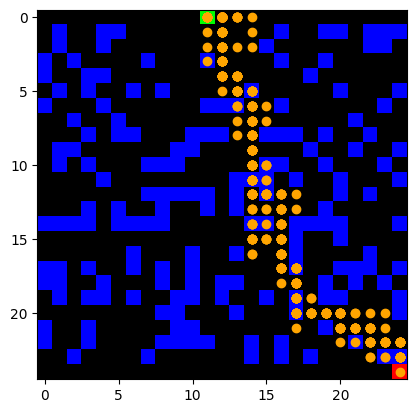}}}}{\tiny{68335}}
		\stackunder[5pt]{\subfloat{{\includegraphics[width=0.04\textwidth]{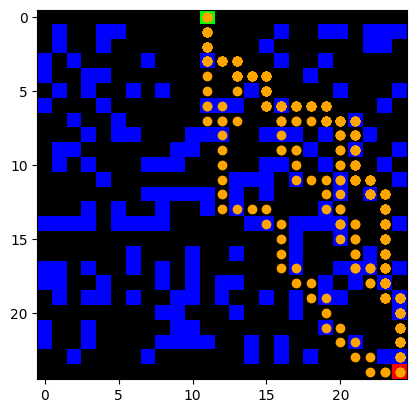}}}}{\tiny{45641}}
		\stackunder[5pt]{\subfloat{{\includegraphics[width=0.04\textwidth]{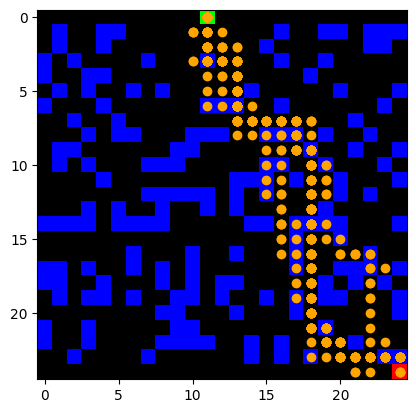}}}}{\tiny{65287}}
		\label{fig:gridnav_dqn}
	}{\tiny{Policy visualisations for DQN in the GridNav environment.}}
\stackunder[5pt]{
		\stackunder[5pt]{\subfloat{{\includegraphics[width=0.04\textwidth]{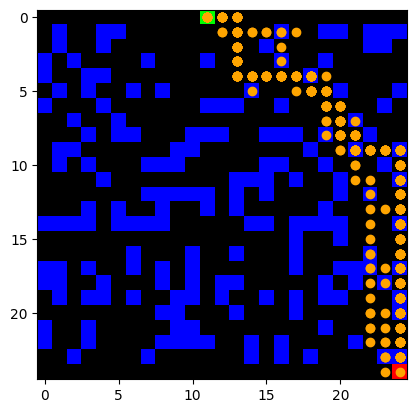}}}}{\tiny{88698}}
		\stackunder[5pt]{\subfloat{{\includegraphics[width=0.04\textwidth]{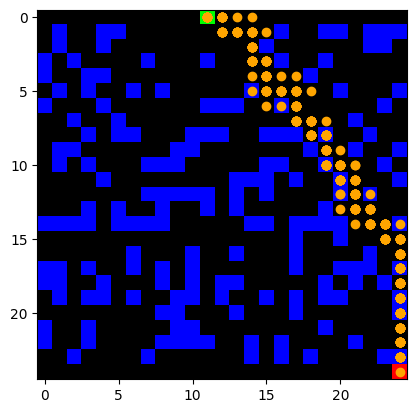}}}}{\tiny{89514}}
		\stackunder[5pt]{\subfloat{{\includegraphics[width=0.04\textwidth]{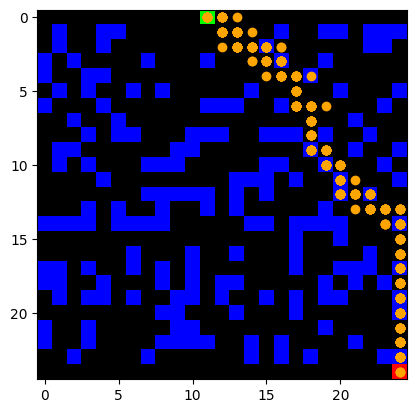}}}}{\tiny{57523}}
		\stackunder[5pt]{\subfloat{{\includegraphics[width=0.04\textwidth]{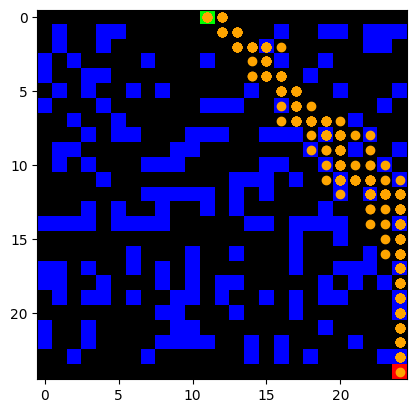}}}}{\tiny{76875}}
		\stackunder[5pt]{\subfloat{{\includegraphics[width=0.04\textwidth]{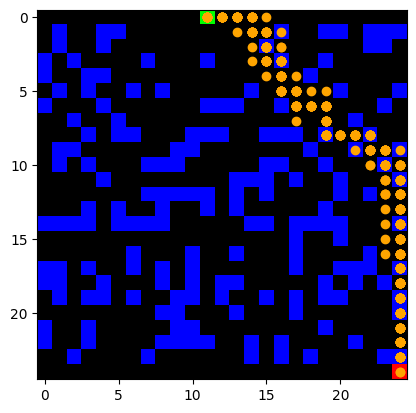}}}}{\tiny{43837}}
		\stackunder[5pt]{\subfloat{{\includegraphics[width=0.04\textwidth]{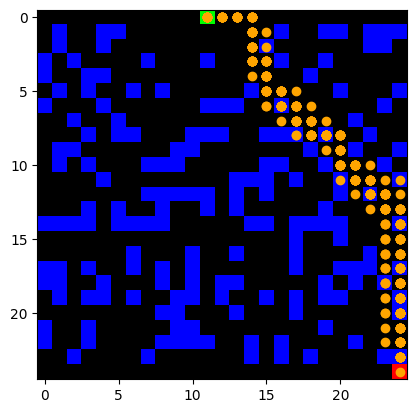}}}}{\tiny{52617}}
		\stackunder[5pt]{\subfloat{{\includegraphics[width=0.04\textwidth]{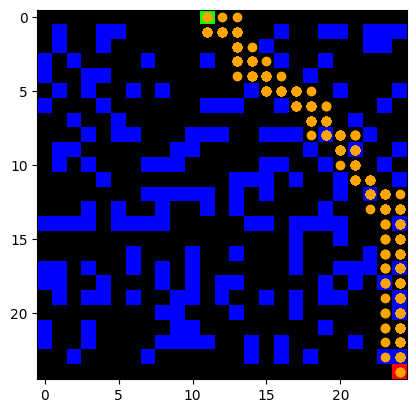}}}}{\tiny{87835}}
		\stackunder[5pt]{\subfloat{{\includegraphics[width=0.04\textwidth]{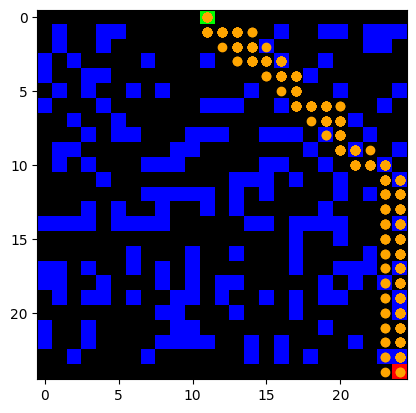}}}}{\tiny{68489}}
		\stackunder[5pt]{\subfloat{{\includegraphics[width=0.04\textwidth]{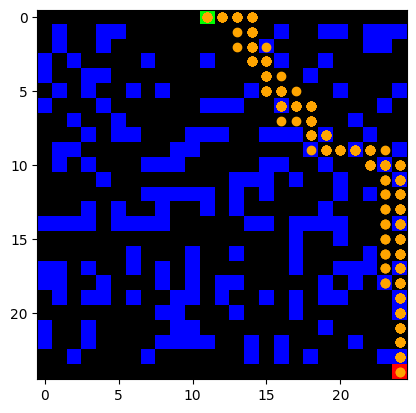}}}}{\tiny{96176}}
		\stackunder[5pt]{\subfloat{{\includegraphics[width=0.04\textwidth]{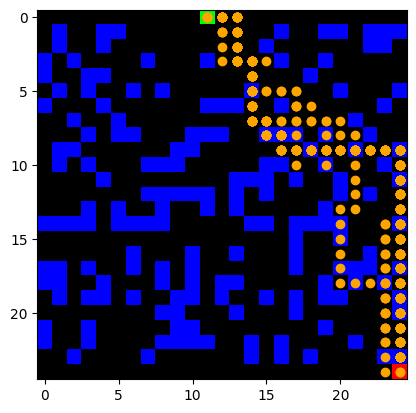}}}}{\tiny{50548}}
		\label{fig:gridnav_ac}
	}{\tiny{Policy visualisations for A2C in the GridNav environment.}}
\stackunder[5pt]{
		\stackunder[5pt]{\subfloat{{\includegraphics[width=0.04\textwidth]{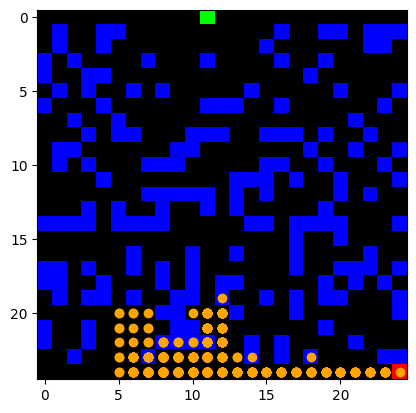}}}}{\tiny{63072}}
		\stackunder[5pt]{\subfloat{{\includegraphics[width=0.04\textwidth]{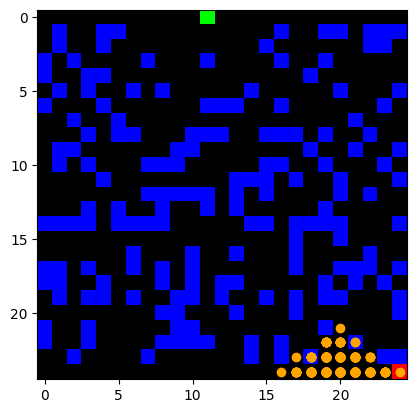}}}}{\tiny{99606}}
		\stackunder[5pt]{\subfloat{{\includegraphics[width=0.04\textwidth]{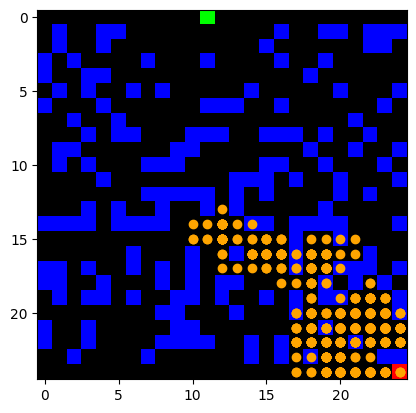}}}}{\tiny{67403}}
		\stackunder[5pt]{\subfloat{{\includegraphics[width=0.04\textwidth]{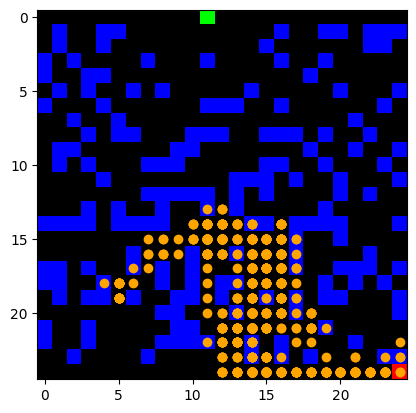}}}}{\tiny{86338}}
		\stackunder[5pt]{\subfloat{{\includegraphics[width=0.04\textwidth]{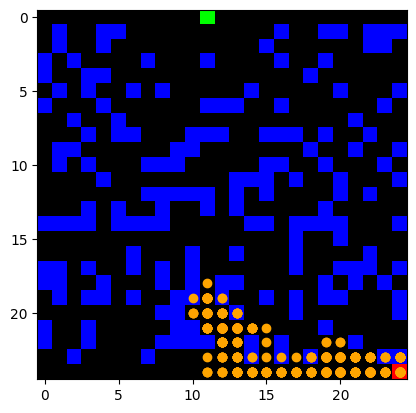}}}}{\tiny{60598}}
		\stackunder[5pt]{\subfloat{{\includegraphics[width=0.04\textwidth]{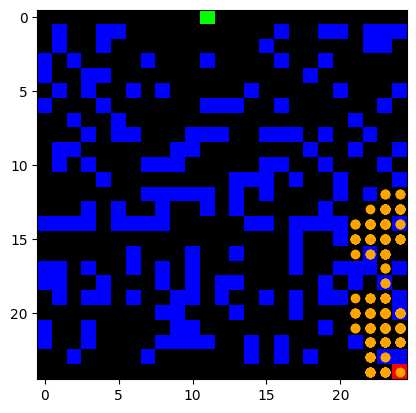}}}}{\tiny{97542}}
		\stackunder[5pt]{\subfloat{{\includegraphics[width=0.04\textwidth]{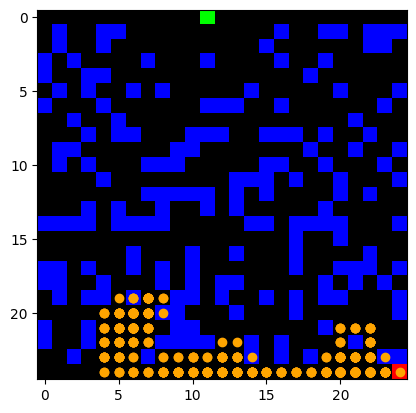}}}}{\tiny{15345}}
		\stackunder[5pt]{\subfloat{{\includegraphics[width=0.04\textwidth]{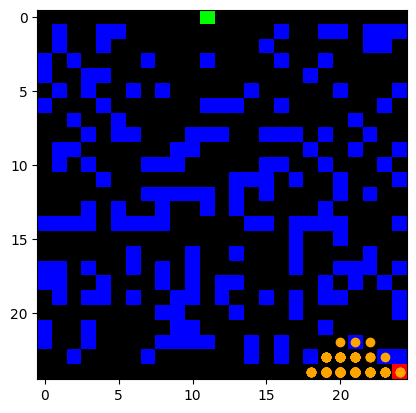}}}}{\tiny{18917}}
		\stackunder[5pt]{\subfloat{{\includegraphics[width=0.04\textwidth]{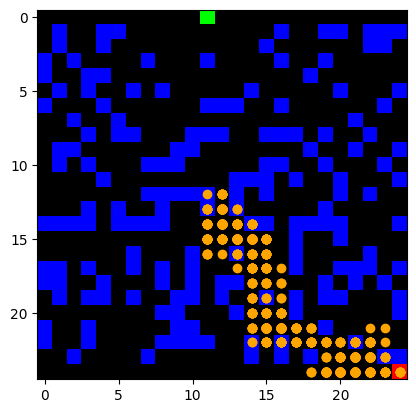}}}}{\tiny{85219}}
		\stackunder[5pt]{\subfloat{{\includegraphics[width=0.04\textwidth]{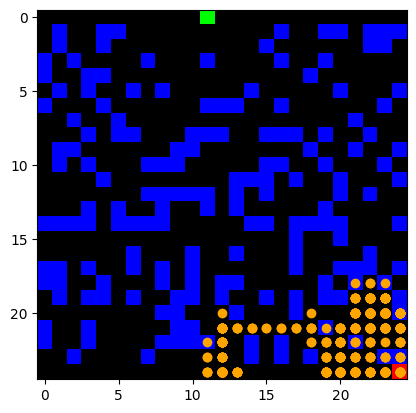}}}}{\tiny{71943}}
		\label{fig:gridnav_ldqn}
	}{\tiny{Policy visualisations for LDQN in the GridNav environment.}}
\stackunder[5pt]{
		\stackunder[5pt]{\subfloat{{\includegraphics[width=0.04\textwidth]{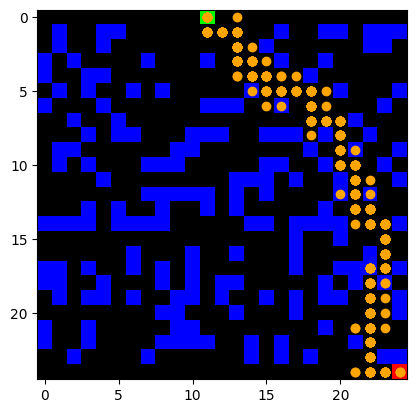}}}}{\tiny{26885}}
		\stackunder[5pt]{\subfloat{{\includegraphics[width=0.04\textwidth]{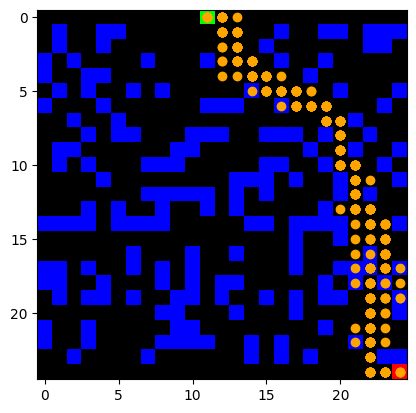}}}}{\tiny{41308}}
		\stackunder[5pt]{\subfloat{{\includegraphics[width=0.04\textwidth]{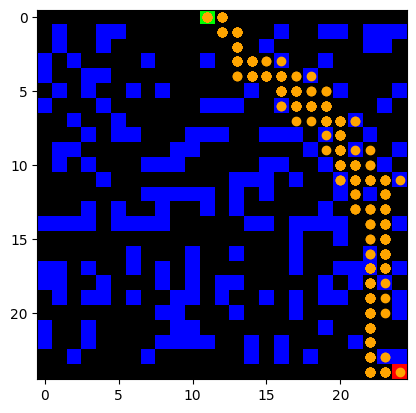}}}}{\tiny{94614}}
		\stackunder[5pt]{\subfloat{{\includegraphics[width=0.04\textwidth]{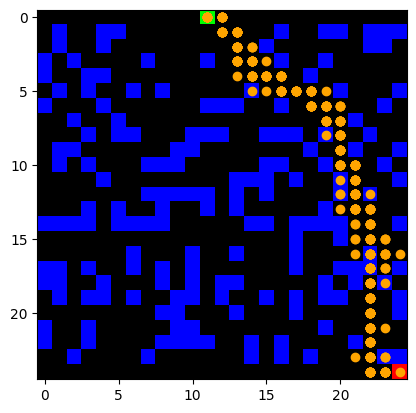}}}}{\tiny{78343}}
		\stackunder[5pt]{\subfloat{{\includegraphics[width=0.04\textwidth]{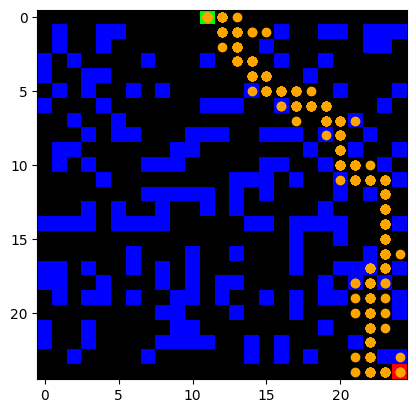}}}}{\tiny{05945}}
		\stackunder[5pt]{\subfloat{{\includegraphics[width=0.04\textwidth]{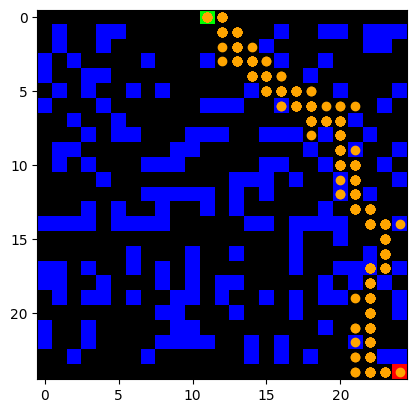}}}}{\tiny{93562}}
		\stackunder[5pt]{\subfloat{{\includegraphics[width=0.04\textwidth]{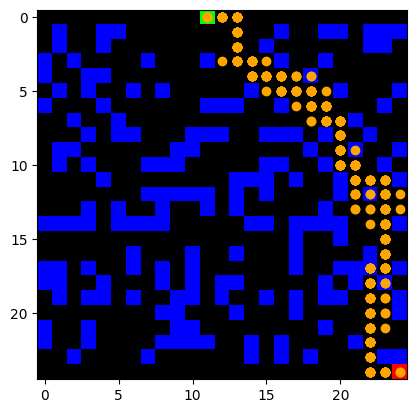}}}}{\tiny{92684}}
		\stackunder[5pt]{\subfloat{{\includegraphics[width=0.04\textwidth]{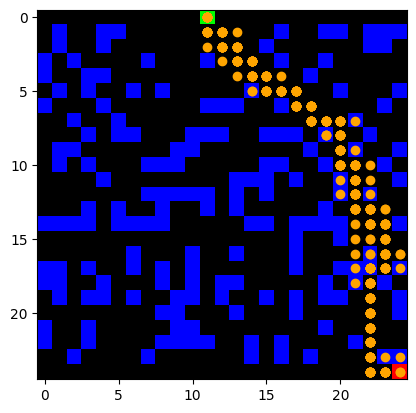}}}}{\tiny{15632}}
		\stackunder[5pt]{\subfloat{{\includegraphics[width=0.04\textwidth]{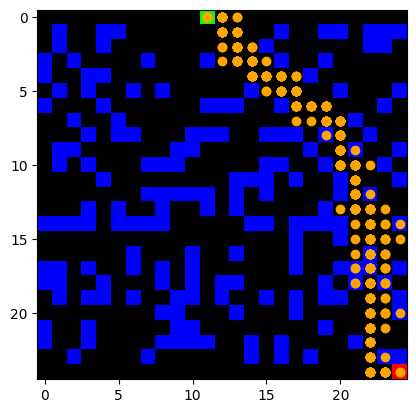}}}}{\tiny{71868}}
		\stackunder[5pt]{\subfloat{{\includegraphics[width=0.04\textwidth]{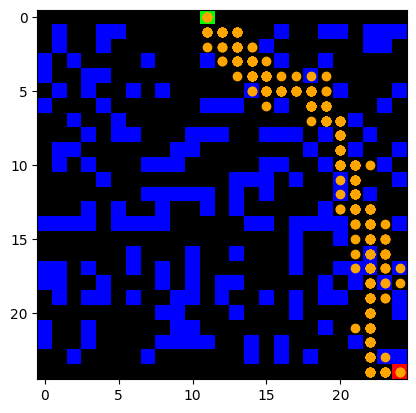}}}}{\tiny{73066}}
		\label{fig:gridnav_la2c}
	}{\tiny{Policy visualisations for LA2C in the GridNav environment.}}
\stackunder[5pt]{
		\stackunder[5pt]{\subfloat{{\includegraphics[width=0.04\textwidth]{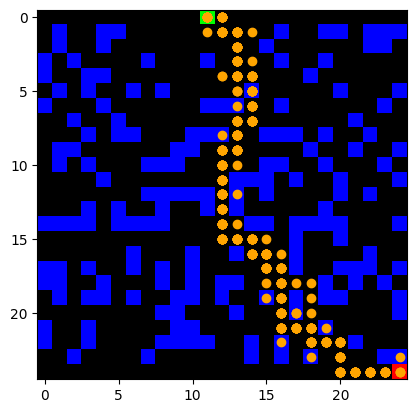}}}}{\tiny{97499}}
		\stackunder[5pt]{\subfloat{{\includegraphics[width=0.04\textwidth]{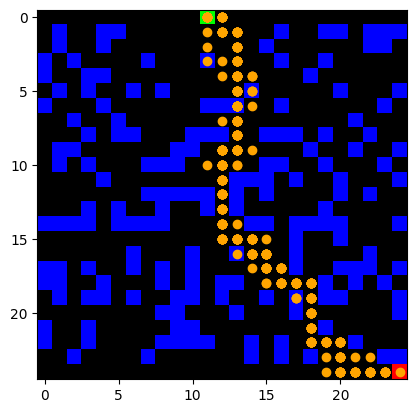}}}}{\tiny{21563}}
		\stackunder[5pt]{\subfloat{{\includegraphics[width=0.04\textwidth]{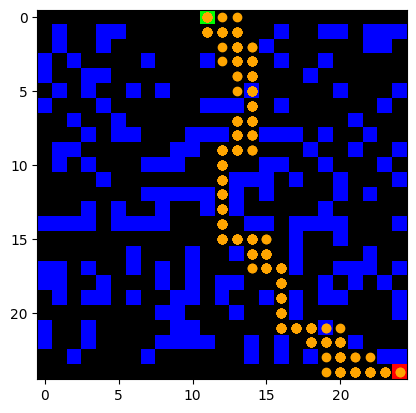}}}}{\tiny{04486}}
		\stackunder[5pt]{\subfloat{{\includegraphics[width=0.04\textwidth]{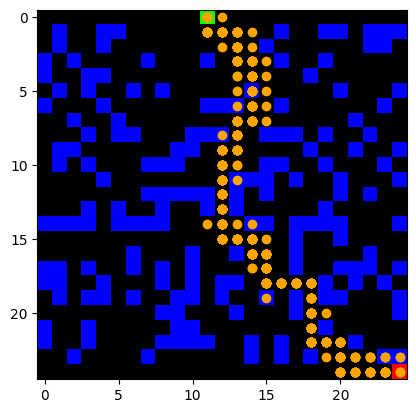}}}}{\tiny{70288}}
		\stackunder[5pt]{\subfloat{{\includegraphics[width=0.04\textwidth]{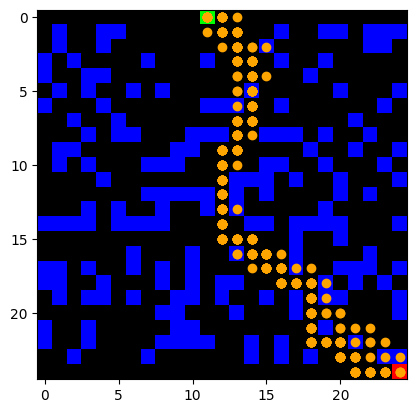}}}}{\tiny{99387}}
		\stackunder[5pt]{\subfloat{{\includegraphics[width=0.04\textwidth]{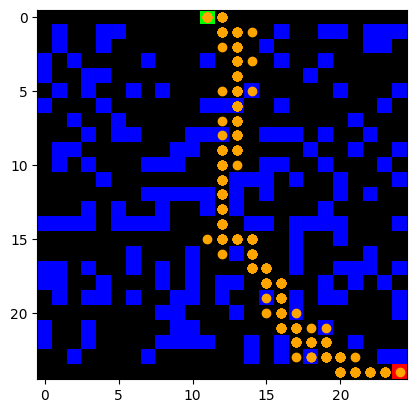}}}}{\tiny{90305}}
		\stackunder[5pt]{\subfloat{{\includegraphics[width=0.04\textwidth]{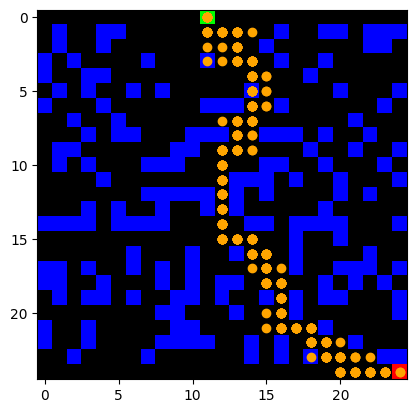}}}}{\tiny{07494}}
		\stackunder[5pt]{\subfloat{{\includegraphics[width=0.04\textwidth]{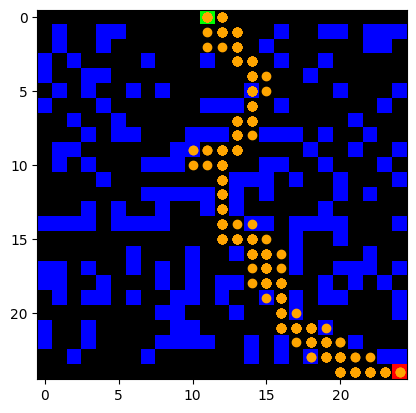}}}}{\tiny{47423}}
		\stackunder[5pt]{\subfloat{{\includegraphics[width=0.04\textwidth]{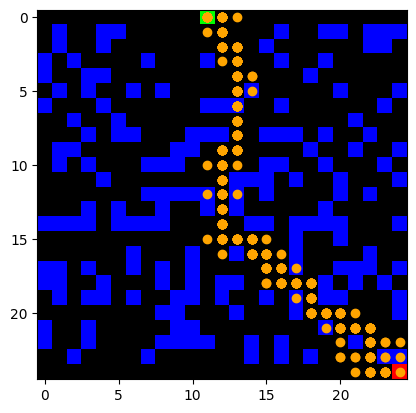}}}}{\tiny{43696}}
		\stackunder[5pt]{\subfloat{{\includegraphics[width=0.04\textwidth]{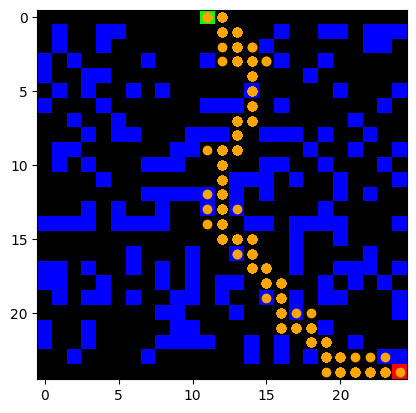}}}}{\tiny{74696}}
		\label{fig:gridnav_lppo}
	}{\tiny{Policy visualisations for LPPO in the GridNav environment.}}
\stackunder[5pt]{
		\stackunder[5pt]{\subfloat{{\includegraphics[width=0.04\textwidth]{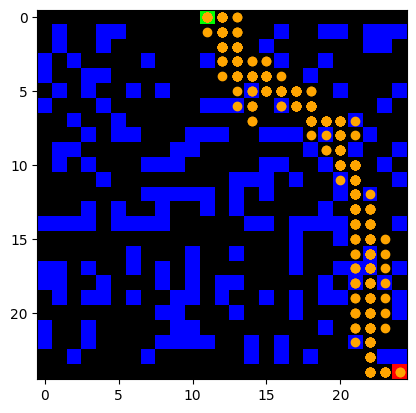}}}}{\tiny{82257}}
		\stackunder[5pt]{\subfloat{{\includegraphics[width=0.04\textwidth]{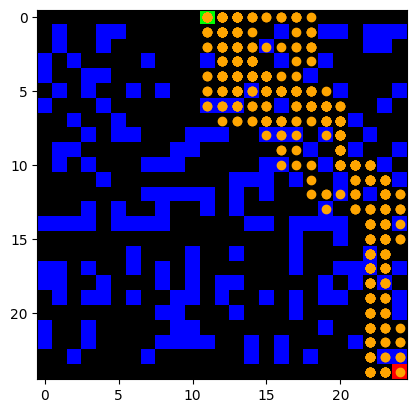}}}}{\tiny{23775}}
		\stackunder[5pt]{\subfloat{{\includegraphics[width=0.04\textwidth]{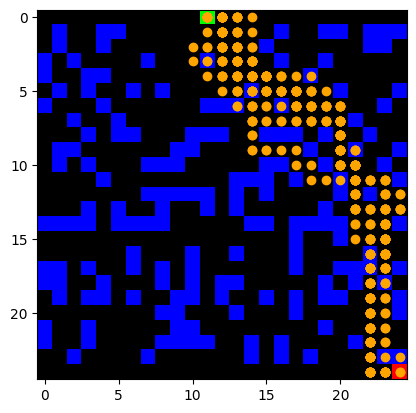}}}}{\tiny{98557}}
		\stackunder[5pt]{\subfloat{{\includegraphics[width=0.04\textwidth]{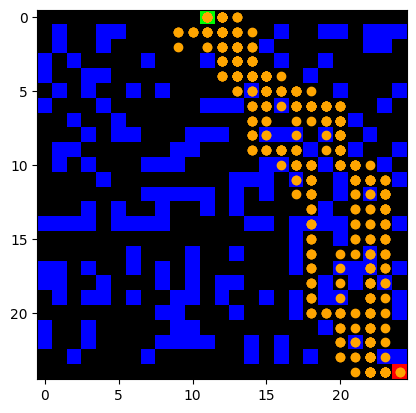}}}}{\tiny{99376}}
		\stackunder[5pt]{\subfloat{{\includegraphics[width=0.04\textwidth]{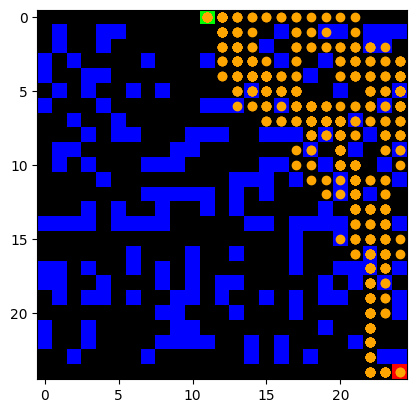}}}}{\tiny{74841}}
		\stackunder[5pt]{\subfloat{{\includegraphics[width=0.04\textwidth]{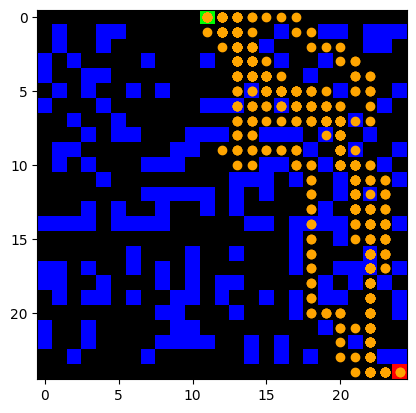}}}}{\tiny{43257}}
		\stackunder[5pt]{\subfloat{{\includegraphics[width=0.04\textwidth]{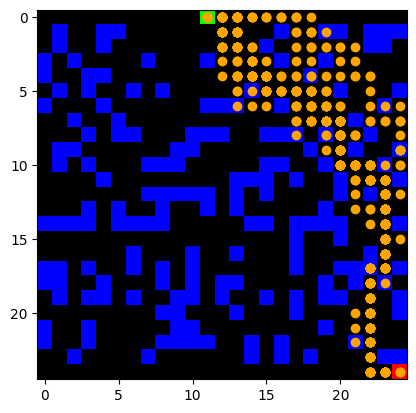}}}}{\tiny{17738}}
		\stackunder[5pt]{\subfloat{{\includegraphics[width=0.04\textwidth]{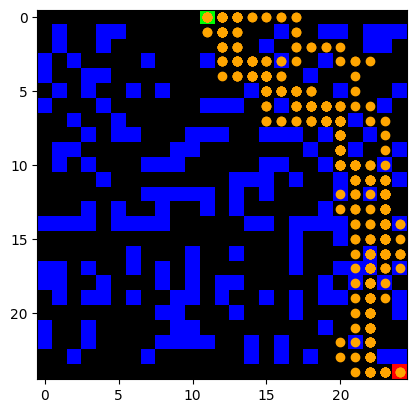}}}}{\tiny{69106}}
		\stackunder[5pt]{\subfloat{{\includegraphics[width=0.04\textwidth]{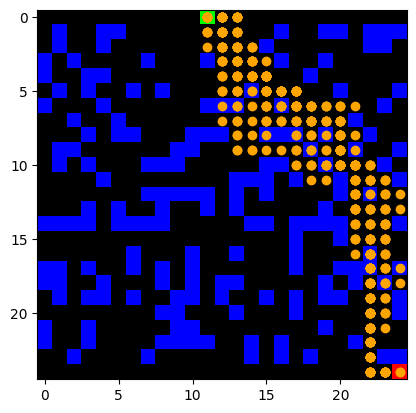}}}}{\tiny{90246}}
		\stackunder[5pt]{\subfloat{{\includegraphics[width=0.04\textwidth]{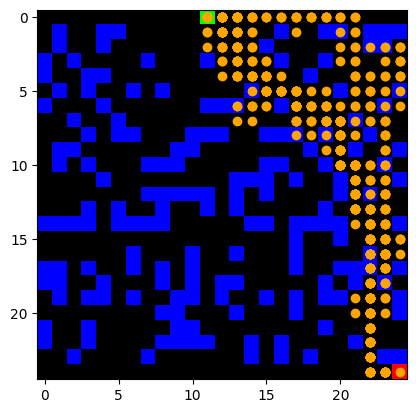}}}}{\tiny{68138}}
		\label{fig:gridnav_var_ac}
	}{\tiny{Policy visualisations for VaR\_AC in the GridNav environment.}}
\stackunder[5pt]{
		\stackunder[5pt]{\subfloat{{\includegraphics[width=0.04\textwidth]{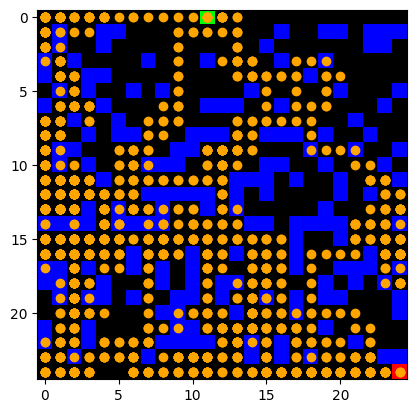}}}}{\tiny{00946}}
		\stackunder[5pt]{\subfloat{{\includegraphics[width=0.04\textwidth]{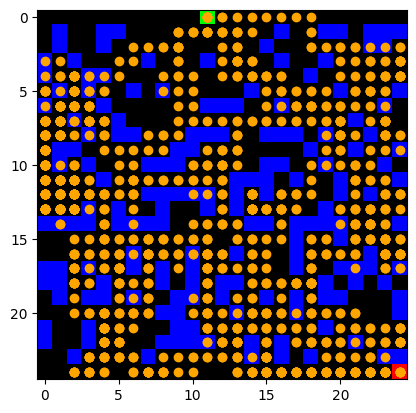}}}}{\tiny{49265}}
		\stackunder[5pt]{\subfloat{{\includegraphics[width=0.04\textwidth]{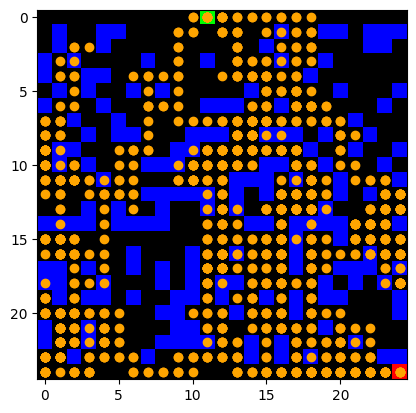}}}}{\tiny{41783}}
		\stackunder[5pt]{\subfloat{{\includegraphics[width=0.04\textwidth]{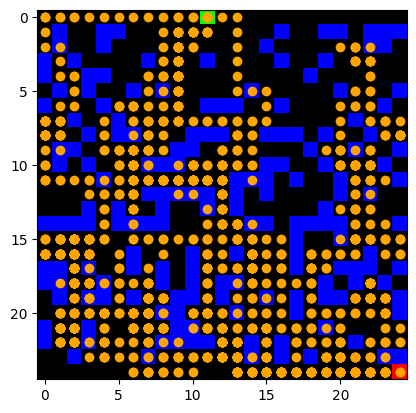}}}}{\tiny{30912}}
		\stackunder[5pt]{\subfloat{{\includegraphics[width=0.04\textwidth]{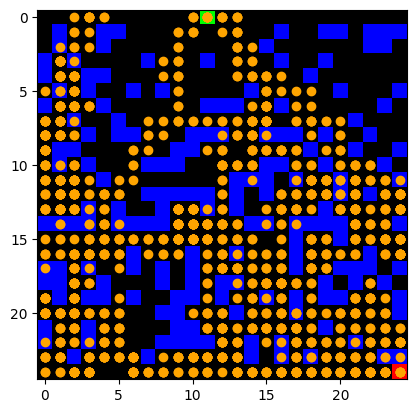}}}}{\tiny{27818}}
		\stackunder[5pt]{\subfloat{{\includegraphics[width=0.04\textwidth]{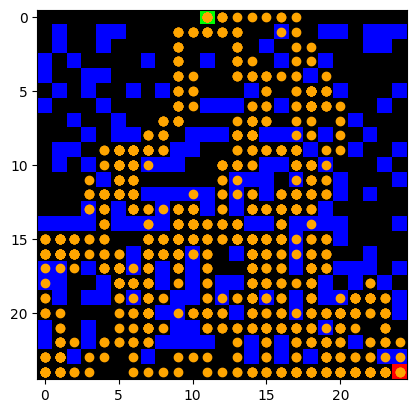}}}}{\tiny{93295}}
		\stackunder[5pt]{\subfloat{{\includegraphics[width=0.04\textwidth]{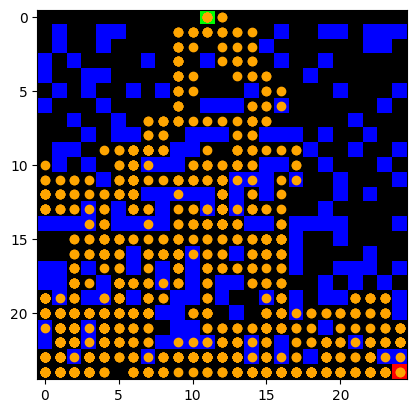}}}}{\tiny{47182}}
		\stackunder[5pt]{\subfloat{{\includegraphics[width=0.04\textwidth]{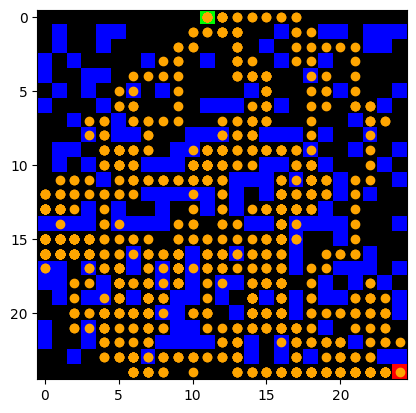}}}}{\tiny{57035}}
		\stackunder[5pt]{\subfloat{{\includegraphics[width=0.04\textwidth]{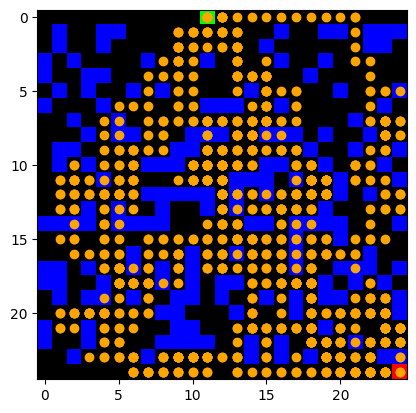}}}}{\tiny{07983}}
		\stackunder[5pt]{\subfloat{{\includegraphics[width=0.04\textwidth]{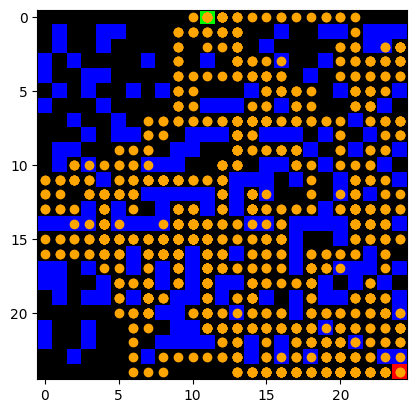}}}}{\tiny{82687}}
		\label{fig:gridnav_rcpo}
	}{\tiny{Policy visualisations for RCPO in the GridNav environment.}}
\stackunder[5pt]{
		\stackunder[5pt]{\subfloat{{\includegraphics[width=0.04\textwidth]{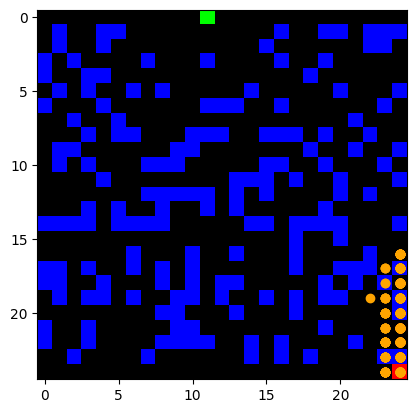}}}}{\tiny{91066}}
		\stackunder[5pt]{\subfloat{{\includegraphics[width=0.04\textwidth]{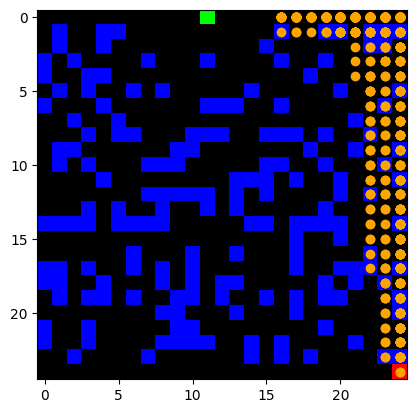}}}}{\tiny{07667}}
		\stackunder[5pt]{\subfloat{{\includegraphics[width=0.04\textwidth]{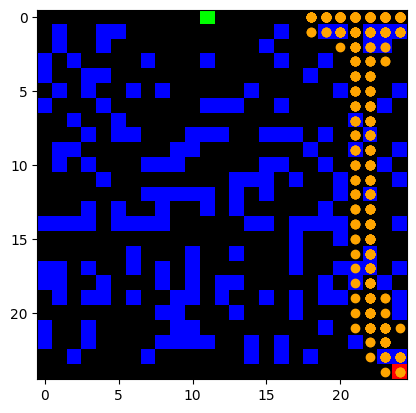}}}}{\tiny{64165}}
		\stackunder[5pt]{\subfloat{{\includegraphics[width=0.04\textwidth]{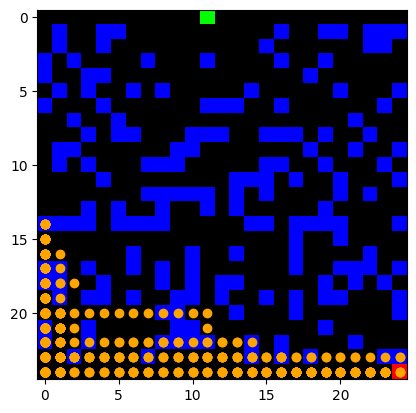}}}}{\tiny{45452}}
		\stackunder[5pt]{\subfloat{{\includegraphics[width=0.04\textwidth]{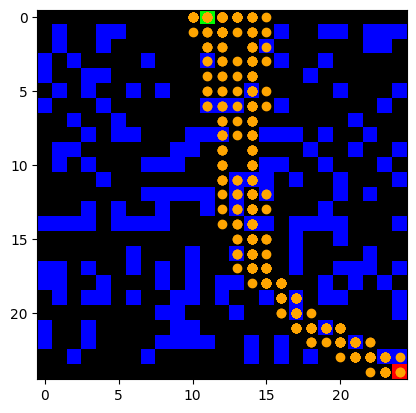}}}}{\tiny{80456}}
		\stackunder[5pt]{\subfloat{{\includegraphics[width=0.04\textwidth]{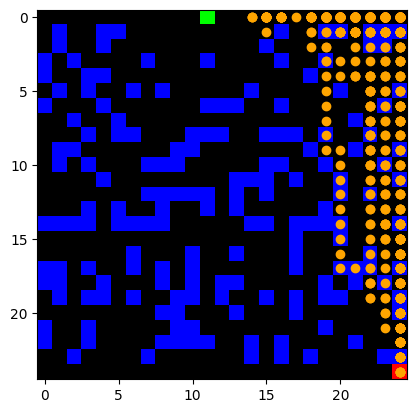}}}}{\tiny{17898}}
		\stackunder[5pt]{\subfloat{{\includegraphics[width=0.04\textwidth]{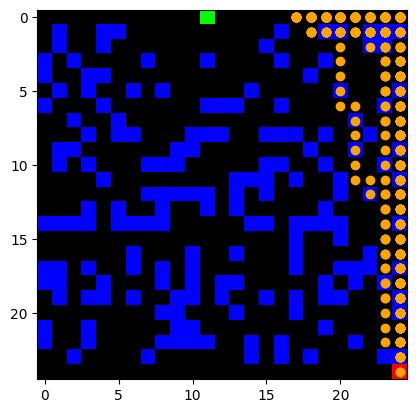}}}}{\tiny{83029}}
		\stackunder[5pt]{\subfloat{{\includegraphics[width=0.04\textwidth]{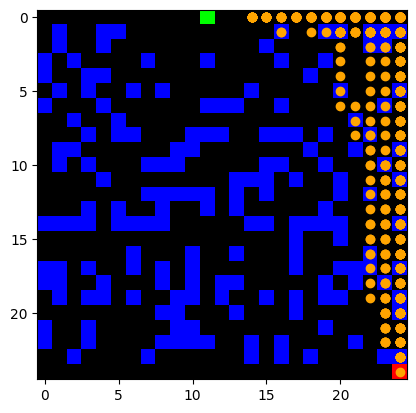}}}}{\tiny{15208}}
		\stackunder[5pt]{\subfloat{{\includegraphics[width=0.04\textwidth]{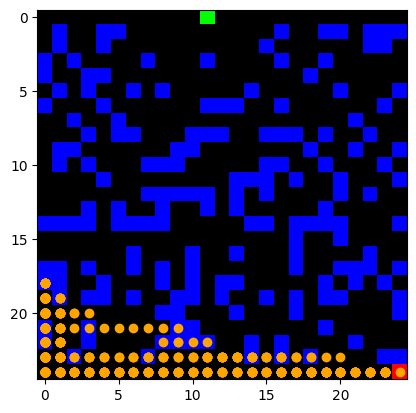}}}}{\tiny{35357}}
		\stackunder[5pt]{\subfloat{{\includegraphics[width=0.04\textwidth]{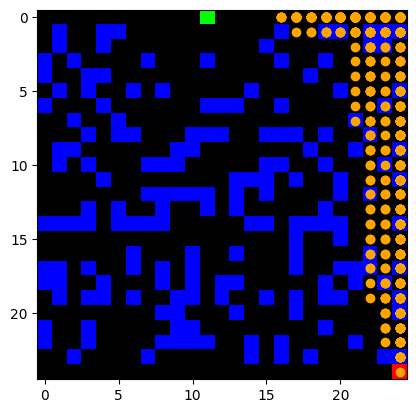}}}}{\tiny{21522}}
		\label{fig:gridnav_apropo}
	}{\tiny{Policy visualisations for AproPO in the GridNav environment.}}
\caption{This figure displays visualisations of the policies learned by the algorithms in the GridNav environment. Each plot corresponds to a policy learned by an algorithm, and is labelled with the random seed used in that run of the experiment. They were generated by running each policy ten times, and drawing the path of the algorithm through the environment (in orange).}
\label{fig:gridnav_policy_visualisations}
\end{figure}


\end{document}